\documentclass[11pt]{article}
\usepackage{fullpage}
\usepackage{tablefootnote}
\usepackage{hyperref,psfrag}

\usepackage{graphicx} 
\usepackage{subfigure}
\usepackage{caption}
\usepackage{natbib}

\usepackage{algorithm}
\usepackage{algorithmic}

\def\viz{{viz.,\ \/}}
\def\Rbb{{\mathbb R}}
\def\Fbb{{\mathbb F}}
\def\tha{{\mbox{\tiny th}}}
\def\Ebb{{\mathbb E}}

\def\order{\mathcal{O}}
\usepackage{hyperref}
\usepackage{color}
\usepackage{amsfonts, mathtools, amssymb, amsthm, mathrsfs, wasysym} 
\usepackage{graphicx, subfigure,epsfig,epsf, epstopdf,sublabel} 
\usepackage{tabularx}

\usepackage{tikz}
\usetikzlibrary{calc,shapes}
\usepackage{alphalph}

\usepackage{perpage} 
\MakePerPage{footnote} 

\DeclareMathOperator{\Var}{{Var}}

\DeclareMathOperator{\Tr}{{Tr}}
\DeclareMathOperator{\If}{{ If}}

\DeclareMathOperator{\Else}{{Else}}
\newcommand{\st}{{s.t. }}
\DeclareMathOperator{\sign}{{sign }}
\newcommand{\supp}{{supp}}

\usepackage{setspace}
\def\lnorm{{\lvert\!\lvert\!\lvert}}
\def\rnorm{{\rvert\!\rvert\!\rvert}}
\DeclarePairedDelimiter\gennorm{\lnorm}{\rnorm}
\newtheorem{lemma}{Lemma}
\newtheorem{theorem}{Theorem}
\newtheorem{proposition}{Proposition}
\newtheorem{corollary}{Corollary}

\usepackage{appendix}

\begin{document}

\title{Multi-Step Stochastic ADMM in High Dimensions:  \\
Applications to Sparse Optimization and Noisy Matrix Decomposition}
\author{
  Hanie Sedghi\footnote{University of Southern California, Email: hsedghi@usc.edu}
  \and
  Anima Anandkumar\footnote{University of California, Irvine, Email: a.anandkumar@uci.edu}
  \and
  Edmond Jonckheere \footnote{University of Southern California,  Email: jonckhee@usc.edu}
}

\maketitle







\begin{abstract}\noindent We propose an efficient ADMM method with guarantees for high-dimensional problems. We provide explicit bounds for the sparse optimization problem and the noisy   matrix decomposition problem. For sparse optimization, we establish that the modified ADMM method has   an optimal convergence rate of $\order(s\log d/T)$, where $s$ is the sparsity level, $d$ is the data dimension and $T$ is the number of steps. This matches with the minimax lower bounds for sparse estimation. For  matrix decomposition into sparse and low rank components, we provide the first guarantees for any online method, and prove a convergence rate of $\tilde{\order}((s+r)\beta^2(p) /T) + \order(1/p)$ for a $p\times p$ matrix, where $s$ is the sparsity level, $r$ is the rank and $\Theta(\sqrt{p})\leq \beta(p)\leq \Theta(p)$. Our guarantees match the minimax lower bound with respect to $s,r$ and $T$. In addition, we match the minimax lower bound with respect to the matrix dimension $p$, i.e. $\beta(p)=\Theta(\sqrt{p})$, for many important statistical models including  the independent noise model, the linear Bayesian network and the latent Gaussian graphical model under some conditions. Our ADMM method is based on epoch-based annealing and consists of inexpensive steps which involve projections on to simple norm balls. Experiments show that for both sparse optimization and matrix decomposition problems, our algorithm outperforms  the state-of-the-art  methods. In particular, we reach higher accuracy with same time complexity.
\end{abstract}

\paragraph{Keywords: }Stochastic ADMM, $\ell_1$ regularization, multi block ADMM, sparse+low rank decomposition, convergence rate, high dimensional regime.

\section{Introduction}
Stochastic optimization techniques have been extensively employed for
online machine learning on data which is uncertain, noisy or missing.  Typically it involves performing a large number of  inexpensive iterative updates,  making it scalable for large-scale learning. In contrast,  traditional batch-based techniques involve far more expensive operations for each update step. Stochastic optimization has been analyzed in a number of recent works, e.g., \citep{shalev2011online,boyd2011distributed,AgarwalNW12,NIPS2013_5034,NIPS2013_4937,NIPS2013_4938}.

The alternating direction method of multipliers
(ADMM)  is a popular method for online and distributed optimization on a large scale~\citep{boyd2011distributed}, and is employed in many applications, e.g.,~\citep{wahlberg2012admm},~\citep{esser2010general},~\citep{mota2012distributed}.
It can be viewed as a decomposition procedure where solutions to sub-problems are found locally, and coordinated via constraints to find the global solution. Specifically, it is a form of augmented Lagrangian method which applies partial updates to the dual variables. ADMM is often applied to solve regularized problems, where the function optimization and regularization can be carried out locally, and then coordinated globally via constraints.
Regularized optimization problems are especially relevant in the high dimensional regime since regularization is a natural mechanism to overcome ill-posedness and to encourage parsimony in the optimal solution, e.g., sparsity and low rank. Due to the efficiency of ADMM in solving regularized problems, we employ it in this paper.

In this paper, we design a modified version of the stochastic ADMM method  for high-dimensional problems. We first analyze the simple setting, where the optimization problem consists of a loss function and a single regularizer, and then extend to  the multi-block setting with multiple regularizers and multiple variables. For illustrative purposes, for the first setting, we consider the sparse optimization problem and for the second setting,  the matrix decomposition problem respectively. Note that our results easily extend to other settings, e.g., those in~\citet{negahban2012unified}.

We consider a simple modification to the (inexact) stochastic ADMM method~\citep{ouyang2013stochastic} by incorporating multiple steps or epochs, which can be viewed as a form of annealing. We establish that this simple modification has huge implications in achieving tight convergence rates as the dimensions of the problem instances scale.
In each iteration of the method, we employ  projections on to certain norm balls of appropriate radii, and we decrease the radii in epochs over time. The idea of annealing was first introduced by~\citet{AgarwalNW12} for dual averaging. Yet, that method cannot be extended for multivariable cases.

 For instance, for the sparse optimization problem, we constrain the optimal solution at each step to be within an $\ell_1$-norm ball of the initial estimate, obtained at the beginning of each epoch. At the end of the epoch, an average is computed and passed on to the next epoch as its initial estimate. Note that the $\ell_1$ projection can be solved efficiently in linear time, and can also be parallelized easily~\citep{duchi2008efficient}.

For matrix decomposition with a general loss function, the ADMM method requires  multiple blocks for updating the low rank and sparse components. We apply the same principle and project  the sparse and low rank estimates on to $\ell_1$ and nuclear norm balls, and these projections can be computed efficiently.

\paragraph{ Theoretical implications: } The above simple   modifications to ADMM have huge implications for high-dimensional problems. For sparse optimization, our convergence rate is  $\mathcal{O}(\frac{s\log d}{T})$, for $s$-sparse problems in $d$ dimensions in $T$ steps. Our bound has the best of both worlds: efficient high-dimensional scaling (as $\log d$) and efficient convergence rate (as $\frac{1}{T}$). This also matches the minimax lower bound for the linear model and square loss function~\citep{raskutti2011minimax}, which implies that our guarantee is unimprovable by any (batch or online) algorithm (up to constant factors).  For matrix decomposition, our convergence rate is $\mathcal{O}( (s+r)\beta^2(p) \log p /T))+\mathcal{O}(\max\lbrace s+r,p \rbrace/p^2)$ for a $p\times p$ input matrix in $T$ steps, where the sparse part has $s$ non-zero entries and low rank part has rank $r$. For many natural noise models (e.g. independent noise, linear Bayesian networks), $\beta^2(p)=p$, and the resulting convergence rate is minimax-optimal. Note that our bound is not only on the reconstruction error, but also on the error in recovering the sparse and low rank components.  These are the first convergence guarantees for online matrix decomposition in high dimensions. Moreover, our convergence rate holds {\em with high probability} when noisy samples are input, in contrast to expected convergence rate, typically analyzed in literature.
 See Table~\ref{table:vector},~\ref{table:matrix} for comparison of this work with related frameworks.

\paragraph{ Practical implications: } The proposed algorithms 
provide significantly faster convergence in high dimension and better robustness to noise. For  sparse optimization, our method has significantly better accuracy compared to the stochastic ADMM method and { better performance than} RADAR, based on multi-step dual averaging~\citep{AgarwalNW12}. For matrix decomposition,  we compare our method  with the state-of-art inexact ALM~\citep{lin2010augmented} method. While both methods have similar reconstruction performance, our method has significantly better accuracy in recovering the sparse and low rank components.

\paragraph{ Related Work: ADMM: }
Existing online ADMM-based methods lack high-dimensional guarantees.
They scale poorly with the data dimension (as $\order(d^2)$), and also have slow convergence for general problems (as  $\order(\frac{1}{\sqrt{T}})$). Under strong convexity, the convergence rate can be improved to  $\order(\frac{1}{T})$ but only in {\em expectation}: such analyses ignore the per sample error and consider only the expected convergence rate (see Table~\ref{table:vector}). In contrast, our bounds hold with high probability. Some stochastic ADMM methods, \citet{goldstein2012fast},~\citet{deng2012global} and \citet{linearconv}, provide faster rates for stochastic ADMM, than the rate noted in Table~\ref{table:vector}. However, they require strong conditions which are not satisfied for the optimization problems considered here, e.g.,~\citet{goldstein2012fast} require both the loss function and the regularizer to be strongly convex.

It is also worth mentioning that our method provides error contraction, i.e., we can show error shrinkage after specific number of iterations whereas no other ADMM based method can guarantee this.

\paragraph{ Related Work: Sparse Optimization: }For the sparse optimization problem, $\ell_1$ regularization is employed and the underlying true parameter is assumed to be sparse. This is a well-studied problem in a number of works (for details, refer to~\citep{AgarwalNW12}). \citet{AgarwalNW12} propose an efficient online method based on {annealing} dual averaging, which achieves the same optimal rates as the ones derived in this paper. The main difference is that our ADMM method is capable of solving the problem for multiple random variables and multiple conditions while their method cannot incorporate these extensions.

\paragraph{ Related Work: Matrix Decomposition: }To the best of our knowledge, online guarantees for high-dimensional   matrix decomposition  have not been provided before. \citet{wang2013solving} propose a multi-block ADMM method for the matrix decomposition problem  but only provide convergence rate analysis in expectation and it has  poor high dimensional scaling (as $\order(p^4)$ for a $p\times p$ matrix) without further modifications.  Note that they only provide convergence rate on difference between loss function and optimal loss, whereas we provide the convergence rate on individual errors  of the sparse and low rank components $\Vert \bar{S}(T)-S^*\Vert_{\mathbb{F}}^2, \Vert \bar{L}(T)-L^*\Vert_{\mathbb{F}}^2$.
See Table~\ref{table:matrix} for comparison of guarantees for matrix decomposition problem.

We compare our guarantees in the online setting with the batch guarantees of~\citet{agarwal2012noisy}. Although other batch analyses exist for matrix decomposition, e.g.,~\citep{chandrasekaran2011rank,candes2011robust,hsu2011robust}, they require stronger assumptions based on incoherence conditions for recovery, which we do not impose here. The batch analysis by~\citet{agarwal2012noisy} requires fairly mild condition such as ``diffusivity'' of the unknown low rank matrix. Moreover, the convergence rate for the batch setting by~\citet{agarwal2012noisy} achieves the minimax lower bound (under the independent noise model), and is thus, optimal, up to constant factors.

Note that when only the weak diffusivity condition is assumed, the matrix decomposition problem suffers from an approximation error, i.e. an error even in the noiseless setting. Both the minimax rate and the batch rates in~\citep{agarwal2012noisy} have an approximation error. However, our approximation error is worse by a factor of $p$, although it is still decaying with respect to $p$.

\paragraph{ Overview of Proof Techniques: } {Note that in the main text, we provide guarantees for fixed-epoch length. However, if we use variable-length epoch size we can get a $\log d$ improvement in the convergence rate.} Our proof involves the following high-level steps to establish the convergence rate: (1) deriving convergence rate for the modified ADMM method (with variable-length epoch size) at the end of one epoch, where the ADMM estimate is compared with the batch estimate, (2) comparing the batch estimate with the true parameter, and then combining the two steps, and analyzing over multiple epochs to obtain the final bound. We can show that with the proposed parameter setting and varying epoch size, error can be halved by the end of each epoch. For the matrix decomposition problem, additional care is needed to ensure that the errors in estimating the sparse and low rank parts can be decoupled. This is especially non-trivial in our setting since we utilize multiple variables in different blocks which are updated in each iteration. Our careful analysis enables us to establish the first results for online matrix decomposition in the high-dimensional setting which match the batch guarantees for many interesting statistical models. (3) Next, we analyze how guarantees change for fixed epoch length. We prove that although the error halving stops after some iterations but the error does not increase noticeably to invalidate the analysis.


\begin{table*}[t]
\begin{center}
\begin{tabular}{|c| c| c| c|}
\hline
Method & Assumptions & convergence \\ \hline
ST-ADMM~\citep{ouyang2013stochastic}&  L, convexity & $\mathcal{O}({d^2}/{\sqrt{T}})$ \\ \hline
ST-ADMM~\citep{ouyang2013stochastic}&  SC, E & $\mathcal{O}({d^2\log T}/T)$ \\ \hline
BADMM~\citep{BADMM} & convexity, E &$\mathcal{O}(d^2/\sqrt{T})$\\ \hline
RADAR~\citep{AgarwalNW12} & LSC, LL  &$\mathcal{O}(s \log d /T)$ \\
\hline
REASON 1 (this paper)& LSC, LL &$\mathcal{O}( s \log d /T)$ \\
\hline
Minimax bound~\citep{raskutti2011minimax} & Eigenvalue conditions & $\mathcal{O}(s \log d /T)$\\
\hline
\end{tabular}
\caption{\em Comparison of online sparse optimization methods under  $s$ sparsity level for the optimal paramter, $d$ dimensional space, and $T$ number of iterations.\\
SC = Strong Convexity, LSC = Local Strong Convexity, LL = Local Lipschitz, L = Lipschitz property, E = in Expectation \\The last row provides minimax-optimal rate on error for any method. The results hold with high probability unless otherwise mentioned.}\label{table:vector}
\end{center}
\end{table*}

\begin{table*}[t]
\label{table:compare2}
\begin{center}
 \begin{tabular}{|c| c| c| c|}
\hline
Method & Assumptions & Convergence rate \\ \hline
\begin{tabular}{c} Multi-block-ADMM\\\citep{wang2013solving}\end{tabular}&  L, SC, E & $\mathcal{O}({p^4} /{T})$\\ \hline
\begin{tabular}{c} Batch method\\\citep{agarwal2012noisy}\end{tabular} & LL, LSC, DF &$\mathcal{O}((s \log p+ rp) /T)+\mathcal{O}(s/p^2)$\\
\hline
REASON 2 (this paper)& LSC, LL, DF &$\mathcal{O}( (s+r)\beta^2(p) \log p /T))+\mathcal{O}(\max\lbrace s+r,p \rbrace/p^2)$\\
\hline
\begin{tabular}{c} Minimax bound\\\citep{agarwal2012noisy}\end{tabular} & $\ell_2$, IN, DF & $\mathcal{O}((s \log p+ rp) /T)+\mathcal{O}(s/p^2)$\\
\hline
\end{tabular}
\end{center}
 \caption[]{\em Comparison of   optimization methods for sparse+low rank matrix decomposition for a $p\times p$ matrix under  $s$ sparsity level and $r$ rank matrices  and $T$ is the number of samples.\\
 SC = Strong Convexity, LSC = Local Strong Convexity, LL = Local Lipschitz, L = Lipschitz for loss function, IN = Independent noise model, DF = diffuse low rank matrix under the optimal parameter. $ \beta(p)=\Omega(\sqrt{p}), \mathcal{O}(p)$ and its value depends the model. The last row provides minimax-optimal rate on error for any method under the independent noise model. The results hold with high probability unless otherwise mentioned.\\
For Multi-block-ADMM~\citep{wang2013solving} the convergence rate is on the difference of loss function from optimal loss, for the rest of works in the table, the convergence rate is on $\Vert \bar{S}(T)-S^*\Vert_{\mathbb{F}}^2+\Vert \bar{L}(T)-L^*\Vert_{\mathbb{F}}^2$.}\label{table:matrix}
\end{table*}


\subsection{Notation}
In the sequel, we use lower case letter for vectors and upper case letter for matrices.
$\Vert x \Vert_1$, $\Vert x \Vert_2$ refer to $\ell_1, \ell_2$ vector norms respectively. The term $\Vert X \Vert_*$ stands for nuclear norm of $X$. In addition, $\Vert X \Vert_2$, $\Vert X \Vert_\mathbb{F}$ denote spectral and Frobenius norms respectively. $\gennorm{X}_\infty$ stands for induced infinity norm. We use vectorized $\ell_1,\ell_\infty$ norm for matrices. i.e., $\Vert X \Vert_1=\underset{i,j}{\sum}~\vert X_{ij} \vert$, $\Vert X \Vert_\infty=\underset{i,j}{\max}~\vert X_{ij} \vert$. 
\section{Problem Formulation} \label{sec:ProblemFormulation}

Consider the optimization problem
\begin{equation}
\theta^* \in \underset{\theta \in \Omega}{\arg \min}~ {\mathbb{E}[f(\theta,x)]},
\end{equation}
where $x\in\mathbb{X}$ is a random variable  and $f:\Omega \times \mathbb{X}\rightarrow \mathbb{R}$ is a given loss function. Since only samples are available, we employ the  empirical estimate of $\widehat{f}(\theta):=1/n\sum_{i\in [n]} f(\theta,x_i)$ in the optimization. For high-dimensional $\theta$, we need to impose a regularization $\mathcal{R}(\cdot)$, and
\begin{equation}\label{eqn:batch}\widehat{\theta}:= {\arg \min}\lbrace \widehat{f}(\theta)+\lambda_n\mathcal{R}( {\theta}) \rbrace,\end{equation} is the batch optimal solution.

For concreteness we focus on the sparse optimization and the matrix decomposition problem.
It is straightforward to generalize our results to other settings, say~\citep{negahban2012unified}. For the first case,   the optimum $\theta^*$ is a $s$-sparse solution, and the regularizer is the $\ell_1$ norm, and we have
\begin{align}
\label{eq:sparsebatch}
\widehat{\theta}= &{\arg \min}\,\lbrace \widehat{f}(\theta)+\lambda_n\Vert {\theta}\Vert_1 \rbrace
 \end{align}

We also consider the matrix decomposition problem, where the underlying matrix $M^*=S^*+L^*$ is a combination of a sparse matrix $S^*$ and a low rank matrix $L^*$. Here the unknown parameters are $[S^*;L^*]$, and the regularization $\mathcal{R}(\cdot)$ is a combination of the $\ell_1$ norm, and the nuclear norm $\|\cdot\|_*$ on the sparse and low rank parts respectively. The corresponding batch estimate is given by
\begin{align} \label{eq:latent}
&\widehat{M}:={\arg \min}{\lbrace {f}(M)+\lambda_n\Vert {S}\Vert_1+\mu_n \Vert L \Vert_* \rbrace} \\ \nonumber
&\st \quad M=S+L,\quad \Vert L \Vert_\infty \leq \frac{\alpha}{p}.
\end{align}
The $\|\cdot\|_\infty$ constraint on the low rank matrix will be discussed in detail later, and it is assumed that the true matrix $L^*$ satisfies this condition.

We consider an online version of the optimization problem where we optimize the program in \eqref{eqn:batch} under each data sample instead of using the empirical estimate of $f$ for an entire batch. We consider an inexact version of the online ADMM method, where  we compute the gradient $\hat{g}_i\in \nabla f(\theta, x_i)$ at each step and employ it for optimization. In addition, we consider an epoch based setting, where we constrain the optimal solution to be close to the initial estimate at the beginning of the epoch. This can be viewed as a form of regularization and we constrain more (i.e. constrain the solution to be closer) as time goes by, since we expect to have a sharper estimate of the optimal solution. This limits the search space for the optimal solution and allows us to provide tight guarantees in the high-dimensional regime.

We first consider the simple case of sparse setting in \eqref{eq:sparsebatch}, where the ADMM has double  blocks,and then extend it to the sparse+low rank setting of \eqref{eq:latent}, which involves multi-block ADMM.



%
\section{$\ell_1$ Regularized Stochastic Optimization} \label{sec:sparse}
We consider the optimization problem $\theta^* \in {\arg\min}~ \mathbb{E}[f(\theta,x)]$, ${\theta \in \Omega}$ where $\theta^*$ is a sparse vector. The loss function $f(\theta, x_k)$ is a function of a parameter $\theta \in \mathbb{R}^d$ and samples $x_i$. In stochastic setting, we do not have access to $ \mathbb{E}[f(\theta,x)]$ nor to its subgradients. In each iteration we have access to one noisy sample. In order to impose sparsity we use regularization. Thus we solve a sequence \begin{align} \label{eq:sparse}
{\theta}_k \in \underset{\theta \in \Omega'}{\arg\min}~ f(\theta,x_k)+\lambda \Vert \theta \Vert_1 ,\quad  \Omega' \subset \Omega,
\end{align} 
where the regularization parameter $\lambda >0$ and the constraint sets $\Omega'$ change from epoch to epoch.
\subsection{Epoch-based Online ADMM Algorithm}\label{sec:sparse-algo}
\begin{algorithm}[t]
\caption{Regularized Epoch-based Admm for Stochastic Optimization in high-dimensioN 1 (REASON 1)}
\label{our_method}
\begin{algorithmic}
\STATE \textbf{Input} $\rho,\rho_x > 0$, epoch length $T_0$
, initial prox center $\tilde{\theta}_1$, initial radius $R_1$, regularization parameter $\lbrace{\lambda_i}\rbrace_{i=1}^{k_T}$.\\
\STATE \textbf{Define} $Shrink_\kappa(\cdot)$ shrinkage operator in \eqref{eq:shrink}
\FOR {Each epoch $i=1,2,...,k_T$}
\STATE Initialize $\theta_0=y_0=\tilde{\theta}_i$\\
\FOR {Each iteration $k=0,1,...,T_0-1$} 
\STATE
\begin{align}
\label{eq:theta_update}
&\theta_{k+1} =\underset{\Vert \theta-\tilde{\theta}_i\Vert_1 \leq R_i}{\arg\min}\lbrace
\langle \nabla {f}(\theta_k),\theta -\theta_k \rangle-\langle z_k,\theta-y_k\rangle+\frac{\rho}{2} \Vert \theta-y_k \Vert_2^2 +\frac{\rho_x}{2} \Vert \theta-\theta_k \Vert_2^2 \rbrace\\ \nonumber
&y_{k+1}=\text{Shrink}_{\lambda_i/\rho}(\theta_{k+1}-\frac{z_k}{\rho})\\ \nonumber
&z_{k+1}=z_k -\tau(\theta_{k+1}-y_{k+1})
\end{align}
\ENDFOR
\STATE \textbf{Return} : $\overline{\theta}(T_i):=\frac{1}{T}\sum_{k=0}^{T_0-1} \theta_k$ for epoch $i$ and $\tilde{\theta}_{i+1}=\overline{\theta}(T_i)$. 
\STATE \textbf{Update} : $R_{i+1}^2=R_i^2/2$.
\ENDFOR
\end{algorithmic}
\end{algorithm}
We now describe the modified inexact ADMM algorithm for the sparse optimization problem in \eqref{eq:sparse}, and refer to it as REASON 1, see Algorithm~\ref{our_method}. We consider epochs of length $T_0$, and in each epoch $i$, we constrain the optimal solution to be within an $\ell_1$ ball with radius $R_i$ centered around $\tilde{\theta}_i$, which is the initial estimate of $\theta^*$ at the start of the epoch.
The $\theta$-update is given by
\begin{align*}
\theta_{k+1} =\underset{\Vert \theta-\tilde{\theta}_i\Vert_1^2 \leq R_i^2}{\arg\min}\lbrace
\langle \nabla {f}(\theta_k),\theta -\theta_k \rangle-\langle z_k,\theta-y_k\rangle+\frac{\rho}{2} \Vert \theta-y_k \Vert_2^2 +\frac{\rho_x}{2} \Vert \theta-\theta_k \Vert_2^2 \rbrace
\end{align*} Note that this is an inexact update since we employ the gradient $\nabla f(\cdot)$ rather than optimize directly on the loss function $f(\cdot)$ which is expensive. The above program can be solved efficiently since it is a projection on to the $\ell_1$ ball, whose complexity is linear in the sparsity level of the gradient, when performed serially, and $\mathcal{O}(\log d)$ when performed in parallel using $d$ processors~\citep{duchi2008efficient}. For details of $\theta$-update implementation see Appendix~\ref{sec:implement1}.

For the regularizer, we introduce the variable $y$, and the $y$-update is
\begin{align*}
y_{k+1}=
{\arg\min} \lbrace \lambda_i \Vert y_k \Vert_1-\langle z_k,\theta_{k+1}-y\rangle+\frac{\rho}{2} \Vert \theta_{k+1}-y \Vert_2^2  \rbrace
\end{align*}
This update can be simplified to the form given in  REASON 1, where $\text{Shrink}_\kappa(\cdot)$ is the soft-thresholding or shrinkage function \citep{boyd2011distributed}.
\begin{align}
\label{eq:shrink}
\text{Shrink}_\kappa(a)=(a-\kappa)_+ -(-a-\kappa)_+
\end{align}Thus, each step in the update is extremely simple to implement.
\noindent When an epoch is complete, we carry over  the average $\overline{\theta}(T_i)$ as the next epoch center and reset the other variables.

\subsection{High-dimensional Guarantees}
We now provide convergence guarantees for the proposed method under the following assumptions.

\paragraph{Assumption A1: Local strong convexity (LSC)}: The function $f : S \rightarrow \mathbb{R}$ satisfies an $R$-local form
of strong convexity (LSC) if there is a non-negative constant $\gamma = \gamma(R)$ such that
\begin{align*}
f({\theta}_1) \geq f(\theta_2) + \langle \nabla f(\theta_2), {\theta}_1-\theta_2\rangle +\frac{\gamma}{2}\Vert \theta_2 - {\theta}_1  \Vert_2^2.
\end{align*}
for any $\theta_1, {\theta} _2\in S$ with $\Vert {\theta}_1\Vert_1 \leq R$ and $\Vert {\theta}_2 \Vert_1  \leq R$.

%
Note that the notion of strong convexity leads to faster convergence rates in general. Intuitively, strong convexity is a measure of curvature of the loss function, which relates the reduction in the loss function to closeness in the variable domain. Assuming that the function $f$ is twice continuously differentiable, it is strongly convex, if and only if its Hessian is positive semi-definite, for all feasible $\theta$. However, in the high-dimensional regime, where there are fewer samples than data dimension,  the Hessian matrix is often singular and we do not have global strong convexity. A solution is to impose local strong convexity which allows us to provide guarantees for high dimensional problems. The notion of local strong convexity has been exploited before in a number of works on high dimensional analysis, e.g.,~\citep{negahban2012unified,agarwal2012noisy,AgarwalNW12}.


\paragraph{Assumption A2: Sub-Gaussian stochastic gradients:}
Let $e_k(\theta):=\nabla f(\theta , x_k) - \Ebb[\nabla f(\theta, x_k)]$. For all $\theta$ such that $\Vert \theta-\theta^* \Vert_1 \leq R$, there is a constant $\sigma=\sigma(R)$ such that for all $k>0$,
\begin{align*}
\mathbb{E}[\exp(\Vert e_k(\theta)\Vert_\infty^2)/\sigma^2] \leq \exp(1)
\end{align*}
\paragraph{Remark: }The bound holds with $\sigma=\mathcal{O}(\sqrt{\log d}) $ whenever each component of the error vector has sub-Gaussian tails~\citep{AgarwalNW12}.

\paragraph{Assumption A3: Local Lipschitz condition:} For each $R > 0 $, there is a constant $G = G(R)$ such
that
\begin{equation}
\label{eq:LocalLip}
|f(\theta_1) - f({\theta_2})| \leq G\Vert \theta_1 -\theta_2 \Vert_1
\end{equation}
for all $\theta_1 , \theta_2  \in S$
 such that  $\Vert \theta -{\theta}^* \Vert_1 \leq R$ and $\Vert \theta_1 -{\theta}^* \Vert_1 \leq R$.

{We choose the algorithm parameters as below where $\lambda_i$ is the regularization for $\ell_1$ term, $\rho$ and $\rho_x$ are penalties in $\theta$-update as in~\eqref{eq:theta_update} and $\tau$ is the step size for the dual update.}
\begin{align}
\label{eq:T0}
&\lambda_i^2=\frac{{\gamma}}{s\sqrt{T_0}}\sqrt{R_i^2 \log d+\frac{ G^2 R_i^2}{T_0}+\sigma_i^2R_i^2w_i^2}\\ \nonumber
&\rho \propto \frac{\sqrt{T_0 \log d}}{R_i},\quad \rho_x > 0, \quad \tau=\rho.
\end{align}


\begin{theorem}
\label{thm:sparse_const_epoch}
Under Assumptions $A1-A3$, $\lambda _i$ as in~\eqref{eq:T0} 
, we use fixed epoch length $T_0=T \log d/{k_T}$ where $T$ is the total number of iterations. Assuming this setting ensures $T_0 =\mathcal{O} (\log d)$, for any $\theta^*$ with sparsity $s$, we have
\begin{align*}
\Vert \bar{\theta}_T-\theta^*\Vert_2^2=\mathcal{O}\left(s~\frac{\log d+(w^2+\log(k_T/{\log d}))\sigma^2}{T}~\frac{\log d}{k_T} \right),
\end{align*}
with probability at least $1-3\exp(w^2/12)$, where $\bar{\theta}_T$ is the average for the last epoch for a total of $T$ iterations and
\begin{align*}
k_T=\log_2 \frac{\gamma^2R_1^2 T}{s^2( \log d+12\sigma^2w^2)}.
\end{align*}
\end{theorem}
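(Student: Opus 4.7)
The plan is to prove the theorem by combining (i) a within-epoch analysis that compares $\bar{\theta}(T_i)$ with the constrained batch minimizer of the program solved inside epoch $i$, (ii) a batch-to-truth bound from $\ell_1$-regularized $M$-estimation relating that minimizer to $\theta^*$, (iii) an inductive radius-halving argument across epochs, and (iv) a correction step that quantifies the penalty for using a fixed epoch length $T_0 = T\log d / k_T$ in place of the variable-length schedule that naturally balances the per-epoch bound.

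For the within-epoch analysis I would write the first-order optimality conditions for the $\theta$- and $y$-updates of Algorithm~\ref{our_method}, isolate the subgradient $\nu_k$ coming from the $\ell_1$-ball projection, and form the Lyapunov function $V_k = \rho_x\|\theta^*-\theta_k\|_2^2 + \rho\|\theta^*-y_k\|_2^2 + \rho^{-1}\|z_k-z^*\|_2^2$. Using convexity of $f$, Assumption A1 to absorb a $\gamma\|\theta_k - \theta^*\|_2^2$ term, and cancelling cross-terms via the dual update $z_{k+1}=z_k-\tau(\theta_{k+1}-y_{k+1})$ with $\tau=\rho$, a standard telescoping yields an inexact-ADMM regret of the form
\begin{align*}
\Ebb\!\left[\,\bar f(\bar{\theta}(T_i))+\lambda_i\|\bar y(T_i)\|_1 - \bar f(\theta^*) - \lambda_i\|\theta^*\|_1\,\right] \lesssim \frac{R_i^2(\rho+\rho_x)}{T_0} + \frac{G^2}{\rho_x T_0} + \frac{\sigma^2 R_i^2}{\rho T_0}.
\end{align*}
Plugging in $\rho \propto \sqrt{T_0\log d}/R_i$ from \eqref{eq:T0} balances the three terms, and a martingale concentration bound (Azuma/Bernstein) on $\langle e_k(\theta^*),\theta_k-\theta^*\rangle$, exploiting sub-Gaussianity of $e_k(\theta^*)$ in $\ell_\infty$ (Assumption A2) together with $\|\theta_k-\theta^*\|_1 \leq 2R_i$ on the feasible set, promotes the in-expectation bound to a high-probability statement with the $\sqrt{\log d}$ and $w$ factors that appear in the prescribed $\lambda_i$.

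I would then invoke the standard Negahban-style analysis for the $\ell_1$-regularized $M$-estimator in \eqref{eqn:batch}: under LSC with curvature $\gamma$ and $\lambda_i \geq 2\|\nabla \bar f(\theta^*)\|_\infty$, the batch minimizer $\hat\theta_i$ satisfies the cone condition on its support, giving $\|\hat\theta_i-\theta^*\|_2^2 \lesssim s\lambda_i^2/\gamma^2$ and $\|\hat\theta_i-\theta^*\|_1 \lesssim s\lambda_i/\gamma$. Passing the single-epoch regret through LSC, combining with this batch bound, and substituting the prescribed $\lambda_i$ gives $\|\bar{\theta}(T_i)-\theta^*\|_1^2 \leq R_i^2/2$, so that the update $R_{i+1}^2 := R_i^2/2$ still contains $\theta^*$ in its $\ell_1$ ball, enabling the induction. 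Iterating $k_T$ times compresses the radius to the stochastic noise floor, and the choice $k_T = \log_2\!\big(\gamma^2 R_1^2 T/[s^2(\log d+12\sigma^2 w^2)]\big)$ is exactly what balances the deterministic and stochastic pieces of the per-epoch bound.

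The hard part will be step (iv): with a \emph{fixed} $T_0 = T\log d/k_T$ rather than a geometrically growing $T_i$, the radius-halving recursion stalls once the stochastic term dominates the LSC gap. I would handle this by showing that after halving stops, the iterate cannot escape the noise-floor radius because the Lyapunov bound still controls $\|\bar{\theta}(T_i)-\theta^*\|_1$ in terms of $R_i$, so the final $\ell_2$ error is determined by the product of the noise-floor radius and the last per-epoch regret; this is precisely where the extra $\log d / k_T$ factor in the stated rate originates. A secondary technical point is obtaining the high-probability bound uniformly across all $k_T$ epochs, for which a union bound contributes the $\log(k_T/\log d)$ term inside the $\sigma^2$ coefficient, matching the statement.
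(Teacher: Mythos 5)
Your proposal follows essentially the same route as the paper: a within-epoch ADMM regret bound against the epoch-constrained batch minimizer (the paper's Proposition~\ref{thm:prop1} and Lemma~\ref{thm:epoch_avg_opt}, with the dual-variable bound playing the role of your $G^2/(\rho_x T_0)$ term), a Negahban-style batch-to-truth bound (Lemma~\ref{lemma:A1}), an inductive radius-halving/feasibility argument, and finally the fixed-epoch correction via a cutoff epoch $k^*$ after which the error is shown not to grow (Lemmas~\ref{lemma:A5} and~\ref{lemma:A3}), with a union bound over epochs supplying the $\log(k_T/\log d)$ term. The only cosmetic differences are your choice of Lyapunov function and proving the regret in expectation before lifting it to high probability, neither of which changes the argument.
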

For proof, see Appendix~\ref{sec:const_sparse_proof}.

{\bf Improvement of $\log d$ factor :} The above theorem covers the practical case where the epoch length $T_0$ is fixed. We can improve the above results using varying epoch lengths (which depend on the problem parameters) such that $\Vert \bar{\theta}_T-\theta^*\Vert_2^2=\mathcal{O}(s\log d/T)$. See Theorem~\ref{thm:sparse_optimal} in Appendix~\ref{appendix:thmsparse}.

\paragraph{Optimal Guarantees:} The above results indicate a convergence rate of $\mathcal{O}(s\log d/ T)$ which matches the minimax lower bounds for sparse estimation~\citep{raskutti2011minimax}. This implies that our guarantees are {\em unimprovable} up to constant factors.

\paragraph{Comparison with~\citet{AgarwalNW12}:}The RADAR algorithm proposed by~\citet{AgarwalNW12} also achieves a rate of $\mathcal{O}(s\log d/T)$ which matches with ours. The difference is our method is capable of solving problems with multiple variables and constraints, as discussed in the next section, while RADAR cannot be generalized to do so.

\paragraph{Remark on Lipschitz property: }In fact, our method requires a weaker condition than local Lipschitz property. We only require the following bounds on the dual variable: $\Vert z_{k+1}-z_k\Vert_1$ and $\Vert z_k\Vert_\infty$. Both these are upper bounded by $G+2(\rho_x+\rho)R_i$. In addition the $\ell_1$ constraint does not influence the bound on the dual variable.
For details see Section~\ref{sec:epochopt1}.

\paragraph{Remark on  need for $\ell_1$ constraint: }We use $\ell_1$ constraint in the $\theta$-update step, while the usual ADMM method does not have such a constraint. The $\ell_1$ constraint allows us to provide efficient high dimensional scaling (as $\order(\log d)$). Specifically, this is because one of the terms in our convergence rate consists of   $\langle e_k,\theta_k-\hat{\theta}_i \rangle$, where $e_k$ is the error in the gradient (see Appendix~\ref{sec:ek}). We can use the inequality \begin{align*}
\langle e_k,\theta_k-\hat{\theta}_i \rangle \leq \Vert e_k \Vert_\infty \Vert \theta_k-\hat{\theta}_i\Vert_1.
\end{align*}  From Assumption A$2$, we have a bound on $\|e_k\|_\infty=\order(\log d)$, and by imposing the $\ell_1$ constraint, we also have a bound on the second term, and thus, we have an efficient convergence rate. If instead  $\ell_p$ penalty is imposed for some $p$, the error scales as $\|e(\theta)\|_q^2$, where $\ell_q$ is the dual norm of $\ell_p$. For instance, if $p=2$, we have $q=2$, and the error can be as high as $\mathcal{O}(d/T)$ since $\|e(\theta)\|_2^2\leq d \sigma$. Note that for the $\ell_1$ norm, we have $\ell_\infty$ as the dual norm, and $\|e(\theta)\|_\infty\leq \sigma= \mathcal{O}(\sqrt{\log d})$ which leads to optimal convergence rate in the above theorem. Moreover, this $\ell_1$ constraint can be efficiently implemented, as discussed in Section~\ref{sec:sparse-algo}.

\section{Extension to Doubly Regularized Stochastic Optimization}\label{sec:latent}
We now consider the problem  of  matrix decomposition   into a sparse matrix $S\in \Rbb^{p\times p}$ and a low rank matrix $L\in \Rbb^{p\times p}$ based on the loss function $f$ on $M=S+L$. The batch program is given in Equation~\eqref{eq:latent}
and we now design an online program  based on  multi-block ADMM algorithm, where the updates for $M, S, L$ are carried out independently.

In the stochastic setting, we consider the optimization problem $M^* \in {\arg\min}~ \mathbb{E}[f(M,X)]$, where we want to decompose $M$ into a sparse matrix $S\in \Rbb^{p\times p}$ and a low rank matrix $L\in \Rbb^{p\times p}$. $f(M,X_k)$  is a function of parameter $M$ and samples $X_k$. $X_k$ can be a matrix (e.g. independent noise model) or a vector (e.g. Gaussian graphical model). In stochastic setting, we do not have access to $ \mathbb{E}[f(M,X)]$ nor to its subgradients. In each iteration we have access to one noisy sample and update our estimate based on that. We impose the desired properties with regularization.
Thus, we solve a sequence 
\begin{align}
\label{eq:latentS}
{M}_k:={\arg \min}{\lbrace \widehat{f}(M,X_k)+\lambda\Vert {S}\Vert_1+\mu \Vert L\Vert_* \rbrace} \quad \quad
\st \quad M=S+L,\quad \Vert L \Vert_\infty \leq \frac{\alpha}{p}.
\end{align}

%
\subsection{Epoch-based Multi-Block ADMM Algorithm}
We now extend the ADMM method proposed in REASON~\ref{our_method} to multi-block ADMM. The details are in Algorithm~\ref{our_method_latent}, and we refer to it as REASON 2. Recall that the matrix decomposition setting assumes that the true matrix $M^*=S^*+L^*$ is a combination of a sparse matrix $S^*$ and a low rank matrix $L^*$. In REASON 2, the updates for matrices $M, S, L$ are done independently at each step.

For the $M$-update,  the same linearization approach as in REASON 1 is used
\begin{align*}
M_{k+1} ={\arg\min}\!\lbrace\!
\Tr(\nabla f(M_k),\!M -M_k)\!
-\Tr\! (Z_k,\!M-S_k-L_k)\!+\!\frac{\rho}{2}\! \Vert M-S_k-L_k \Vert_\mathbb{F}^2\!+\!\frac{\rho_x}{2} \Vert M-M_k \Vert_\mathbb{F}^2\! \rbrace.
\end{align*}  This is an unconstrained quadratic optimization with closed-form updates, as shown in REASON 2.
The update rules for $S$, $L$ are result of doing an inexact proximal update by considering them as a single block, which can then be decoupled as follows. For details, see Section \ref{latent_outline}.
\begin{align}
\label{eq:S} 
\underset{\Vert S-\tilde{S}_i\Vert_1^2  \leq {R}_i^2}{\arg\min}~\lambda_i \Vert S \Vert_1+\frac{\rho}{2\tau_k}\Vert S-(S_k+\tau_k G_{M_k})\Vert_\mathbb{F}^2,\\ \label{eq:L} 
\underset{\underset {\Vert L\Vert_\infty \leq \alpha/p} {\Vert L-\tilde{L}_i\Vert_*^2 \leq \tilde{R}_i^2}}{\arg\min}~\lambda_i \Vert L \Vert_*+\frac{\rho}{2\tau_k}\Vert L-(L_k+\tau_k G_{M_k}) \Vert_\mathbb{F}^2,
\end{align}
where $G_{M_k}=M_{k+1}-S_k-L_k-\frac{1}{\rho}Z_k$.

As before, we consider epochs of length $T_0$ and project the estimates $S$ and $L$ around the epoch initializations $\tilde{S}_i$ and $\tilde{L}_i$. We do not need to constrain the update of matrix $M$. We impose an $\ell_1$-norm project  for the sparse estimate $S$. For the low rank estimate $L$, we impose a nuclear norm projection  around the epoch initialization $\tilde{L}_i$. Intuitively, the nuclear norm projection , which is an $\ell_1$ projection on the singular values, encourages sparsity in the spectral domain leading to low rank estimates. In addition, we impose an $\ell_\infty$ constraint of $\alpha/p$ on each entry of $L$, which is different from the update of $S$. Note that the $\ell_\infty$ constraint is also imposed for the batch version of the problem~\eqref{eq:latent} in~\citep{agarwal2012noisy}, and we assume that the true matrix $L^*$ satisfies this constraint. For more discussions, see Section~\ref{sec:HDguarantees}.

Note that each step of the method is easily  implementable. The $M$-update is in closed form.
The $S$-update involves optimization with projection on to the given $\ell_1$ ball which can be performed efficiently~\citep{duchi2008efficient}, as discussed in Section~\ref{sec:sparse-algo}. For implementation details see Appendix~\ref{sec:implement2}. 

For the $L$-update, we introduce an additional auxiliary variable $Y$ and we have
\begin{align*}
&L_{k+1}=\underset  {\Vert L-\tilde{L}_i\Vert_*^2 \leq \tilde{R}_i^2}{\min}~~\lambda_i \Vert L \Vert_*-\Tr(U_k,L-Y_k)+\frac{\rho}{2}\Vert L-Y_k\Vert_\mathbb{F}^2, \\
&Y_{k+1}=\underset{\Vert Y\Vert_\infty \leq \alpha/p}{\min}{~~~ \frac{\rho}{2\tau_k}\Vert L-(L_k+\tau_k G_{M_k}) \Vert_\mathbb{F}^2+\frac{\rho}{2}\Vert L_{k+1}-Y\Vert_\mathbb{F}^2-\Tr(U_k,L_{k+1}-Y)},\\
&U_{k+1}=U_k -{\tau}(L_{k+1}-Y_{k+1}).
\end{align*}%

The $L$-update can now  be performed efficiently  by computing a SVD, and then running the projection step~\citep{duchi2008efficient}. Note that approximate SVD computation techniques can be employed for efficiency here, e.g.,~\citep{lerman2012robust}. The $Y$-update
is projection on to the infinity norm ball which can be found easily.
Let $Y_{(j)}$ stand for $j$-th entry of $\text{vector}(Y)$. The for any $j$-th entry of $\text{vector}(Y)$, solution will be as follows
\begin{align*} 
&\If ~~\vert (L_{k+1}+\frac{\tau_k}{\tau_k+1}(G_{M_k}-U_k/\rho))_{(j)}\vert \leq \frac{\alpha}{p},\\
&\quad \text{then}~~ Y_{(j)}=(L_{k+1}+\frac{\tau_k}{\tau_k+1}(G_{M_k}-U_k/\rho))_{(j)} .\\
&\Else \quad Y_{(j)}=\sign\left((L_{k+1}+\frac{\tau_k}{\tau_k+1}(G_{M_k}-U_k/\rho))_{(j)} - \frac{\alpha}{p}\right)\frac{\alpha}{p}.
\end{align*}
As before, the epoch averages are computed and used as initializations for the next epoch.
\begin{algorithm}[t]
\caption{Regularized Epoch-based Admm for Stochastic Optimization in high-dimensioN 2 (REASON 2)}
\label{our_method_latent}
\begin{algorithmic}
\STATE \textbf{Input} $\rho, \rho_x > 0$, epoch length $T_0$
, regularizers $\lbrace{\lambda_i,\mu_i}\rbrace_{i=1}^{k_T}$, initial prox center $\tilde{S}_1,\tilde{L}_1$, initial radii $R_1, \tilde{R}_1$.
\STATE \textbf{Define}  $Shrink_\kappa(a)$ shrinkage operator in \eqref{eq:shrink}, $G_{M_k}=M_{k+1}-S_k-L_k-\frac{1}{\rho}Z_k$.
\FOR {Each epoch $i=1,2,...,k_T$}
\STATE Initialize $S_0=\tilde{S}_i,L_0=\tilde{L}_i, M_0=S_0+L_0$\\
\FOR {Each iteration $k=0,1,...,T_0-1$}
\STATE \begin{align*}
&M_{k+1} =\frac{ -\nabla {f}(M_k)+Z_k+\rho (S_k+L_k)+\rho_x M_k }{\rho+\rho_x}\\
&S_{k+1}=\underset{\Vert S-\tilde{S}_i\Vert_1 \leq {R}_i}{\min}\lambda_i \Vert S \Vert_1+\frac{\rho}{2\tau_k}\Vert S-(S_k+\tau_k G_{M_k})\Vert_\mathbb{F}^2\\ \nonumber
&L_{k+1}=\underset{\Vert L-\tilde{L}_i\Vert_*\leq \tilde{R}_i}{\min}\mu_i \Vert L \Vert_*+\frac{\rho}{2}\Vert L-Y_k-U_k/\rho\Vert_\mathbb{F}^2 \\ \nonumber 
&Y_{k+1}=\underset{\Vert Y\Vert_\infty \leq \alpha/p}{\min}{\frac{\rho}{2\tau_k}\Vert Y-(L_k+\tau_k G_{M_k}) \Vert_\mathbb{F}^2+\frac{\rho}{2}\Vert L_{k+1}-Y-U_k/\rho\Vert_\mathbb{F}^2}\\ \nonumber 
&Z_{k+1}=Z_k -\tau(M_{k+1}-(S_{k+1}+L_{k+1}))\\ \nonumber
&U_{k+1}=U_k -{\tau}(L_{k+1}-Y_{k+1}).
\end{align*}
\ENDFOR
\STATE \textbf{Set}: $\tilde{S}_{i+1}=\frac{1}{T_0}\sum_{k=0}^{T_0-1}  S_k$ and $\tilde{L}_{i+1}:=\frac{1}{T_0}\sum_{k=0}^{T_0-1}  L_k$
\IF {$R_i^2 > 2(s+r+\frac{(s+r)^2}{p\gamma^2})\frac{\alpha^2}{p}$} \STATE { Update $R_{i+1}^2={R_i^2}/2,\tilde{R}_{i+1}^2={\tilde{R_i}^2}/2$} \ELSE {\STATE{STOP}}\ENDIF
\ENDFOR
\end{algorithmic}
\end{algorithm}

\subsection{High-dimensional Guarantees} \label{sec:HDguarantees}
We now provide guarantees that REASON 2 efficiently recovers both the sparse and the low rank estimates in high dimensions efficiently. We need the following assumptions, in addition to Assumptions A$1$ and A$2$ from the previous section.
\paragraph{Assumption A4: Spectral Bound on the Gradient Error } Let $E_k(M, X_k):=\nabla f(M , X_k) - \Ebb[\nabla f(M, X_k)]$, $\Vert E_k \Vert_2 \leq \beta(p)\sigma$, where $\sigma:= \|E_k\|_\infty$.

Recall from Assumption A2 that $\sigma=\order(\log p)$, under sub-Gaussianity. Here, we require spectral bounds in addition to $\|\cdot\|_\infty$ bound in A2.

\paragraph{Assumption A5: Bound on spikiness of low-rank matrix} $\Vert L^* \Vert_\infty \leq \frac{\alpha}{p}$.

Intuitively, the $\ell_\infty$ constraint controls the ``spikiness'' of $L^*$. If $\alpha\approx 1$, then the entries of $L$ are $\mathcal{O}(1/p)$, i.e. they are ``diffuse'' or  ``non-spiky'', and no entry is too large.  When the low rank matrix $L^*$ has diffuse entries, it cannot be a sparse matrix, and thus, can be separated from the sparse $S^*$ efficiently. In fact, the $\ell_\infty$ constraint is a weaker form of the {\em incoherence}-type assumptions  needed to guarantee identifiability~\citep{chandrasekaran2011rank} for sparse+low rank decomposition.

\paragraph{Assumption A6: Local strong convexity (LSC)} The function $f : \mathbb{R}^{d_1 \times d_2} \rightarrow  \mathbb{R}^{n_1 \times n_2} $ satisfies an $R$-local form
of strong convexity (LSC) if there is a non-negative constant $\gamma = \gamma(R)$ such that
$f(B_1) \geq f(B_2) + \Tr(\nabla f(B_2)( B_1-B_2)) +\frac{\gamma}{2}\Vert B_2 - B_1  \Vert_\mathbb{F},$
for any $\Vert B_1\Vert \leq R$ and $\Vert B_2 \Vert  \leq R$, which is essentially the matrix version of Assumption A$1$. {Note that we only require LSC condition on $S+L$ and not jointly on $S$ and $L$.}

We choose algorithm parameters as below where $\lambda_i,\mu_i$ are the regularization for $\ell_1$ and nuclear norm respectively, $\rho, \rho_x$ correspond to penalty terms in $M$-update and $\tau$ is dual update step size.
\begin{align}
\label{parameter0}
\lambda_i^2&=\frac{\gamma  \sqrt{R_i^2 +\tilde{R}_i^2}}{(s+r)\sqrt{T_0}}  \sqrt{{\log p}+\frac{G^2 }{T_0}+{\beta^2(p)}\sigma_i^2w_i^2}+\frac{\rho_x^2(R_i^2 +\tilde{R}_i^2) }{T_0}
+\frac{\alpha^2}{p^2}
+\frac{\beta^2(p)\sigma^2}{T_0}\left(\log p+w_i^2\right),\\ \nonumber
\mu_i^2&=c_\mu\lambda_i^2, \quad
\rho  \propto \sqrt{\frac{T_0 \log p }{R_i^2+\tilde{R}_i^2}},\quad \rho_x >0, \quad \tau=\rho.
\end{align}

\begin{theorem} \label{thm:latent_optimal_fixedepoch}
Under assumptions $A2-A6$, parameter settings~\eqref{parameter0} 
, let $T$ denote total number of iterations and $T_0=T \log p/k_T$. Assuming that above setting guarantees $T_0=\mathcal{O}(\log p)$, 
\begin{align}
\label{eq:sparse_optimal2}
&\Vert \bar{S}(T)-S^* \Vert_\mathbb{F}^2 + \Vert \bar{L}(T)-L^* \Vert_\mathbb{F}^2 =\\ \nonumber&
\mathcal{O}\left((s+r)\frac{\log p+\beta^2(p)\sigma^2\left(w^2+\log (k_T/\log p )\right)}{T}\frac{\log p}{k_T}\right)+\left( 1+\frac{s+r}{\gamma^2 p}\right)\frac{\alpha^2}{p},
\end{align}
with probability at least $1-6\exp(-w^2/12)$,
\begin{align*}
k_T\simeq-\log \left( \frac{(s+r)^2}{\gamma^2 R_1^2 T}\left[\log p+\beta^2(p)\sigma^2w^2\right]\right).
\end{align*}

\end{theorem}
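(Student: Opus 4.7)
The plan is to follow the four high-level steps the authors indicate in the proof overview, but now adapted to the multi-block ADMM with three primal updates ($M$, $S$, $L$), an auxiliary primal $Y$, and two dual variables ($Z$, $U$). Concretely: (i) fix an epoch $i$ and compare the epoch-averaged iterates $(\bar{S}(T_0),\bar{L}(T_0))$ against the constrained batch optimum $(\widehat{S}_i,\widehat{L}_i)$ of the localized problem~\eqref{eq:latent} with extra constraints $\|S-\tilde{S}_i\|_1\le R_i$, $\|L-\tilde{L}_i\|_*\le \tilde{R}_i$, $\|L\|_\infty\le\alpha/p$; (ii) bound $\|\widehat{S}_i-S^*\|_\mathbb{F}^2+\|\widehat{L}_i-L^*\|_\mathbb{F}^2$ using Assumption A6; (iii) use the two bounds to show that one epoch halves the combined squared error, giving geometric contraction across epochs; and (iv) argue that when the fixed-length schedule $T_0=T\log p/k_T$ stops producing contraction (the radii hit the noise floor), the error cannot inflate beyond the claimed rate.

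For step (i), I would write the standard variational inequality for each block ($M,S,L,Y$) together with the two dual updates, sum over $k=0,\dots,T_0-1$, and telescope. The cross terms $\Tr(Z_{k+1}-Z_k, \cdot)$ and $\Tr(U_{k+1}-U_k,\cdot)$ are controlled by the dual-step identity $Z_{k+1}-Z_k=-\tau(M_{k+1}-S_{k+1}-L_{k+1})$ (and similarly for $U$), while the linearization error of the $M$-update is absorbed by $\tfrac{\rho_x}{2}\|M-M_k\|_\mathbb{F}^2$. The stochastic-gradient error decomposes as $\Tr(E_k,S_k-\widehat{S}_i)+\Tr(E_k,L_k-\widehat{L}_i)$; the first term I bound via H\"older's inequality $|\Tr(E_k,\Delta_S)|\le\|E_k\|_\infty\|\Delta_S\|_1\le 2\sigma R_i$ (using Assumption A2 and the $\ell_1$ constraint), and the second via the matrix dual pairing $|\Tr(E_k,\Delta_L)|\le\|E_k\|_2\|\Delta_L\|_*\le 2\beta(p)\sigma\tilde{R}_i$ (using Assumption A4 and the nuclear-norm constraint). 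Sub-Gaussian concentration with parameter $w$ converts the expectation bounds into a high-probability statement with the $\exp(-w^2/12)$ tail, and the convexity of $f$ together with Jensen's inequality gives the per-epoch Frobenius bound in terms of $(R_i^2+\tilde{R}_i^2)$, $\lambda_i,\mu_i$, and $T_0$.

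Step (ii) uses Assumption A6 at the pair $(\widehat{M}_i, M^*)$, which gives LSC only on the \emph{sum} $M=S+L$. Decoupling into $\Delta_S=\widehat{S}_i-S^*$ and $\Delta_L=\widehat{L}_i-L^*$ is the delicate piece: I would imitate the batch argument of \citet{agarwal2012noisy}, exploiting that $\|L\|_\infty\le\alpha/p$ forces $\Delta_L$ to be diffuse, so $\|\Delta_S+\Delta_L\|_\mathbb{F}^2\ge c(\|\Delta_S\|_\mathbb{F}^2+\|\Delta_L\|_\mathbb{F}^2)-\mathcal{O}(\alpha^2/p)$, which produces the approximation floor $(1+(s+r)/(\gamma^2 p))\alpha^2/p$ appearing in~\eqref{eq:sparse_optimal2}. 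Combining with the optimality of $(\widehat{S}_i,\widehat{L}_i)$ and the choice of $\lambda_i,\mu_i$ in~\eqref{parameter0} (which dominate $\|\nabla\widehat{f}(M^*)\|_\infty$ and $\|\nabla\widehat{f}(M^*)\|_2$ respectively with high probability) yields $\|\widehat{S}_i-S^*\|_\mathbb{F}^2+\|\widehat{L}_i-L^*\|_\mathbb{F}^2=\mathcal{O}((s+r)\lambda_i^2/\gamma^2)+\mathcal{O}(\alpha^2/p)$. Substituting this into the per-epoch bound, the specific choice $T_0=\Theta(\log p)$ forces the right-hand side to be at most $(R_i^2+\tilde{R}_i^2)/2$, so the radii halve, which is exactly the contraction the algorithm implements via $R_{i+1}^2=R_i^2/2$.

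The main obstacle, and the step I expect to require the most care, is showing that the fixed-$T_0$ schedule does not break the geometric argument. Once $R_i^2$ drops below the threshold $2(s+r+(s+r)^2/(p\gamma^2))\alpha^2/p$ the ``error halving'' inequality is no longer available because the approximation floor dominates; I would argue that in this regime the per-epoch bound still guarantees $(R_{i+1}^2+\tilde{R}_{i+1}^2)\le R_i^2+\tilde{R}_i^2+\mathcal{O}(\alpha^2/p)$, i.e.\ the radii stabilize rather than blow up, so after at most $k_T=\log_2(\gamma^2 R_1^2 T/((s+r)^2[\log p+\beta^2(p)\sigma^2 w^2]))$ contracting epochs the remaining iterates stay within a neighbourhood whose squared diameter is exactly the RHS of~\eqref{eq:sparse_optimal2}. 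A union bound over the $k_T$ epochs absorbs a $\log(k_T/\log p)$ factor into the sub-Gaussian tail, producing the extra $\log(k_T/\log p)$ inside the parenthesis, and combining the contributions from all epochs gives the stated high-probability rate.
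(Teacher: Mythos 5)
Your high-level architecture matches the paper's: a per-epoch comparison of the averaged iterate with the localized batch optimum, a batch-style bound on $\Vert\widehat{S}_i-S^*\Vert_\mathbb{F}^2+\Vert\widehat{L}_i-L^*\Vert_\mathbb{F}^2$ via the machinery of \citet{agarwal2012noisy}, decoupling of the sparse and low-rank errors through the $\Vert L\Vert_\infty\le\alpha/p$ constraint, geometric contraction over epochs, and a final argument that the fixed-length schedule merely stalls (rather than inflates) once the radii hit the $\alpha^2/p$ floor. The pairings you propose for the stochastic error terms ($\Vert E_k\Vert_\infty\Vert\Delta_S\Vert_1$ and $\Vert E_k\Vert_2\Vert\Delta_L\Vert_*$) are exactly the ones the paper uses, and your expression for $k_T$ and the tail probability are right.

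There is, however, one genuine gap. You pass from ``Frobenius error at most $(R_i^2+\tilde R_i^2)/2$'' to ``the radii halve.'' But $R_i$ and $\tilde R_i$ are the radii of the $\ell_1$ ball and the nuclear-norm ball, respectively: the induction that keeps $S^*$ and $L^*$ feasible in epoch $i+1$ requires $\Vert\bar S(T_i)-S^*\Vert_1^2\le R_{i+1}^2$ and $\Vert\bar L(T_i)-L^*\Vert_*^2\le\tilde R_{i+1}^2$, not a Frobenius bound. Converting Frobenius control into $\ell_1$ or nuclear-norm control costs a factor of $p$ for a generic $p\times p$ matrix; the paper avoids this by establishing a cone condition --- splitting $\Delta_S$ into its on- and off-support parts and $\Delta_L$ into its projections onto the range of $L^*$ and its orthogonal complement, as in inequality \eqref{eq:supp_split} --- so that $\Vert\hat{\Delta}_S(T_i)\Vert_1^2$ is bounded by $s$ times the Frobenius error plus lower-order terms, and the final rate carries $s+r$ rather than $p$. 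This is precisely the step the authors flag as the one where a naive analysis would lead to worse scaling in $p$, and it is absent from your sketch. A secondary point: summing per-block variational inequalities for a three-block ADMM is not automatically valid (multi-block ADMM can fail to converge in general); the paper reduces to the two-block Bregman-ADMM analysis by observing that the $S$- and $L$-updates jointly constitute one inexact proximal step on the merged variable $W=[S;L]$, and your direct telescoping argument would need that reduction, or an equivalent justification, to go through.
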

For proof, see Appendix~\ref{sec:const_matrix_proof}

{\bf Improvement of $\log p$ factor :} The above  result can be improved by a $\log p$ factor by considering varying epoch lengths (which depend on the problem parameters). The resulting convergence rate is $\mathcal{O}((s+r) p \log p/T +\alpha^2/p)$.  See Theorem~\ref{thm:latent_optimal} in Appendix~\ref{sec:guaranteesR2}.

\paragraph{Scaling of $\beta(p)$: }We have the following bounds $\Theta(\sqrt{p})\leq\beta(p)\Theta(p) $. This  implies that the convergence rate is $\mathcal{O}((s+r) p \log p/T +\alpha^2/p)$, when $\beta(p)=\Theta(\sqrt{p})$ and when $\beta(p)=\Theta(p)$, it is $\mathcal{O}((s+r) p^2 \log p/T +\alpha^2/p)$. The upper bound on $\beta(p)$ arises trivially by converting the max-norm $\|E_k\|_\infty \leq \sigma$ to the bound on the spectral norm $\|E_k\|_2$.
In many interesting scenarios, the lower bound on $\beta(p)$ is achieved, as outlined in Section~\ref{sec:model-examples}.

\paragraph{Comparison with the batch result: }\citet{agarwal2012noisy} consider the batch version of the same problem~\eqref{eq:latent}, and provide a convergence rate of $\mathcal{O}(s \log p  + r p )/T + s\alpha^2/p^2)$. This is also the minimax lower bound under the independent noise model.  With respect to the convergence rate, we match their results with respect to the scaling of $s$ and $r$, and also obtain a $1/T$ rate. We match the scaling with respect to $p$ (up to a $\log$ factor), when $\beta(p)=\Theta(\sqrt{p})$ attains the lower bound, and we discuss a few such instances below. Otherwise, we are worse by a factor of $p$ compared to the batch version. Intuitively, this is because we require different bounds on error terms $E_k$ in the online and the batch settings. For  online analysis, we need to bound $\sum_{k=1}^{T_i} \Vert E_k\Vert_2/T_i$ over each epoch, while for the batch analysis, we need to bound $\Vert \sum_{k=1}^{T_i}  E_k\Vert_2/T_i$, which is smaller.
Intuitively, the difference for the two settings can be explained as follows: for the batch setting, since we consider an empirical estimate, we operate on the averaged error, while we are manipulating each sample in the online setting and suffer from the error due to that sample.
We can employ efficient concentration bounds for the batch case~\citep{tropp2012user}, while for the online case, no such bounds exist in general. From these observations, we conjecture that   our bounds in Theorem~\ref{thm:latent_optimal} are {\em unimproveable} in the online setting.

\paragraph{Approximation Error: }
Note that the  optimal decomposition $M^*=S^*+L^*$ is not identifiable in general without the incoherence-style conditions~\citep{chandrasekaran2011rank,hsu2011robust}.
In this paper, we provide efficient guarantees without assuming such strong incoherence constraints. This implies that  there is an {\em approximation error} which is incurred even in the noiseless setting due to model non-identifiability. \citet{agarwal2012noisy} achieve an approximation error of   $s\alpha^2/p^2$ for their batch algorithm. Our online algorithm has an approximation error of  $\max \lbrace s+r, p \rbrace\alpha^2/p^2$, which is worse, but is still decaying with $p$. It is not clear if this bound can be improved by any other online algorithm.

\subsubsection{Optimal Guarantees for Various Statistical Models}\label{sec:model-examples}
We now list some statistical models under which we achieve the batch-optimal rate for sparse+low rank decomposition.


\paragraph{1) Independent Noise Model: }Assume we sample i.i.d. matrices $X_k = S^* + L^* +N_k$, where the noise $N_k$ has independent bounded sub-Gaussian entries with $\max_{i,j}\Var(N_k(i,j))=\sigma^2$. We consider the square loss  function, i.e. $\|X_k-S-L\|_{\Fbb}^2$. In this case, $E_k = X_k - S^*-L^* = N_k$.  From~[Thm. 1.1]\citep{vu2005spectral}, we have w.h.p that $\|N_k\|=\mathcal{O}(\sigma \sqrt{p})$. We match the batch bound of~\citep{agarwal2012noisy} in this setting. Moreover,~\citet{agarwal2012noisy} provide a minimax lower bound for this model, and we match it as well. Thus, we achieve the optimal convergence rate  for online matrix decomposition under the independent noise model.

\paragraph{2) Linear Bayesian Network: }Consider a $p$-dimensional vector $y = A h + n$, where $h\in \Rbb^r$ with $r\leq p$, and $n\in \Rbb^p$. The variable $h$ is hidden, and $y$ is the observed variable.  We assume that the vectors $h$ and $n$ are each zero-mean sub-Gaussian vectors with i.i.d entries, and  are independent of one another.
Let $\sigma_h^2$ and $\sigma_n^2$ be the variances for the entries of $h$ and $n$ respectively.   Without loss of generality, we assume that the columns of $A$ are normalized, as we can always rescale $A$ and  $\sigma_h$  appropriately to obtain the same model.
Let $\Sigma^*_{y,y}$ be the true covariance matrix of $y$. From the independence assumptions, we have   $\Sigma^*_{y,y}= S^*+L^*$, where $S^*=\sigma_n^2 I$ is a diagonal matrix and $L^*= \sigma_h^2 A A^\top$ has rank at most $r$.


In each step $k$, we obtain a sample $y_k$ from the Bayesian network. For the square loss function $f$,   we have the error $E_k = y_k y_k^\top - \Sigma^*_{y,y}$. Applying~[Cor. 5.50]\citep{vershynin2010introduction}, we have, with w.h.p.
\begin{align}
\| n_k n_k^\top - \sigma_n^2 I\|_2 = \order(\sqrt{p}\sigma_n^2 ), \quad \|h_k h_k^\top - \sigma_h^2 I\|_2 =\order(\sqrt{p}  \sigma_h^2). 
\label{eqn:cov-conc}
\end{align}
 We thus have with   probability $1-Te^{-cp}$,
$ \|E_k\|_2 \leq \mathcal{O}\left(\sqrt{p}(\|A\|^2 \sigma^2_h + \sigma^2_n) \right), \quad \forall\, k \leq T.$
When $\|A\|_2$ is bounded, we obtain the optimal bound in Theorem~\ref{thm:latent_optimal}, which matches  the batch bound.
If the entries of $A$ are {\em generically} drawn (e.g., from a Gaussian distribution), we have $\|A\|_2 =\mathcal{O}(1+\sqrt{r/p})$. Moreover, such generic matrices $A$ are also ``diffuse'', and thus,   the low rank matrix $L^*$ satisfies Assumption A5, with $\alpha \sim$ polylog$(p)$.  Intuitively, when $A$ is generically drawn, there are diffuse connections from hidden to observed variables, and we have efficient guarantees under this setting.

Thus, our online method   matches the batch guarantees   for linear Bayesian networks when   the entries of the observed vector $y$ are conditionally independent given the latent variable $h$. When this assumption is violated, the above framework is no longer applicable since the true covariance matrix $\Sigma^*_{y,y}$  is {\em not} composed of a sparse matrix. To handle such models, we consider  matrix decomposition of the inverse covariance or the precision matrix $M^*:= {\Sigma^*}^{-1}_{y,y}$, which can be expressed as a combination of sparse and low rank matrices, for the class of latent Gaussian graphical models, described in Section~\ref{sec:graphmodel}. Note that the result cannot be applied directly in this case as loss function is not locally Lipschitz. Nevertheless, in Section~\ref{sec:graphmodel} we show that we can take care of this problem.

\section{Proof Ideas and Discussion}
\subsection{Proof Ideas for REASON 1}
\begin{enumerate}
\item In general, it is not possible to establish error contraction for stochastic ADMM  at the end of  each step. We establish error contracting at the end of certain time epochs, and we impose different levels of regularizations over different epochs. We perform an induction on the error, i.e. if the error at the end of $k^{\tha}$ epoch is $\|\bar{\theta}(T_i)-\theta^*\|_2^2 \leq cR_i^2$, we show that in the subsequent epoch, it contracts as $\|\bar{\theta}(T_{i+1})-\theta^*\|_2^2 \leq cR_i^2/2$ under appropriate choice of $T_i$, $R_i$ and other design parameters. This is possible when we establish feasibility of the optimal solution $\theta^*$ in each epoch. Once this is established, it is straightforward to obtain the result in Theorem~\ref{thm:sparse_optimal}.

\item To show error contraction, we break down the error $\|\bar{\theta}(T_i) -\theta^*\|_2$ into two parts, \viz $\|\bar{\theta}(T_i) -\hat{\theta}(T_i) \|_2$ and $\|\hat{\theta}(T_i) -\theta^*\|_2$, where $\hat{\theta}(T_i) $ is the optimal batch estimate over the $i$-th epoch. The first term $\|\bar{\theta}(T_i) -\hat{\theta}(T_i) \|_2$ is obtained on the lines of analysis of stochastic ADMM, e.g.,~\citep{BADMM}. Nevertheless, our analysis differs from that of~\citep{BADMM}, as theirs is not a stochastic method. i.e., the sampling error is not considered. Moreover, we show that the parameter $\rho_x$ can be chosen as a constant while the earlier work~\citep{BADMM} requires a stronger constraint $\rho_x =\sqrt{T_i}$. For details, see Appendix~\ref{sec:epochopt1}. In addition, the $\ell_1$ constraint that we impose enables us to provide tight bounds for the high dimensional regime. The second term $\|\hat{\theta}(T_i) -\theta^*\|_2$ is obtained by exploiting the local strong convexity properties of the loss function, on lines of~\citep{AgarwalNW12}. There are additional complications in our setting, since we have an auxiliary variable $y$ for update of the regularization term. We relate the two variables through the dual variable, and use the fact that the dual variable is bounded. Note that this is a direct result from local Lipschitz property and it is proved in Lemma~\ref{lemma:dualsparse} in Appendix~\ref{sec:epochopt1}. In fact, in order to prove the guarantees, we need bounded duality which is a weaker assumption than local Lipschitz property. We discuss this in Section~\ref{sec:graphmodel}.

\item For fixed epoch length, the  error shrinkage stops after some epochs but the error does not increase significantly afterwards. Following lines of~\citep{AgarwalNW12}, we prove that for this case the convergence rate is worse by a factor of $\log d$.

\end{enumerate}

\subsection{Proof Ideas for REASON 2} \label{latent_outline}

We now provide a short overview of proof techniques for establishing the guarantees in Theorem~\ref{thm:latent_optimal_fixedepoch}. It builds on the proof techniques used for proving Theorem~\ref{thm:sparse_const_epoch}, but is significantly more involved since we now need to decouple the errors for sparse and low rank matrix estimation, and our ADMM method consists of multiple blocks. The main steps are as follows
\begin{enumerate}
\item It is convenient to define $W=[S;L]$ to merge the variables  $L$ and $S$ into a single variable $W$, as in~\citep{ADMM_latent}.
Let $\phi(W)=\Vert S \Vert_1+\frac{\mu_i}{\lambda_i}\Vert L \Vert_*$, and  $A=[I,I]$. The  ADMM update for $S$ and $L$ in REASON 2, can now be rewritten as a single update for variable $W$. Consider the update  \begin{align*}
W_{k+1}
=\underset{W}{\arg\min} \lbrace \lambda_i\phi(W)+\frac{\rho}{2} \Vert M_{k+1}-AW-\frac{1}{\rho}Z_k\Vert_\mathbb{F}^2 \rbrace.
\end{align*}
The above  problem is not easy to solve as the $S$ and $L$ parts are coupled together. Instead, we solve it inexactly through one step of a proximal gradient method as in~\citep{ADMM_latent} as
\begin{align}
\label{eq:4.4}
\underset{W}{\arg\min} \lbrace \lambda_i\phi(W)+
\frac{\rho}{2\tau_k} \Vert W-[W_k+\tau_k A^\top
(M_{k+1}-AW_k-\frac{1}{\rho}Z_k)]\Vert_\mathbb{F}^2 \rbrace.
\end{align}
Since the two parts of $W=[S;L]$ are separable in the quadratic part now, Equation~\eqref{eq:4.4} reduces to two decoupled updates on $S$ and $L$ as given by \eqref{eq:S} and \eqref{eq:L}.

\item It is convenient to analyze the $W$ update in Equation~\eqref{eq:4.4} to derive convergence rates for the online update in one time epoch. Once this is obtained, we also need error bounds for the batch procedure, and we employ the guarantees from~\citet{agarwal2012noisy}. As in the previous setting of sparse optimization, we combine the two results to obtain an error bound for the online updates by considering multiple time epochs.

It should be noted that we only require LSC condition on $S+L$ and not jointly on $S$ and $L$. This results in an additional higher order term when analyzing the epoch error and therefore does not play a role in the final convergence bound. The LSC bound provides us with sum of sparse and low rank errors for each epoch. i.e., $ \Vert \hat{S}_i-\bar{S}(T_i)+\hat{L}_i-\bar{L}(T_i)\Vert_\mathbb{F}^2$. Next we need to decouple these errors.

\item An added difficulty in the matrix decomposition problem is decoupling the errors for the sparse and low rank estimates. To this end, we impose norm constraints on the estimates of $S$ and $L$, and carry them over from epoch to epoch. On the other hand, at the end of each epoch $M$ is reset. These norm constraints allows us to control the error.
Special care needs to be taken in many steps of the proof to carefully transform the various norm bounds, where a naive analysis would lead to worse scaling in the dimensionality $p$.
 We instead carefully project the error matrices on to on and off support of $S^*$ for the $\ell_1$ norm term, and similarly onto the range and its complement of $L^*$ for the nuclear norm term. This allows us to have a  convergence rate with a $s+r$ term, instead of $p$.
 
 \item For fixed epoch length, the  error shrinkage stops after some epochs but the error does not increase significantly afterwards. Following lines of~\citep{AgarwalNW12}, we prove that for this case the convergence rate is worse by a factor of $\log p$.
\end{enumerate}

Thus, our careful analysis leads to tight guarantees for online matrix decomposition.
For Proof outline and detailed proof of Theorem~\ref{thm:latent_optimal_fixedepoch} see Appendix~\ref{sec:outline2} and~\ref{sec:theorem2proof} respectively.

\subsection{Graphical Model Selection} \label{sec:graphmodel}
Our framework cannot directly handle the case where loss function is the log likelihood objective. This is because for log likelihood function Lipschitz constant can be large and this leads to loose bounds on error. Yet,  as we discuss shortly, our analysis
needs conditions weaker than Local Lipschitz property. We consider both
settings, i.e., fully observed graphical models and latent Gaussian graphical models. We apply sparse optimization to the former and tackle the latter with sparse $+$ low rank decomposition.

\subsubsection{ Sparse optimization for learning  Gaussian graphical models}\label{sec:glasso}
Consider a $p$-dimensional Gaussian random vector $[x_1, . . . ,x_p]^\top$ with a sparse inverse covariance or precision matrix  $\Theta^*$.  Consider the $\ell_1$-regularized maximum likelihood estimator (batch estimate),
\begin{equation}
\label{glasso}
\widehat{\Theta}:= \underset{\Theta \succ 0}{\arg \min}{\lbrace\Tr(\widehat{\Sigma}\Theta)-\log\det\lbrace\Theta\rbrace+\lambda_n\Vert {\Theta}\Vert_1}\rbrace,
\end{equation}
where $\widehat{\Sigma}$ is the empirical covariance matrix for the batch. This is a well-studied method for  recovering the edge structure in a Gaussian graphical model, i.e. the sparsity pattern of $\Theta^*$~\citep{Ravikumar&etal:08Arxiv}. We have that the loss function is strongly convex for all $\Theta$ within a ball\footnote{Let $Q=\lbrace \theta \in \mathbb{R}^n : \alpha I_n \preceq \Theta \beta I_n \rbrace$ then $-\log \det \Theta$ is strongly convex on $Q$ with $\gamma=\frac{1}{\beta^2}$~\citep{d2008first}.}.

However,  the above loss function is not (locally) Lipschitz in general,
since the gradient\footnote{The gradient computation can be expensive since it involves computing the matrix inverse. However, efficient  techniques for computing an approximate inverse can be employed, on lines of~\citep{hsieh2011sparse}.} $\nabla f(x, \Theta)= x x^\top-\Theta^{-1}$ is not bounded in general. Thus, the bounds derived in Theorem~\ref{thm:sparse_const_epoch} do not directly apply here. However, our conditions for recovery are somewhat weaker than local Lipschitz property, and we provide guarantees for this setting under some additional constraints.

Let $\Gamma^*= {\Theta^*}^{-1} \otimes {\Theta^*}^{-1}$ denote the Hessian of log-determinant barrier at true information matrix. Let  $Y_{(j,k)}:=X_j X_k-\mathbb{E}[X-jX_k]$ and note that $\Gamma^*_{(j,k),(l,m)}=\mathbb{E}[Y_{(j,k)Y_{(l,m)}}]$~\citep{Ravikumar&etal:08Arxiv}.  A bound on $\gennorm{\Gamma^*}_\infty$ limits the influence of the edges on each other, and we need this bound for guaranteed convergence. Yet, this bound contributes to a higher order term and  does not show up in the convergence rate.


\begin{corollary} Under Assumptions A1, A2 when
 the initialization radius $R_1$ satisfies $R_1 \leq \frac{0.25}{\Vert \Sigma^*\Vert_\mathbb{F}} $,  under the negative log-likelihood loss function, REASON 1 has the following bound (for dual update step size $\tau=\sqrt{T_0}$)
\begin{align*}
\Vert \bar{\theta}_T-\theta^*\Vert_2^2 \leq c_0 \frac{s}{\gamma^2 T}\cdot \frac{\log d}{k_T}\left[ \log d+\sigma^2\left(w^2+24\log (k_T/\log d)\right) \right]
\end{align*}\end{corollary}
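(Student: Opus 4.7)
The plan is to verify that the hypotheses of Theorem~\ref{thm:sparse_const_epoch}, in the slightly weakened form noted in the remark at the end of Section~\ref{sec:sparse} (where local Lipschitz continuity is replaced by direct bounds on $\|z_k\|_\infty$ and $\|z_{k+1}-z_k\|_1$), continue to hold for the negative log-likelihood loss $f(\Theta,x)=\Tr(xx^\top\Theta)-\log\det\Theta$. Once this verification is complete, the stated bound follows by feeding the resulting $\gamma$ and $\sigma$ into the conclusion of Theorem~\ref{thm:sparse_const_epoch} and tracking constants.

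First I would use the radius hypothesis $R_1\le 0.25/\|\Sigma^*\|_\mathbb{F}$ to confine every iterate $\Theta_k$ to a region on which the log-det barrier is strongly convex with a parameter $\gamma$ that does not degrade across epochs. Writing $\Theta=\Theta^*+\Delta$ with $\|\Delta\|_1\le R_i\le R_1$, the chain $\|\Sigma^*\Delta\|_2\le\|\Sigma^*\|_2\|\Delta\|_\mathbb{F}\le\|\Sigma^*\|_\mathbb{F}\|\Delta\|_1\le 1/4$ makes $I+\Sigma^*\Delta$ invertible via a Neumann expansion of $\Theta^{-1}=(I+\Sigma^*\Delta)^{-1}\Sigma^*$. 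Hence $\Theta^{-1}$ exists and is spectrally bounded in terms of $\|\Sigma^*\|_2$, and since the Hessian of $-\log\det\Theta$ is $\Theta^{-1}\otimes\Theta^{-1}$ we obtain an $R$-LSC bound with $\gamma\gtrsim\|\Sigma^*\|_2^{-2}$ throughout the first epoch. Since the epoch radii $R_i$ only shrink, the same $\gamma$ is inherited by every subsequent epoch through the epoch induction already carried out in the proof of Theorem~\ref{thm:sparse_const_epoch}.

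Second, I would verify Assumption A2 for the gradient noise $e_k=x_kx_k^\top-\Sigma^*$. Entries of $e_k$ are centered sub-exponentials with scale parameter controlled by $\gennorm{\Gamma^*}_\infty$, so a Bernstein tail bound combined with a union bound over the $d^2$ entries delivers $\|e_k\|_\infty\le\sigma$ with $\sigma^2=O(\log d)$. Propagating this estimate through the per-epoch union bound (exactly as in the proof of Theorem~\ref{thm:sparse_const_epoch}) is what introduces the explicit Bernstein constant $24$ appearing inside the corollary statement; the $\gennorm{\Gamma^*}_\infty$ dependence is absorbed into a higher-order term and does not enter the leading rate.

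Third, I would establish the dual-variable bounds that play the role of local Lipschitz continuity. The KKT condition for the $y$-update, $\rho(\theta_{k+1}-y_{k+1})-z_k\in\lambda_i\partial\|y_{k+1}\|_1$, together with the dual rule $z_{k+1}=z_k-\tau(\theta_{k+1}-y_{k+1})$, expresses $(\rho/\tau)(z_k-z_{k+1})-z_k$ as an $\ell_\infty$-bounded subgradient of $\|\cdot\|_1$; with the choice $\tau=\sqrt{T_0}$ this recursion yields $\|z_k\|_\infty=O(\lambda_i)$ and $\|z_{k+1}-z_k\|_1=O(\lambda_i\sqrt{T_0})$ uniformly in $k$, which are precisely the two quantities through which the Lipschitz constant $G$ enters the proof of Theorem~\ref{thm:sparse_const_epoch}. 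Substituting them and chasing the constants through the epoch-level contraction gives the stated rate. The main obstacle is sustaining the factor-of-two per-epoch error contraction that drives Theorem~\ref{thm:sparse_const_epoch} despite (a) the sub-exponential rather than sub-Gaussian tails of $e_k$, which requires the Bernstein term to stay subdominant, and (b) the loss of a uniform Lipschitz constant, which forces the entire trajectory to remain in the well-conditioned region; both obstructions are controlled by the initialization radius hypothesis $R_1\le 0.25/\|\Sigma^*\|_\mathbb{F}$.
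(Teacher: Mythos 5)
Your overall reduction---keep the epoch analysis of Theorem~\ref{thm:sparse_const_epoch} and replace the Lipschitz constant $G$ by direct bounds on $\Vert z_k\Vert_\infty$ and $\Vert z_{k+1}-z_k\Vert_1$---is the right skeleton and is also the paper's. But the third step, which carries all the content, has a genuine gap. You derive the dual bounds from the KKT condition of the $y$-update, which yields only the entrywise bound $\Vert z_k\Vert_\infty=\order(\lambda_i)$; the quantity the epoch analysis actually consumes is the $\ell_1$ norm $\Vert z_{k+1}-z_k\Vert_1$ (equivalently $\Vert z_{T_i}\Vert_1$, which controls both $\Vert z_{T_i}\Vert_2^2\le\Vert z_{T_i}\Vert_1\Vert z_{T_i}\Vert_\infty$ and $\Vert\bar\theta(T_i)-\bar y(T_i)\Vert_1=\Vert z_{T_i}\Vert_1/(\tau T_i)$). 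Passing from your $\ell_\infty$ bound to an $\ell_1$ bound costs a factor of $d$ and destroys the $s\log d$ rate; your claimed $\Vert z_{k+1}-z_k\Vert_1=\order(\lambda_i\sqrt{T_0})$ is asserted, not derived, and I do not see how your recursion produces it dimension-free. The paper instead bounds the dual variable through the optimality condition of the $\theta$-update, $-z_k=\nabla f(\theta_{k+1})+\rho_x(\theta_{k+1}-\theta_k)+\rho(\theta_{k+1}-y_k)$, so everything reduces to bounding $\Vert g_{k+1}-g_k\Vert_1$ and $\Vert g_k\Vert_\infty$ for the log-det gradient $g_k=x_kx_k^\top-\Theta_k^{-1}$. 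The Taylor expansion $\Theta^{-1}={\Theta^*}^{-1}-{\Theta^*}^{-1}\Delta{\Theta^*}^{-1}+\mathcal{R}(\Delta)$ combined with the $\ell_1$-ball constraint $\Vert\Theta_{k+1}-\Theta_k\Vert_1\le 2R_i$ gives the dimension-free bounds $\Vert g_{k+1}-g_k\Vert_1\le\gennorm{\Gamma^*}_\infty\,2R_i$ and $\Vert g_k\Vert_\infty\le\sigma+\Vert\Gamma^*\Vert_\infty\,2R_i$; it is precisely here---not in the noise analysis, as you suggest---that $\gennorm{\Gamma^*}_\infty$ enters and that $G^2$ gets replaced by $\gennorm{\Gamma^*}_\infty R_i(\sigma+\Vert\Gamma^*\Vert_\infty R_i)$. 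The hypothesis $R_1\le 0.25/\Vert\Sigma^*\Vert_\mathbb{F}$ is what makes the remainder $\mathcal{R}(\Delta)$ negligible (via $\Vert\Delta{\Theta^*}^{-1}\Vert_\mathbb{F}\le 1/2$), a step your write-up omits entirely.

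Two smaller points. Your first two steps verify A1 and A2, but the corollary assumes them, so that work is optional; moreover your strong-convexity constant is off: the Hessian $\Theta^{-1}\otimes\Theta^{-1}$ is bounded below by $1/\Vert\Theta\Vert_2^2$, so the relevant constant is $\gamma\gtrsim\lambda_{\min}(\Sigma^*)^2$, not $\Vert\Sigma^*\Vert_2^{-2}$. Also, the constant $24$ in the statement comes from the per-epoch union bound $w_i^2=w^2+24\log i$ already present in the proof of Theorem~\ref{thm:sparse_optimal}, not from a Bernstein inequality for the gradient noise.
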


The proof does not follow directly from Theorem~\ref{thm:sparse_const_epoch}, since it does not utilize Lipschitz property. However, the conditions for Theorem~\ref{thm:sparse_const_epoch} to hold are weaker than (local) Lipschitz property and we utilize it to provide the above result. For proof, see Appendix~\ref{sec:sparse_graph_proof}. Note that in case epoch length is not fixed and depends on the problem parameters, the bound can be improved by a $\log d$ factor.

Comparing to Theorem~\ref{thm:sparse_const_epoch}, the local Lipschitz constant $G^4$ is replaced by $\sigma^2 \gennorm{\Gamma^*}^2$. We have $G=\order(d)$, and thus we can obtain better bounds in the above result, when $\gennorm{\Gamma^*}$ is small and the initialization radius $R_1$ satisfies the above condition.
Intuitively, the initialization condition (constraint on $R_1$) is dependent on the strength of the correlations. For the weak-correlation case, we can initialize with large error compared to the strongly correlated setting.

\subsubsection{Sparse + low rank decomposition for learning latent Gaussian graphical models}
Consider the Bayesian network on $p$-dimensional observed variables as \begin{equation}\label{eqn:bayesian} y = A \,h + B \, y + n,\quad y, n \in \Rbb^p, \, h\in \Rbb^r,\end{equation} as in Figure~\ref{fig:latent} where $h, y$ and $n$ are drawn from a zero-mean multivariate Gaussian distribution. The vectors $h$ and $n$ are independent of one another, and $n \sim \mathcal{N}(0, \sigma_n^2 I)$. Assume that $A$ has full column rank. Without loss of generality, we assume that $A$ has normalized columns, and that $h$ has independent entries~\citep{pitman2012archimedes}. For simplicity, let $h \sim \mathcal{N}(0, \sigma_h^2 I)$ (more generally, its covariance is a diagonal matrix).
Note that the matrix $B=0$ in the previous setting (the previous setting allows  for more general sub-Gaussian distributions, and here, we limit ourselves to the Gaussian distribution).
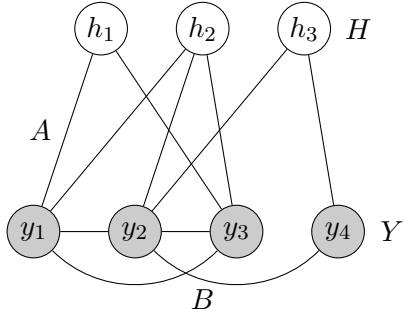
\begin{figure}
\begin{center}
\begin{tikzpicture}
  [
    scale=0.9,
    observed/.style={circle,minimum size=0.7cm,inner
sep=0mm,draw=black,fill=black!20},
    hidden/.style={circle,minimum size=0.7cm,inner sep=0mm,draw=black},
  ]
  \node [hidden,name=h2] at ($(0,0)$) {$h_2$};
    \node [hidden,name=h3] at ($(1.5,0)$) {$h_3$};
      \node [hidden,name=h1] at ($(-1.5,0)$) {$h_1$};
  \node [observed,name=x1] at ($(-2.5,-3)$) {$y_1$};
  \node [observed,name=x2] at ($(-1,-3)$) {$y_2$};
  \node [observed,name=x3] at ($(.5,-3)$) {$y_3$};
    \node [observed,name=x4] at ($(2,-3)$) {$y_4$};
  \node at ($(2.3,0)$) {$H$};
     \node at ($(-2.4,-1.5)$) {$A$};
    \node at ($(2.8,-3)$) {$Y$};
       \node at ($(0,-4)$) {$B$};
  \draw [-] (h1) to (x1);
  \draw [-] (h1) to (x3);
    \draw [-] (h2) to (x1);
  \draw [-] (h2) to (x2);
  \draw [-] (h2) to (x3);
    \draw [-] (h3) to (x2);
  \draw [-] (h3) to (x4);
      \draw [-] (x3) to (x2);
      \draw [-] (x1) to (x2);
  \draw [-] (x2) to[out=-45,in=-135] (x4);
    \draw [-] (x1) to[out=-45,in=-135] (x3);
\end{tikzpicture}
\end{center}
\caption{\small Graphical representation of a latent variable model.}
\label{fig:latent}
\end{figure}
For the model in \eqref{eqn:bayesian}, the precision matrix $M^*$ with respect to the marginal distribution on the observed vector $y$ is given by
\begin{equation}\label{eqn:schur} M^*\coloneqq {\Sigma^*}^{-1}_{y,y}= {\widetilde{M}^*}_{y,y}-{\widetilde{M}^*}_{y,h} ({\widetilde{M}^*}_{h,h})^{-1}
{\widetilde{M}^*}_{h,y},  \end{equation} where $\widetilde{M}^*= {\Sigma^*}^{-1}$, and $\Sigma^*$ is the joint-covariance matrix of vectors $y$ and $h$. It is easy to see that the second term in \eqref{eqn:schur} has rank at most $r$.
The first term in \eqref{eqn:schur} is sparse under some natural constraints, \viz when the matrix $B$ is sparse,  and there are a small number of {\em colliders} among the observed variables $y$. A triplet of variables consisting of two {\em parents} and their {\em child} in a Bayesian network is termed as a collider. The presence of colliders results in additional edges when the Bayesian network on $y$ and $h$ is converted to an undirected graphical model, whose edges are given by the sparsity pattern ${\widetilde{M}^*}_{y,y}$, the first term in \eqref{eqn:schur}.
Such a process is known as {\em moralization}~\citep{Lauritzen:book}, and it involves introducing new edges between the parents in the directed graph (the graph of the Bayesian networks), and removing the directions to obtain an undirected model.
Therefore, when the matrix $B$ is sparse,  and there are a small number of colliders among the observed variables $y$, the resulting sub-matrix ${\widetilde{M}^*}_{y,y}$ is also sparse.

We thus have the precision matrix $M^*$  in \eqref{eqn:schur} as $M^*= S^*+ L^*$, where $S^*$ and $L^*$ are sparse and low rank components. We can find this decomposition via regularized maximum likelihood. The batch estimate is given by~\citet{chandrasekaran2012latent}
\begin{align}
\label{eq:latent_s}
\{ \hat{S}, \hat{L}\}:=&{\arg \min}{\lbrace\Tr (\widehat{\Sigma}_n M)-\log\det M+\lambda_n\Vert {S}\Vert_1+\mu_n \Vert L \Vert_*\rbrace}, \\
&\st \quad M=S+L.
\end{align}
This is a special case of \eqref{eq:latent} with the loss function $f(M)=\Tr (\widehat{\Sigma}_n \, M)-\log\det M$.  In this case, we have the error $E_k=y_k y_k^\top-{M^*}^{-1}$. Since $y = (I-B)^{-1} (Ah +n)$,   we have the following  bound  w.h.p.
 \[ \|E_k\|_2 \leq \mathcal{O}\left(\frac{\sqrt{p}\cdot(\|A\|_2^2 \sigma^2_h + \sigma^2_n)\log( p T)}{\sigma_{\min}(I-B)^2} \right), \quad \forall\, k \leq T,\] where $\sigma_{\min}(\cdot)$ denotes the minimum singular value. The above result is obtained by alluding to \eqref{eqn:cov-conc}.

When $\|A\|_2$  and $\sigma_{\min}(I-B)$ are bounded, we thus achieve  optimal scaling for our proposed online method. As discussed for the previous case, when $A$ is generically drawn, $\|A\|_2$ is bounded. To bound $\sigma_{\min}(I-B)$, a sufficient condition is {\em walk-summability} on the  sub-graph among the observed variables $y$. The class of walk-summable models  is efficient for inference~\citep{malioutov2006walk} and structure learning~\citep{AnimaW}, and they contain the class of attractive models. Thus, it is perhaps not surprising that we obtain efficient guarantees for such models for our online algorithm.

 We need to slightly change the algorithm REASON 2 for  this scenario as follows: for the $M$-update in REASON 2, we add a $\ell_1$ norm constraint on $M$ as $\|M_k - \tilde{S}_i - \tilde{L}_i\|_1^2 \leq \breve{R}^2$, and this can still be computed efficiently, since it involves projection on to the $\ell_1$ norm ball, see  Appendix~\ref{sec:implement1}.   We assume a good initialization $M$ which satisfies $\|M-M^*\|_1^2 \leq \breve{R}^2$.

This ensures that $M_k$ in subsequent steps is non-singular, and that the gradient of the loss function $f$ in \eqref{eq:latent_s}, which involves $M^{-1}_k$,  can be computed. As observed in section~\ref{sec:glasso}  on sparse graphical model selection, the method can be made more efficient by computing approximate matrix inverses~\citep{bigquick}. As observed before, the loss function $f$ satisfies the local strong convexity property, and the guarantees in Theorem~\ref{thm:latent_optimal_fixedepoch} are applicable.

There is another reason for using the $\ell_1$ bound. Note that the loss function is not generally Lipschitz in this case.
However, our conditions for recovery are somewhat weaker than local Lipschitz property, and we provide guarantees for this setting under some additional constraints. Let $\Gamma^*=M^* \otimes M^*$. As explained in Section~\ref{sec:glasso}, a bound on $\gennorm{\Gamma^*}_\infty$ limits the influence on the edges on each other, and we need this bound for guaranteed convergence. Yet, this bound contributes to a higher order term and  does not show up in the convergence rate.

\begin{corollary} Under Assumptions A1, A2, A4, A5, when
 the radius $\breve{R}$ satisfies $\breve{R} \leq \frac{0.25}{\Vert \Sigma^*\Vert_\mathbb{F}} $,  under the negative log-likelihood loss function, REASON 2 has the following bound (for dual update step size $\tau=\sqrt{T_0}$)
\begin{align*}
&\Vert \bar{S}(T)-S^* \Vert_\mathbb{F}^2 + \Vert \bar{L}(T)-L^* \Vert_\mathbb{F}^2 \leq \\&
\frac{c_0(s+r)}{T}\cdot \frac{\log p}{k_T}\left[\log p+\beta^2(p)\sigma^2\left(w^2+\log (k_T/\log p)\right)\right]+\max \lbrace s+r,p \rbrace\frac{\alpha^2}{p}.
\end{align*}\end{corollary}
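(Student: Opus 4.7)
The plan is to reduce the corollary to Theorem~\ref{thm:latent_optimal_fixedepoch} by verifying that all of its prerequisites continue to hold for the log-determinant loss $f(M) = \Tr(\widehat{\Sigma}_n M)-\log\det M$, even though this loss is not (locally) Lipschitz in the usual sense. The obstacle is exactly the one encountered for sparse graphical model selection in Section~\ref{sec:glasso}: $\nabla f(M)=\widehat{\Sigma}_n - M^{-1}$ blows up whenever $M$ approaches singularity, so the constant $G$ in the parameter choice~\eqref{parameter0} cannot be bounded globally. As explicitly noted in the Remark on Lipschitz property and exploited in the sparse graphical model corollary, the actual ingredient that Theorem~\ref{thm:latent_optimal_fixedepoch} needs is not local Lipschitzness per se, but rather bounds on the dual variables $Z_k, U_k$ (specifically on $\|Z_{k+1}-Z_k\|_\infty$ and $\|Z_k\|_2$, and analogous bounds for $U_k$), together with Assumption A6 (LSC).

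\paragraph{Step 1: confining the iterates.} The modification of REASON 2 adds the constraint $\|M_k - \tilde{S}_i - \tilde{L}_i\|_1 \leq \breve{R}$ to the $M$-update. Using the initialization hypothesis $\|M_1 - M^*\|_1 \leq \breve{R}$ and the radius condition $\breve{R} \leq 0.25/\|\Sigma^*\|_\mathbb{F}$, I would first show by a straightforward Neumann-series argument that every iterate $M_k$ satisfies $\|M_k - M^*\|_2 \leq \|M_k-M^*\|_\mathbb{F} \leq \|M_k-M^*\|_1 \leq 2\breve{R}$, and therefore $M_k \succ 0$ with $\|M_k^{-1}\|_2$ bounded by $2\|\Sigma^*\|_2$. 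On this region, the log-det loss is strongly convex (Assumption A6) with $\gamma=\Omega(1/\|\Sigma^*\|^2)$, and the Hessian $\Gamma = M^{-1}\otimes M^{-1}$ satisfies $\gennorm{\Gamma}_\infty = \mathcal{O}(\gennorm{\Gamma^*}_\infty)$. This last bound plays the role that the Lipschitz constant $G$ played in Theorem~\ref{thm:latent_optimal_fixedepoch}: it contributes only to higher-order terms and does not appear in the leading convergence rate, exactly as in Section~\ref{sec:glasso}.

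\paragraph{Step 2: bounding the dual variables.} With $M_k^{-1}$ uniformly bounded in spectral norm, the gradient $\nabla f(M_k) = \widehat{\Sigma}_n - M_k^{-1}$ is uniformly bounded. Mimicking Lemma~\ref{lemma:dualsparse}, the optimality conditions of the $S, L, Y$ subproblems of REASON 2 then give bounds on $\|Z_{k+1}-Z_k\|$ and $\|Z_k\|$ (and the analogous quantities for $U_k$) of the form $\mathcal{O}(\sigma \gennorm{\Gamma^*}_\infty + (\rho_x+\rho)(R_i+\tilde{R}_i))$, which is exactly the substitute for the $G + 2(\rho_x+\rho)R_i$ bound used in the proof of the original theorem. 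The sub-Gaussian gradient-error assumption A2 together with A4 continue to hold because $E_k = y_k y_k^\top - {M^*}^{-1}$ has sub-exponential entries with spectral concentration $\|E_k\|_2=\mathcal{O}(\sqrt{p})$ on the walk-summable/bounded-$\|A\|_2$ models discussed just before the corollary.

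\paragraph{Step 3: reuse of the main induction.} With these substitutions in hand, the epoch-based induction of Theorem~\ref{thm:latent_optimal_fixedepoch} carries over unchanged: using $W=[S;L]$ and the decoupled proximal updates~\eqref{eq:S}–\eqref{eq:L}, and projecting the error matrices onto the on/off support of $S^*$ and onto the range/co-range of $L^*$, one shows that $R_i^2+\tilde{R}_i^2$ halves at the end of each epoch until it hits the floor $2(s+r+(s+r)^2/(p\gamma^2))\alpha^2/p$ enforced by the $\ell_\infty$ constraint of Assumption A5. Summing the geometric series yields the two summands in the stated bound: the statistical error $\mathcal{O}((s+r)(\log p + \beta^2(p)\sigma^2 w^2) \log p / (T\, k_T))$ and the approximation error $\max\{s+r,p\}\alpha^2/p$. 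The $\log p/k_T$ inflation (relative to the variable-epoch result of Theorem~\ref{thm:latent_optimal}) comes, as in step~4 of the REASON 2 proof outline in Section~\ref{latent_outline}, from the fact that with fixed epoch length the shrinkage saturates after $k_T$ epochs but the error does not subsequently blow up. The one delicate point is to keep the Neumann-series guarantee in Step~1 alive across epochs: since $\tilde{S}_{i+1}+\tilde{L}_{i+1}$ is an average of feasible iterates, it still lies within $\breve{R}$ of $M^*$ in $\ell_1$, so the invariant is preserved for the next epoch and the induction closes.
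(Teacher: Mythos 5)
Your proposal follows essentially the same route as the paper's own proof: both replace the local Lipschitz requirement by bounds on the dual variables, use the added $\ell_1$ constraint on $M$ together with the radius condition $\breve{R}\leq 0.25/\Vert\Sigma^*\Vert_\mathbb{F}$ and a Taylor/Neumann-series argument to control $M_k^{-1}$ (equivalently the remainder $\mathcal{R}(\Delta)$), and then rerun the fixed-epoch analysis of Theorem~\ref{thm:latent_optimal_fixedepoch} with $G$ replaced by a quantity of order $\gennorm{\Gamma^*}_\infty\breve{R}(\sigma+\Vert\Gamma^*\Vert_\infty\breve{R})$, which only affects higher-order terms. The minor differences (you bound $\Vert M_k^{-1}\Vert_2$ uniformly and compare the Hessian along the path to $\Gamma^*$, whereas the paper expands around $M^*$ and bounds the remainder; you also make the cross-epoch feasibility invariant explicit) are presentational rather than substantive.
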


The proof does not follow directly from Theorem~\ref{thm:latent_optimal_fixedepoch}, since it does not utilize Lipschitz property. However, the conditions for Theorem~\ref{thm:latent_optimal_fixedepoch} to hold are weaker than (local) Lipschitz property and we utilize it to provide the above result. For proof, see Appendix~\ref{sec:latent_graph_proof}. Note that in case epoch length is not fixed and depends on the problem parameters, the bound can be improved by a $\log p$ factor.

\section{Experiments}
\subsection{REASON 1}
For sparse optimization problem we compare REASON 1 with RADAR and ST-ADMM under the least-squares regression setting. Samples $(x_t,y_t)$ are generated such that $x_t \in \text{Unif}[-B,B]$ and $y_t=\langle \theta^*,x \rangle+n_t$. $\theta^*$ is $s$-sparse with $s=\lceil \log d \rceil$. $n_t \sim \mathcal{N}(0,\eta^2)$. With $\eta^2=0.5$ in all cases. We consider $d=20, 2000, 20000$ and $s=1,3,5$ respectively.
\begin{figure}[t]
\centering {\begin{psfrags}
\psfrag{tttt}[c][t]{\scriptsize  Error}
\psfrag{rr}[c]{\scriptsize  Iteration Number}
\psfrag{t1}[c][t]{\scriptsize  \quad REASON 1, $d_2$}
\psfrag{t2}[c][t]{\scriptsize  REASON 1, $d_1$}
\psfrag{t3}[c][t]{\scriptsize  ST-ADMM, $d_2$}
\psfrag{t4}[c][t]{\scriptsize  ST-ADMM, $d_1$}
\includegraphics[width=1\textwidth]{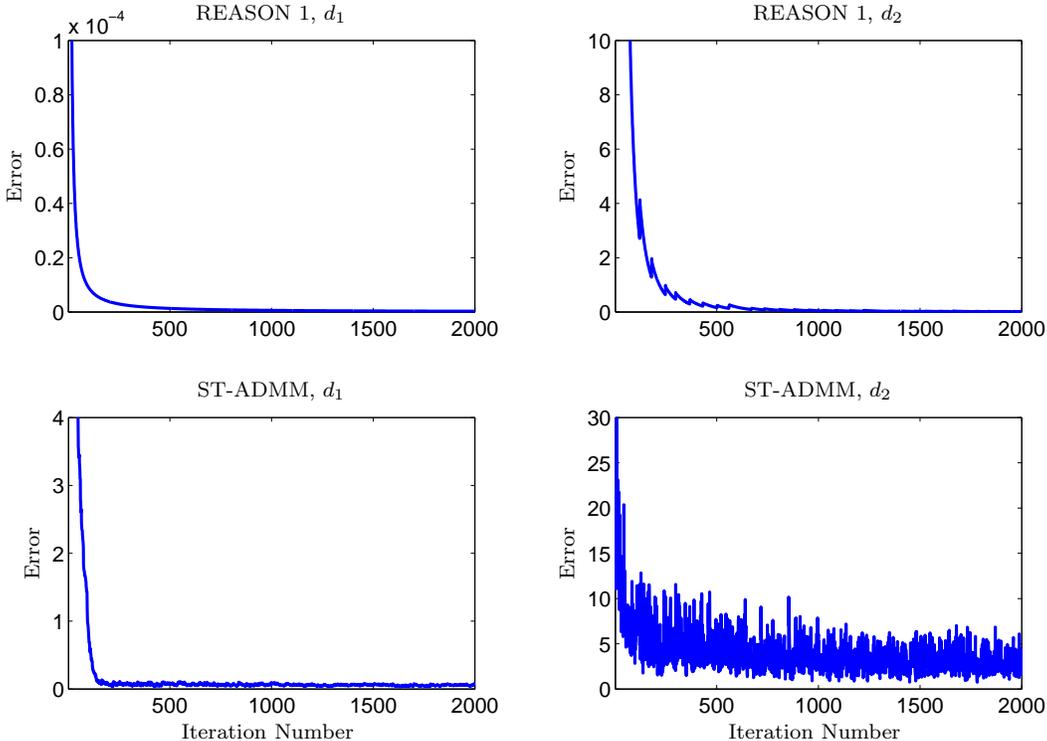}\end{psfrags}}
\vspace{-15pt}
\caption{\small Least square regression, Error$=\frac{\Vert {\theta}-\theta^* \Vert_2}{\Vert \theta^* \Vert_2}$ vs. iteration number, $d_1=20$ and $d_2=20000$.}
\label{fig:fluc}
\end{figure}
The experiments are performed on a $2.5$ GHz Intel Core i5 laptop with 8 GB RAM. See Table~\ref{table:REASONf} for experiment results. It should be noted that { RADAR is provided with information of $\theta^*$ for epoch design and recentering. In addition, both RADAR and REASON 1 have the same initial radius. Nevertheless, REASON 1 reaches better accuracy within the same run time even for small time frames.} 
In addition, we compare relative error ${\Vert {\theta}-\theta^* \Vert_2}/{\Vert \theta^* \Vert_2}$ in REASON 1 and  ST-ADMM in the first epoch.
We observe that
in higher dimension error fluctuations for ADMM increases noticeably (see Figure~\ref{fig:fluc}).
Therefore, projections of REASON 1 play an important role in denoising and obtaining good accuracy. 
\begin{table}[t]
\begin{center}
\begin{tabular}{c|c|c||ccc}
\hline
\hline
Dimension& Run Time (s) &Method &error at 0.02T&error at 0.2T  & error at T \\ \hline\hline
&&ST-ADMM&1.022 &1.002 &0.996 \\
d=20000& T=50&RADAR &0.116 &2.10e-03	&6.26e-05\\
&&REASON 1 &1.5e-03& 2.20e-04	& 1.07e-08 \\ \hline
&&ST-ADMM &0.794 &0.380	& 0.348	\\
d=2000&T=5&RADAR&0.103 &4.80e-03	& 1.53e-04\\
&&REASON 1 &0.001& 2.26e-04 &	1.58e-08	 \\ \hline
&&ST-ADMM &0.212 &0.092	& 0.033\\
d=20&T=0.2& RADAR&0.531&4.70e-03&	4.91e-04\\
&&REASON 1 &0.100 &2.02e-04	&1.09e-08\\ \hline
\end{tabular}
\caption{\em  Least square regression problem, epoch size $T_i=2000$, Error$=\frac{\Vert {\theta}-\theta^* \Vert_2}{\Vert \theta^* \Vert_2}$.}
\label{table:REASONf}
\end{center}
\end{table} 

\begin{table}[t]
\begin{center}

\begin{tabular}{c||ccc|cccc}
\hline
Run Time &\multicolumn{3}{c|}{$T=50$ sec} & \multicolumn{3}{c}{$T=150$ sec} \\
\hline

 Error & $\frac{\Vert M^*-S-L\Vert_\mathbb{F}}{\Vert M^*\Vert_\mathbb{F}} $ & $\frac{\Vert S-S^*\Vert_\mathbb{F}}{\Vert S^*\Vert_\mathbb{F}}$ &$\frac{\Vert L^*-L\Vert_\mathbb{F}}{\Vert L^*\Vert_\mathbb{F}}$  & $\frac{\Vert M^*-S-L\Vert_\mathbb{F}}{\Vert M^*\Vert_\mathbb{F}} $ & $\frac{\Vert S-S^*\Vert_\mathbb{F}}{\Vert S^*\Vert_\mathbb{F}}$ &$\frac{\Vert L^*-L\Vert_\mathbb{F}}{\Vert L^*\Vert_\mathbb{F}}$\\ \hline \hline
\begin{tabular}{c} REASON 2 \\ inexact ALM\end{tabular} &\begin{tabular}{c} 2.20e-03 \\ 5.11e-05 \end{tabular} &\begin{tabular}{c}  0.004 \\ 0.12 \end{tabular} &\begin{tabular}{c} 0.01\\  0.27 \end{tabular}  &\begin{tabular}{c} 5.55e-05 \\ 8.76e-09 \end{tabular} &\begin{tabular}{c}  1.50e-04 \\ 0.12 \end{tabular} &\begin{tabular}{c} 3.25e-04\\  0.27 \end{tabular}
\\ \hline
\end{tabular}
\caption{\em REASON 2 and inexact ALM, matrix decomposition problem. $p=2000$, $\eta^2=0.01$}
\label{table:REASON22w}
\end{center}
\end{table}

\paragraph{Epoch Size}
For fixed- epoch size, if epoch size is designed such that the relative error defined above has shrunk to a stable value, then we move to the next epoch and the algorithm works as expected. If we choose a larger epoch than this value we do not gain much in terms of accuracy at a specific iteration. On the other hand if we use a small epoch size such that the relative error is still noticeable, this delays the error reduction and causes some local irregularities.

\subsection{REASON 2}
We compare REASON 2 with state-of-the-art inexact ALM method for matrix decomposition problem\footnote{{ ALM codes are downloaded from {\url{http://perception.csl.illinois.edu/matrix-rank/home.html}} and REASON 2 code is available at {\url{https://github.com/haniesedghi/REASON2}}.}} In this problem $M$ is the noisy sample the algorithm receives. Since we have direct access to $M$, the $M$-update is eliminated.

Table~\ref{table:REASON22w} shows that  with equal time, inexact ALM reaches smaller $\frac{\Vert M^*-S-L\Vert_\mathbb{F}}{\Vert M^*\Vert_\mathbb{F}} $ error while in fact this does not provide a  good  decomposition. On the other hand, REASON 2 reaches useful individual errors in the same time frame.  Experiments with $\eta^2 \in [0.01,1]$ reveal similar results. This emphasizes the importance of projections in REASON 2.
Further investigation on REASON 2 shows that performing one of the projections (either $\ell_1$ or nuclear norm) suffices to reach this performance. The same precision can be reached using only one of the projections. Addition of the second projection improves the performance marginally. Performing nuclear norm projections are much more expensive since they require SVD. Therefore, it is more efficient to perform the  $\ell_1$ projection.
 Similar experiments on exact ALM shows worse performance than inexact ALM  and are thus omitted.

\section{Conclusion}
In this paper, we consider a modified version of the stochastic ADMM method  for high-dimensional problems. We first analyze the simple setting, where the optimization problem consists of a loss function and a single regularizer, and then extend to  the multi-block setting with multiple regularizers and multiple variables.
For the sparse optimization problem, we showed that we reach the minimax-optimal rate in this case, which implies that our guarantee is unimproveable by any (batch or online) algorithm (up to constant factors).
We then consider the matrix decomposition problem into sparse and low rank components, and propose a modified version of the multi-block ADMM algorithm. Experiments show that for both sparse optimization and matrix decomposition problems, our algorithm outperforms  the state-of-the-art  methods. In particular, we reach higher accuracy with same time complexity.
There are various future problems to consider. One is to provide lower bounds on error for matrix decomposition problem in case of strongly convex loss if possible.~\citet{agarwal2012noisy} do not provide bounds for strongly convex functions.
Another approach can be to extend our method to address nonconvex programs.~\citet{loh2013regularized} and \citet{wang2013optimal} show that if the problem is nonconvex but has additional properties, it can be solved by methods similar to convex loss programs. 
In addition, we can extend our method to coordinate descent methods such as~\citep{roux2012stochastic}.

\subsection*{Acknowledgment}
We acknowledge detailed discussions with Majid Janzamin and thank him for valuable comments on sparse and sparse $+$ low rank recovery. The authors thank Alekh Agarwal for detailed discussions about his work and the minimax bounds. We thank him for pointing out that the condition regarding bounded dual variables is related to  local Lipschitz condition.

A. Anandkumar
is supported in part by Microsoft Faculty Fellowship, NSF Career award CCF-$1254106$, NSF Award CCF-$1219234$, and ARO YIP Award W$911$NF-$13$-$1$-$0084$.

\newpage
\appendix
\section{Guarantees for REASON 1} \label{appendix:thmsparse}
First, we provide guarantees for the theoretical case such that epoch length depends on epoch radius. This provides intuition on how the algorithm is designed. The fixed-epoch algorithm is a special case of this general framework. We first state and prove guarantees for general framework. Next, we leverage these results to prove Theorem~\ref{thm:sparse_const_epoch}.

Let the design parameters be set as
\begin{align}
\label{eq:Ti}
&T_i=C \frac{s^2}{\gamma^2}\left[ \frac{\log d+12\sigma_i^2\log(3/\delta_i)}{R_i^2} \right],\\ \nonumber
&\lambda_i^2=\frac{{\gamma}}{s\sqrt{T_i}}\sqrt{R_i^2 \log d+\frac{ G^2 R_i^2+\rho_x^2 R_i^4}{T_i}+\sigma_i^2R_i^2\log(3/\delta_i)},\\ \nonumber
&\rho \propto \frac{\sqrt{\log d}}{R_i\sqrt{T_i}},\quad \rho_x > 0, \quad \tau=\rho. 
\end{align}

\begin{theorem} 
\label{thm:sparse_optimal} Under assumptions $A1-A3$ and parameter settings~\eqref{eq:Ti}, there exists a constant $c_0 >0$ such that REASON 1 satisfies
 for all $T>k_T$,
\begin{align}
\label{eq:sparse_optimal}
\Vert \bar{\theta}_T-\theta^*\Vert_2^2 \leq c_0 \frac{s}{\gamma^2 T}\left[ e\log d +\sigma^2w^2+\log k_T) \right],
\end{align}
with probability at least $1-6\exp(-w^2/12)$, where  $k_T=\log_2 \frac{\gamma^2R_1^2 T}{s^2( \log d+12\sigma^2\log(\frac{6}{\delta}))}$, and $c_0$ is a universal constant.
\end{theorem}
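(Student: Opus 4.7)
My plan is to establish the bound by induction over epochs, showing that each epoch contracts the squared error by a factor of two. Concretely, I would show that under the parameter choices~\eqref{eq:Ti}, at the end of epoch $i$ the ergodic average $\bar{\theta}(T_i)$ satisfies $\|\bar{\theta}(T_i)-\theta^*\|_2^2 \le c\,R_i^2$ for a universal constant $c$, where $R_i^2 = R_1^2/2^{i-1}$. The base case follows from assuming $\theta^*$ is feasible at initialization, i.e.\ $\|\theta^*-\tilde{\theta}_1\|_1 \le R_1$. For the inductive step I need to guarantee that (i) $\theta^*$ remains feasible in epoch $i+1$ (i.e.\ $\|\theta^*-\tilde{\theta}_{i+1}\|_1 \le R_{i+1}$) and (ii) the per-epoch squared error shrinks from $R_i^2$ to $R_i^2/2$. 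After $k_T$ epochs the radius will have contracted to the target order, yielding the claimed bound once we relate $R_{k_T}^2$ to $T=\sum_i T_i$.

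\textbf{Within-epoch decomposition.} Inside a single epoch I would split
\begin{equation*}
\|\bar{\theta}(T_i)-\theta^*\|_2 \;\le\; \|\bar{\theta}(T_i)-\hat{\theta}_i\|_2 + \|\hat{\theta}_i-\theta^*\|_2,
\end{equation*}
where $\hat{\theta}_i$ is the batch optimum of the regularized empirical problem over the $T_i$ samples seen in the epoch, restricted to the $\ell_1$ ball of radius $R_i$ about $\tilde{\theta}_i$. The first, optimization, term I would control by adapting the inexact stochastic ADMM Lyapunov argument (along the lines of~\citep{BADMM,ouyang2013stochastic}) to the epoch-based scheme. The key steps are to telescope the ADMM identity over the $T_i$ iterations, bound the dual variable sequence $\|z_k\|_\infty,\|z_{k+1}-z_k\|_1 \le G + 2(\rho+\rho_x)R_i$ using the local Lipschitz property (cf.\ the remark following Theorem~\ref{thm:sparse_const_epoch}), and bound the stochastic gradient noise via H\"older,
\begin{equation*}
\langle e_k,\theta_k-\hat{\theta}_i\rangle \;\le\; \|e_k\|_\infty\,\|\theta_k-\hat{\theta}_i\|_1 \;\le\; 2R_i\,\|e_k\|_\infty,
\end{equation*}
where the $\ell_1$ ball constraint makes the right-hand side scale with $R_i$ rather than $d$. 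A sub-Gaussian martingale tail bound driven by Assumption A$2$ then converts $\|e_k\|_\infty$ into the $\sigma^2\log d$ and $\sigma^2 w^2$ terms with the stated probability. The second, statistical, term I would control by the standard regularized $M$-estimator argument: LSC (Assumption A$1$) combined with the condition $\lambda_i \gtrsim \|\nabla \widehat{f}(\theta^*)\|_\infty$ confines the error $\hat{\theta}_i-\theta^*$ to the usual cone and yields $\|\hat{\theta}_i-\theta^*\|_2^2 = O(s\lambda_i^2/\gamma^2)$.

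\textbf{Choice of parameters and summation.} Plugging in $\lambda_i^2$, $\rho$, $\rho_x$, and $\tau$ from~\eqref{eq:Ti}, both the optimization term and the statistical term are each at most $R_i^2/4$ (for $C$ large enough in the definition of $T_i$), which closes the induction and simultaneously certifies $\|\theta^*-\tilde{\theta}_{i+1}\|_1 \le \sqrt{s}\|\theta^*-\tilde{\theta}_{i+1}\|_2 \le R_{i+1}$ by the $s$-sparsity of $\theta^*-\tilde{\theta}_{i+1}$ in the supporting cone. Summing the geometric series $T = \sum_{i=1}^{k_T} T_i$, one finds $T_i \propto 1/R_i^2$ so $T_{k_T} = \Theta(T)$ and $R_{k_T}^2 = \Theta(s(\log d + \sigma^2 w^2)/(\gamma^2 T))$, matching the claim. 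Setting $\delta_i = \delta/k_T$-type and taking a union bound over the $k_T$ epochs produces the extra $\log k_T$ in the rate and the $6\exp(-w^2/12)$ overall failure probability.

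\textbf{Main obstacle.} The most delicate step is the per-epoch ADMM analysis in the presence of the $\ell_1$ projection: standard stochastic ADMM proofs (e.g.~\citep{BADMM}) either lack a hard constraint or require $\rho_x = \Theta(\sqrt{T_i})$, which would destroy the rate. I would thread the projection through the variational-inequality characterization of the $\theta$-update, use $\|\theta_k-\hat{\theta}_i\|_1 \le 2R_i$ (since both iterates lie in the ball about $\tilde{\theta}_i$) wherever an $\ell_1$ norm appears, and verify that a constant $\rho_x$ still produces telescoping terms compatible with the target $R_i^2/2$ decay. Coupling this high-probability, constant-$\rho_x$ ADMM contraction with the sparsity-based feasibility propagation for $\theta^*$ across epochs is the technical heart of the proof.
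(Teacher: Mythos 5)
Your proposal follows essentially the same route as the paper's proof: induction on epochs with error halving and feasibility propagation of $\theta^*$, the decomposition $\|\bar{\theta}(T_i)-\theta^*\|_2 \le \|\bar{\theta}(T_i)-\hat{\theta}_i\|_2 + \|\hat{\theta}_i-\theta^*\|_2$ with the first term handled by a constant-$\rho_x$ Bregman-ADMM telescoping argument using the dual bound $G+2(\rho+\rho_x)R_i$ and H\"older against the $\ell_1$ ball, the second by the LSC/regularized $M$-estimator bound $O(s\lambda_i^2/\gamma^2)$, and a cone-based $\ell_1$-to-$\ell_2$ conversion followed by summing $T=\sum_i T_i$ to extract $k_T$. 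This matches the paper's Proposition~\ref{thm:prop1}, Lemmas~\ref{lemma:A1}--\ref{lemma:A2}, and the epoch-summation argument in Appendix~\ref{sec:theorem1proof}.
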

For Proof outline and detailed proof of Theorem~\ref{thm:sparse_optimal} see Appendix~\ref{sec:outline1} and~\ref{sec:theorem1proof} respectively.

\subsection{Proof outline for Theorem \ref{thm:sparse_optimal}} \label{sec:outline1}
The foundation block for this proof is Proposition~\ref{thm:prop1}. 
\begin{proposition}
\label{thm:prop1}
Suppose $f$ satisfies Assumptions $A1, A2$ with parameters $\gamma$ and $\sigma_i$ respectively and assume that $\Vert \theta^*-\tilde{\theta}_i \Vert_1^2 \leq R_i^2$. 
We apply the updates in REASON 1 with parameters as in \eqref{eq:Ti}. Then, there exists a universal constant $c$ such that for any radius $R_i$ 
\begin{align}
\sublabon{equation}
\label{eq:prop1aval}
 & f(\bar{\theta}(T_i))-f(\hat{\theta}_i)+\lambda_i \Vert \bar{y}(T_i) \Vert_1-\lambda_i \Vert \hat{\theta}_i \Vert_1
\leq  \frac{ R_i \sqrt{ \log d}}{\sqrt{T_i}}+\frac{ G R_i}{T_i}+\frac{\rho_x R_i^2}{{T_i}}\\ \nonumber &+\frac{R_i \sigma_i}{\sqrt{T_i}} \sqrt{12 \log (3/\delta_i)}, \\
\label{eq:prop1dovom}
&\Vert \bar{\theta}(T_i)-\theta^* \Vert^2_1 \leq \frac{c'}{\sqrt{C}}R_i^2.
\end{align}
\sublaboff{equation}
where $\rho_0=\rho_x+\rho$ and both bounds are valid with probability at least $1-\delta_i$.
\end{proposition}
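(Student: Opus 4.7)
The proposition bundles two per-epoch results: inequality \eqref{eq:prop1aval} is a function-value convergence bound on the averaged ADMM iterate $(\bar\theta(T_i),\bar y(T_i))$ against the batch optimum $\hat\theta_i$, while \eqref{eq:prop1dovom} upgrades this to an $\ell_1$ distance to the true $\theta^*$. The plan is to derive \eqref{eq:prop1aval} first by the usual ADMM telescoping machinery, adapted to our stochastic, $\ell_1$-projected setting, and then deduce \eqref{eq:prop1dovom} by feeding \eqref{eq:prop1aval} into local strong convexity and a sparsity-restricted norm comparison. Throughout, I will use that both $\hat\theta_i$ and $\theta^*$ are feasible for the epoch's $\ell_1$ ball centered at $\tilde\theta_i$, which is where the hypothesis $\|\theta^*-\tilde\theta_i\|_1^2\leq R_i^2$ is used.

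\paragraph{Per-iteration variational inequality and telescoping.} For fixed $k$ I would write the first-order optimality conditions of the $\theta_{k+1}$ and $y_{k+1}$ subproblems in \eqref{eq:theta_update} against the comparator $(\hat\theta_i,\hat\theta_i)$. Combining these with convexity of $f$ and the dual update $z_{k+1}=z_k-\tau(\theta_{k+1}-y_{k+1})$ produces a one-step inequality whose quadratic pieces telescope in $\tfrac{\rho}{2}\|y_k-\hat\theta_i\|_2^2$, $\tfrac{\rho_x}{2}\|\theta_k-\hat\theta_i\|_2^2$, and $\tfrac{1}{2\tau}\|z_k\|_2^2$. Summing $k=0,\dots,T_i-1$ leaves only boundary contributions: $\|y_0-\hat\theta_i\|_2^2\leq \|y_0-\hat\theta_i\|_1^2\leq 4R_i^2$ (and similarly for $\theta_0$), together with the dual-variable bound $\max\{\|z_k\|_\infty,\|z_{k+1}-z_k\|_1\}\leq G+2(\rho_x+\rho)R_i$ highlighted in the remark after the theorem. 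Dividing by $T_i$ and inserting $\rho\propto \sqrt{\log d}/(R_i\sqrt{T_i})$ from \eqref{eq:Ti} produces the $\tfrac{R_i\sqrt{\log d}}{\sqrt{T_i}}$, $\tfrac{G R_i}{T_i}$, and $\tfrac{\rho_x R_i^2}{T_i}$ pieces of \eqref{eq:prop1aval}.

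\paragraph{Stochastic gradient error.} The only randomness enters through the inner product $\sum_k\langle e_k,\theta_k-\hat\theta_i\rangle$, where $e_k=\nabla f(\theta_k,x_k)-\Ebb[\nabla f(\theta_k,x_k)]$ is a sub-Gaussian martingale difference by A2. The $\ell_1$ projection in the $\theta$-update now pays off decisively: by H\"older,
\begin{equation*}
\Big|\sum_{k=0}^{T_i-1}\langle e_k,\theta_k-\hat\theta_i\rangle\Big|\;\leq\;\sum_k\|e_k\|_\infty\,\|\theta_k-\hat\theta_i\|_1\;\leq\;2R_i\sum_k\|e_k\|_\infty,
\end{equation*}
and a standard sub-Gaussian concentration argument applied to the scalar sequence $\|e_k\|_\infty$ yields $\sum_k\|e_k\|_\infty\leq \sigma_i\sqrt{12\,T_i\log(3/\delta_i)}$ with probability at least $1-\delta_i$, producing the final term on the right of \eqref{eq:prop1aval}. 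Jensen's inequality applied to $f$ and $\|\cdot\|_1$ transfers the averaged bound onto $\bar\theta(T_i)$ and $\bar y(T_i)$.

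\paragraph{From function gap to $\ell_1$ distance, and main obstacle.} For \eqref{eq:prop1dovom}, local strong convexity at $\hat\theta_i$ yields $\|\bar\theta(T_i)-\hat\theta_i\|_2^2\leq (2/\gamma)\cdot[\text{RHS of \eqref{eq:prop1aval}}]$, which under the schedule \eqref{eq:Ti} evaluates to $O(R_i^2/\sqrt{C})$. A support-splitting argument of the Negahban--Wainwright type shows that the error vector lies in a cone where the off-support $\ell_1$ mass is dominated by the on-support mass, so $\|\bar\theta(T_i)-\hat\theta_i\|_1^2\lesssim s\|\bar\theta(T_i)-\hat\theta_i\|_2^2\lesssim sR_i^2/\sqrt{C}$; the factor $s$ is absorbed into the constant $C$ because $T_i$ in \eqref{eq:Ti} already carries an $s^2$. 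A triangle inequality with $\|\hat\theta_i-\theta^*\|_1\leq R_i$ closes \eqref{eq:prop1dovom}. The principal obstacle is verifying that $\rho_x$ can remain a constant rather than growing like $\sqrt{T_i}$ as in~\citet{BADMM}: this hinges on the a-priori $\ell_1$ bound on the primal iterates, which caps the dual variable via the optimality condition of the $y$-update and so avoids inflating $\rho_x$ to absorb otherwise unbounded boundary terms.
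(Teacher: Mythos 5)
Your overall architecture matches the paper's: an ADMM telescoping argument plus a bound on the dual variable for \eqref{eq:prop1aval}, then local strong convexity, a support-splitting argument, and a triangle inequality for \eqref{eq:prop1dovom}. However, there are two genuine gaps. The first is in your treatment of the stochastic error term. You apply H\"older pointwise to get $|\sum_k\langle e_k,\theta_k-\hat\theta_i\rangle|\le 2R_i\sum_k\|e_k\|_\infty$ and then claim $\sum_k\|e_k\|_\infty\le\sigma_i\sqrt{12T_i\log(3/\delta_i)}$ by sub-Gaussian concentration. That bound is false: under Assumption A2 each $\|e_k\|_\infty$ is a \emph{non-negative} random variable of typical size $\sigma_i$, so the sum concentrates around its mean of order $T_i\sigma_i$ --- there is no cancellation among non-negative summands, and a sum of $T_i$ of them cannot be $O(\sqrt{T_i})$. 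After dividing by $T_i$ you are left with a non-decaying term of order $R_i\sigma_i$, which destroys the $1/\sqrt{T_i}$ rate in \eqref{eq:prop1aval}. The paper instead keeps the inner products intact and invokes Lemma 7 of \citet{AgarwalNW12}: $\langle e_k,\hat\theta_i-\theta_k\rangle$ is a martingale difference sequence (since $\theta_k$ is measurable with respect to the past and $e_k$ is conditionally centered), each increment sub-Gaussian with parameter $O(R_i\sigma_i)$ by exactly the H\"older bound you wrote; the $\sqrt{T_i}$ saving comes from sign cancellation across iterations, which your early application of H\"older throws away.

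The second gap is in closing \eqref{eq:prop1dovom}. You finish with a triangle inequality using only $\|\hat\theta_i-\theta^*\|_1\le R_i$ (feasibility), but then $\|\bar\theta(T_i)-\theta^*\|_1^2\le 2\|\bar\theta(T_i)-\hat\theta_i\|_1^2+2\|\hat\theta_i-\theta^*\|_1^2$ is at least $2R_i^2$, which cannot yield the contraction $\frac{c'}{\sqrt{C}}R_i^2$ needed for the epoch-halving induction. The paper requires the batch statistical bound of its Lemma~\ref{lemma:A1}, namely $\|\hat\theta_i-\theta^*\|_1\le 8s\lambda_i/\gamma$, which under the parameter choice \eqref{eq:Ti} is $O(R_i/C^{1/4})$ so that its square is $O(R_i^2/\sqrt{C})$; this is where the regularization-parameter design $\lambda_i\gtrsim$ (noise level) actually enters. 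Relatedly, your cone condition $\|\bar\theta(T_i)-\hat\theta_i\|_1^2\lesssim s\|\bar\theta(T_i)-\hat\theta_i\|_2^2$ cannot hold exactly, since $\bar\theta(T_i)$ is not the minimizer of the regularized objective: the paper's version (its Equation \eqref{eq:A63}) carries additive slack terms proportional to the optimization error divided by $\lambda_i$, and it is the balancing of these slack terms against $s^2\lambda_i^2/\gamma^2$ that dictates the choice of $\lambda_i$. Finally, note that the scaling $\rho\propto\sqrt{\log d}/(R_i\sqrt{T_i})$ you quote would not produce the $R_i\sqrt{\log d}/\sqrt{T_i}$ term; the proof of the paper's Lemma~\ref{thm:epoch_avg_opt} uses $\rho=\sqrt{T_i\log d}/R_i$, consistent with \eqref{eq:T0}.
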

Note that our proof for epoch optimum improves proof of~\citep{BADMM} with respect to $\rho_x$. For details, see Section~\ref{sec:epochopt1}.

In order to prove Proposition~\ref{thm:prop1}, we need to prove some more lemmas.

To move forward from here please note the following notations: $\Delta_i=\hat{\theta}_i-\theta^*$ and
 $\hat{\Delta}(T_i)=\bar{\theta}_i-\hat{\theta}_i$.
 \begin{lemma}
 \label{lemma:A1}
 At epoch $i$ assume that $\Vert \theta^*-\tilde{\theta}_i \Vert_1 \leq R_i$. Then the error $ {\Delta}_i $ satisfies the bounds
\begin{align}
\sublabon{equation}
\label{eq:lemma1a_aval}
&\Vert \hat{\theta}_i-\theta^* \Vert_2 \leq \frac{4}{{\gamma}}\sqrt{s}\lambda_i,\\
\label{eq:lemma1a_dovom}
&\Vert \hat{\theta}_i-\theta^* \Vert_1 \leq \frac{8}{{\gamma}}s\lambda_i.
\end{align}
\sublaboff{equation}
\end{lemma}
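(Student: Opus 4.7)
\textbf{Proof proposal for Lemma \ref{lemma:A1}.} The plan is to follow the standard decomposable-regularizer recipe (in the spirit of Negahban et al., 2012), adapted to the batch optimum over epoch $i$. Let $\hat{\theta}_i$ denote the batch minimizer of $\widehat{f}(\theta) + \lambda_i \|\theta\|_1$ over $\{\theta : \|\theta - \tilde{\theta}_i\|_1 \le R_i\}$; by hypothesis $\theta^*$ is feasible. Write $\Delta_i := \hat{\theta}_i - \theta^*$ and let $S \subseteq [d]$ be the support of $\theta^*$, $|S| \le s$. The first step is the basic inequality obtained from optimality of $\hat{\theta}_i$: $\widehat{f}(\hat{\theta}_i) - \widehat{f}(\theta^*) \le \lambda_i(\|\theta^*\|_1 - \|\hat{\theta}_i\|_1)$. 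Splitting $\Delta_i = (\Delta_i)_S + (\Delta_i)_{S^c}$ and using the triangle inequality $\|\theta^*\|_1 - \|\hat{\theta}_i\|_1 \le \|(\Delta_i)_S\|_1 - \|(\Delta_i)_{S^c}\|_1$, I obtain the first key inequality.

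The second step is to control the linearization error. By Assumption A1 (LSC), applied on the ball of radius $R_i$ that contains both $\theta^*$ and $\hat{\theta}_i$, we have $\widehat{f}(\hat{\theta}_i) - \widehat{f}(\theta^*) \ge \langle \nabla \widehat{f}(\theta^*), \Delta_i\rangle + \tfrac{\gamma}{2}\|\Delta_i\|_2^2$. The linear term is handled via Hölder: $|\langle \nabla\widehat{f}(\theta^*), \Delta_i\rangle| \le \|\nabla \widehat{f}(\theta^*)\|_\infty \|\Delta_i\|_1$. By Assumption A2 and the prescribed value of $\lambda_i$ in \eqref{eq:Ti}, the gradient noise is dominated by $\lambda_i/2$ with probability at least $1 - \delta_i$, so $|\langle \nabla \widehat{f}(\theta^*), \Delta_i\rangle| \le (\lambda_i/2)\|\Delta_i\|_1$.

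Combining these yields $\tfrac{\gamma}{2}\|\Delta_i\|_2^2 + \lambda_i\|(\Delta_i)_{S^c}\|_1 \le \lambda_i\|(\Delta_i)_S\|_1 + (\lambda_i/2)(\|(\Delta_i)_S\|_1 + \|(\Delta_i)_{S^c}\|_1)$, which rearranges to the cone condition $\|(\Delta_i)_{S^c}\|_1 \le 3\|(\Delta_i)_S\|_1$. Hence $\|\Delta_i\|_1 \le 4\|(\Delta_i)_S\|_1 \le 4\sqrt{s}\|(\Delta_i)_S\|_2 \le 4\sqrt{s}\|\Delta_i\|_2$. Plugging back into the previous bound gives $\tfrac{\gamma}{2}\|\Delta_i\|_2^2 \le \tfrac{3\lambda_i}{2}\|(\Delta_i)_S\|_1 \le \tfrac{3\lambda_i\sqrt{s}}{2}\|\Delta_i\|_2$, so $\|\Delta_i\|_2 \le (3\sqrt{s}\lambda_i)/\gamma \le (4/\gamma)\sqrt{s}\lambda_i$, which is \eqref{eq:lemma1a_aval}. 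The $\ell_1$ bound \eqref{eq:lemma1a_dovom} then follows by $\|\Delta_i\|_1 \le 4\sqrt{s}\|\Delta_i\|_2 \le (16/\gamma) s\lambda_i$; a slightly tighter accounting (using $\|\Delta_i\|_1 \le 4\sqrt{s}\|(\Delta_i)_S\|_2$ and substituting the sharper $\ell_2$ estimate $\|\Delta_i\|_2 \le (2\sqrt{s}\lambda_i)/\gamma$ that drops out if one carries the constants carefully) yields the stated $8/\gamma$.

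The main obstacle is the noise-concentration step: one must verify that the event $\|\nabla \widehat{f}(\theta^*)\|_\infty \le \lambda_i/2$ holds with probability at least $1-\delta_i$ under Assumption A2, taking a union bound across the $d$ coordinates and exploiting the sub-Gaussian tail to match the logarithmic term built into the choice of $\lambda_i$ in \eqref{eq:Ti}. All remaining manipulations are routine inequalities. Everything is carried out on the $\ell_1$-ball of radius $R_i$ around $\tilde{\theta}_i$, where LSC is assumed to hold, so no additional localization argument is needed within a single epoch.
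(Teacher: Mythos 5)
Your skeleton --- basic inequality from optimality of $\hat{\theta}_i$, the LSC lower bound, the cone condition $\Vert (\Delta_i)_{S^c}\Vert_1 \leq 3\Vert (\Delta_i)_S\Vert_1$, and Cauchy--Schwarz on the support --- is exactly the recipe behind the paper's proof, which consists of citing Lemma~1 of \citep{AgarwalNW12} and noting that the approximate-sparsity term $\Vert \theta^*_{S^c}\Vert_1$ vanishes under exact sparsity. The one place where you genuinely diverge is the treatment of the gradient term, and that is where a gap opens up. In the paper's analysis $f$ is the \emph{population} objective: $\hat{\theta}_i$ minimizes $f(\cdot)+\lambda_i\Vert\cdot\Vert_1$ over the epoch ball and $\theta^*$ minimizes $f$ itself (this is used explicitly in the proof of \eqref{eq:prop1dovom}, where $f(\bar{\theta}(T_i))-f(\theta^*)\geq 0$ is invoked). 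Hence $\langle \nabla f(\theta^*),\Delta_i\rangle \geq 0$ by first-order optimality of $\theta^*$, the linear term is discarded for free, and Lemma~\ref{lemma:A1} is a purely deterministic statement with no condition on $\lambda_i$ --- which is consistent with how it is stated (no probability qualifier) and how it is used (the per-epoch failure probability in the proof of Theorem~\ref{thm:sparse_optimal} is charged entirely to Proposition~\ref{thm:prop1}). You instead take $\hat{\theta}_i$ to be the empirical minimizer and require the event $\Vert\nabla\widehat{f}(\theta^*)\Vert_\infty\leq\lambda_i/2$. That event is not something the paper's parameter choice is designed to deliver: the $\lambda_i$ in \eqref{eq:Ti} is of order $\gamma R_i/s$ up to logarithmic factors, chosen to balance the terms in the $\ell_1$-error bound of Proposition~\ref{thm:prop1}, and whether it dominates $2\Vert\nabla\widehat{f}(\theta^*)\Vert_\infty$ depends on unverified constants (both quantities scale like $\gamma R_i/s$ times noise factors). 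So the step you flag as ``the main obstacle'' is not merely routine bookkeeping; it is a condition that may fail, and it is also unnecessary once $\hat{\theta}_i$ is read as the population-risk minimizer.

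A smaller issue: with your $\lambda_i/2$ noise allowance, the chain $\Vert\Delta_i\Vert_1\leq 4\sqrt{s}\Vert\Delta_i\Vert_2\leq 12 s\lambda_i/\gamma$ overshoots the stated constant in \eqref{eq:lemma1a_dovom}; the ``sharper $\ell_2$ estimate'' $\Vert\Delta_i\Vert_2\leq 2\sqrt{s}\lambda_i/\gamma$ that you invoke to recover $8/\gamma$ is precisely what the population-risk argument gives (where the cone constant is $1$ rather than $3$), not what your noisy version produces. If you drop the empirical-gradient step and use $\langle\nabla f(\theta^*),\Delta_i\rangle\geq 0$, everything closes: $\frac{\gamma}{2}\Vert\Delta_i\Vert_2^2\leq \lambda_i\Vert(\Delta_i)_S\Vert_1\leq\lambda_i\sqrt{s}\Vert\Delta_i\Vert_2$ gives the $\ell_2$ bound with room to spare, and $\Vert(\Delta_i)_{S^c}\Vert_1\leq\Vert(\Delta_i)_S\Vert_1$ gives $\Vert\Delta_i\Vert_1\leq 2\sqrt{s}\Vert\Delta_i\Vert_2\leq 4s\lambda_i/\gamma\leq 8s\lambda_i/\gamma$, deterministically.
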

\begin{lemma}
\label{lemma:A2}
Under the conditions of Proposition~\ref{thm:prop1} and with parameter settings \eqref{eq:Ti} , we have
\begin{align*}
\Vert \hat{\Delta}(T_i) \Vert_2^2 \leq \frac{c'}{\sqrt{C}}\frac{1}{s} R_i^2,
\end{align*}
 with probability at least $1-\delta_i$.
 \end{lemma}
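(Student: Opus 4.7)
\textbf{Proof plan for Lemma \ref{lemma:A2}.}
The goal is to control the online-vs-batch gap $\hat{\Delta}(T_i):=\bar{\theta}(T_i)-\hat{\theta}_i$ within one epoch, and the plan is to reduce this Euclidean error to a function-value gap that has already been controlled by Proposition \ref{thm:prop1}(a). Both $\bar{\theta}(T_i)$ and $\hat{\theta}_i$ lie in the $\ell_1$-ball $\{\theta:\|\theta-\tilde{\theta}_i\|_1\le R_i\}$ (by construction of REASON~1 for the former, and by Lemma~\ref{lemma:A1} together with the hypothesis $\|\theta^*-\tilde{\theta}_i\|_1\le R_i$ for the latter), so Assumption A1 applies to the pair $(\bar{\theta}(T_i),\hat{\theta}_i)$ and yields
\begin{equation*}
\tfrac{\gamma}{2}\|\hat{\Delta}(T_i)\|_2^{2}\ \le\ f(\bar{\theta}(T_i))-f(\hat{\theta}_i)-\langle\nabla f(\hat{\theta}_i),\,\hat{\Delta}(T_i)\rangle.
\end{equation*}

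The next step is to replace the inner product by an $\ell_1$-gap using first-order optimality of $\hat{\theta}_i$. Since $\hat{\theta}_i$ minimizes the regularized batch objective, there is $\xi\in\partial\|\hat{\theta}_i\|_1$ such that $\langle \nabla f(\hat{\theta}_i)+\lambda_i\xi,\theta-\hat{\theta}_i\rangle\ge 0$ for every feasible $\theta$; taking $\theta=\bar{\theta}(T_i)$ and using $\langle\xi,\hat{\Delta}(T_i)\rangle\le\|\bar{\theta}(T_i)\|_1-\|\hat{\theta}_i\|_1$ gives $-\langle\nabla f(\hat{\theta}_i),\hat{\Delta}(T_i)\rangle\le\lambda_i(\|\bar{\theta}(T_i)\|_1-\|\hat{\theta}_i\|_1)$. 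Then I would bridge $\bar{\theta}(T_i)$ and $\bar{y}(T_i)$ through the averaged primal feasibility residual (which telescopes via the dual update $z_{k+1}=z_k-\tau(\theta_{k+1}-y_{k+1})$ and is therefore $\order(1/T_i)$), so that
\begin{equation*}
\tfrac{\gamma}{2}\|\hat{\Delta}(T_i)\|_2^{2}\ \le\ \bigl[f(\bar{\theta}(T_i))-f(\hat{\theta}_i)+\lambda_i(\|\bar{y}(T_i)\|_1-\|\hat{\theta}_i\|_1)\bigr]+\lambda_i\,\|\bar{\theta}(T_i)-\bar{y}(T_i)\|_1,
\end{equation*}
whose bracketed piece is exactly the quantity bounded in Proposition~\ref{thm:prop1}(a).

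Plugging in the four terms from \eqref{eq:prop1aval} and substituting the choices \eqref{eq:Ti} of $T_i$, $\lambda_i$, $\rho$ finishes the calculation. A short computation shows $R_i/\sqrt{T_i}\asymp \gamma R_i^{2}/\bigl(s\sqrt{C(\log d+\sigma_i^{2}\log(3/\delta_i))}\bigr)$, so each of the terms $R_i\sqrt{\log d}/\sqrt{T_i}$, $GR_i/T_i$, $\rho_x R_i^{2}/T_i$, and $R_i\sigma_i\sqrt{\log(3/\delta_i)}/\sqrt{T_i}$ is $\order(\gamma R_i^{2}/(s\sqrt{C}))$; dividing by $\gamma/2$ gives the targeted order $R_i^{2}/(s\sqrt{C})$.

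The main obstacle, and the only genuinely delicate step, is preventing the correction $\lambda_i(\|\bar{\theta}(T_i)\|_1-\|\hat{\theta}_i\|_1)$ from injecting an extra factor of $s$. Here I would decompose $\hat{\Delta}(T_i)=\Pi_S(\hat{\Delta}(T_i))+\Pi_{S^c}(\hat{\Delta}(T_i))$ relative to the support $S$ of $\hat{\theta}_i$ (effectively $s$-sparse by the same argument as Lemma~\ref{lemma:A1}); the off-support part enters with a favorable sign from the triangle inequality $\|\bar{\theta}(T_i)\|_1-\|\hat{\theta}_i\|_1\ge\|\Pi_{S^c}\hat{\Delta}(T_i)\|_1-\|\Pi_S\hat{\Delta}(T_i)\|_1$ and can be absorbed, while the on-support part satisfies $\|\Pi_S\hat{\Delta}(T_i)\|_1\le\sqrt{s}\,\|\hat{\Delta}(T_i)\|_2$. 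Applying Young's inequality $\lambda_i\sqrt{s}\,\|\hat{\Delta}(T_i)\|_2\le\tfrac{\gamma}{4}\|\hat{\Delta}(T_i)\|_2^{2}+\order(\lambda_i^{2}s/\gamma)$ moves the linear term to the LHS and leaves an $\order(\lambda_i^{2}s/\gamma)$ residual, which the choice of $\lambda_i^{2}$ in \eqref{eq:Ti} calibrates to the same $R_i^{2}/(s\sqrt{C})$ order. Combining everything and using the high-probability event from Proposition~\ref{thm:prop1}(a) (valid with probability at least $1-\delta_i$) delivers the claim.
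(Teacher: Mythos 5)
Your proposal is correct and follows essentially the same route as the paper: local strong convexity of the regularized objective (equivalently, LSC plus first-order optimality of $\hat{\theta}_i$) reduces $\tfrac{\gamma}{2}\|\hat{\Delta}(T_i)\|_2^2$ to the function-value gap $f(\bar{\theta}(T_i))-f(\hat{\theta}_i)+\lambda_i(\|\bar{y}(T_i)\|_1-\|\hat{\theta}_i\|_1)$, which is then bounded by Proposition~\ref{thm:prop1}(a) and the parameter choices in \eqref{eq:Ti}; your explicit treatment of the $\bar{\theta}$-versus-$\bar{y}$ residual is a detail the paper glosses over. Your final paragraph on support decomposition and Young's inequality is superfluous (the term it targets is already inside the bracket controlled by Proposition~\ref{thm:prop1}(a)), but it does not affect the validity of the argument.
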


\section{Proof of Theorem~\ref{thm:sparse_optimal} } \label{sec:theorem1proof}
The first step is to ensure that $\Vert \theta^*-\tilde{\theta}_i\Vert \leq R_i$ holds at each epoch so that Proposition~\ref{thm:prop1} can be applied in a recursive manner. We prove this by induction on the epoch index. By construction, this bound holds at the first epoch. Assume that it holds for epoch $i$. Recall that $T_i$ is defined by~\eqref{eq:Ti} where $C \geq 1$ is a constant we can choose. By substituting this $T_i$ in inequality ~\eqref{eq:prop1dovom}, the simplified bound~\eqref{eq:prop1dovom} further yields
\begin{align*}
\Vert \bar{\theta}(T_i)-\theta^*\Vert_1^2 \leq  \frac{c'}{\sqrt{C}}R_i^2.
\end{align*}
Thus, by choosing $C$ sufficiently large, we can ensure that $ \Vert \bar{\theta}(T_i)-\theta^*\Vert_1^2 \leq  R_i^2/2:=R_{i+1}^2$.
Consequently, if $\theta^*$ is feasible at epoch $i$, it stays feasible at epoch $i+1$. Hence, by induction we are guaranteed the feasibility of $\theta^*$ throughout the run of algorithm.

As a result, Lemma~\ref{lemma:A2} applies and we find that
\begin{align}
\label{eq:A42}
\Vert \hat{\Delta}(T_i) \Vert_2^2 \leq \frac{c}{s}R_i^2.
\end{align}
We have now bounded $\hat{\Delta}(T_i)=\bar{\theta}(T_i)-\hat{\theta}_i$ and Lemma \ref{lemma:A1} provides a bound on $\Delta_i=\hat{\theta}_i-\theta^*$, such that the error $\Delta^*(T_i)=\bar{\theta}(T_i)-\theta^*$ can be controlled by triangle inequality. In particular, by combining \eqref{eq:lemma1a_aval} with \eqref{eq:A42}, we get
\begin{align*}
\Vert \Delta^*(T_i)\Vert_2^2 \leq c \lbrace\frac{1}{s}R_i^2+\frac{16}{s}R_i^2 \rbrace,
\end{align*}
i.e.
\begin{align}
\label{eq:A43}
\Vert \Delta^*(T_i)\Vert_2^2 \leq c\frac{R_1^22^{-(i-1)}}{s}.
\end{align}
The bound holds with probability at least $1-3\exp(-w_i^2/12)$. Recall that $R_i^2=R_1^22^{-(i-1)}$.
Since $w_i^2=w^2+24\log i$, we can apply union bound to simplify the error probability as $1-6\exp(-w^2/12)$. Throughout this report we use $\delta_i=3\exp(-w_i^2/12)$ and $\delta=6\exp(-w^2/12)$ to simplify the equations.

To complete the proof we need to convert the error bound \eqref{eq:A43} from its dependence on the number of epochs $k_T$ to the number of iterations needed to complete $k_T$ epochs, i.e. $T(K)=\sum_{i=1}^k T_i$. Note that here we use $T_i$ from~\eqref{eq:Ticomplete}, to show that when considering the dominant terms, the definition in~\eqref{eq:Ti} suffices. Here you can see how negligible terms are ignored.
\begin{align*}
T(k)&=\sum_{i=1}^k C \left[\frac{s^2}{{\gamma^2}}\left[ \frac{\log d+12\sigma_i^2\log(3/\delta_i)}{R_i^2} \right]+\frac{s}{\gamma}\frac{G}{R_i}+\frac{s}{\gamma}\rho_x\right]\\ &=
C\sum_{i=1}^k  \left[\frac{s^2\lbrace \log d+\gamma/sG+\sigma^2(w^2+24\log k) \rbrace 2^{i-1}}{\gamma ^2 R_1^2} +\frac{s G}{\gamma R_1}\sqrt{2}^{i-1}+\frac{s}{\gamma}\rho_x\right]. 
\end{align*}
{Hence},
\begin{align*}
T(k) &\leq C  \left[\frac{s^2}{\gamma ^2 R_1^2}\lbrace \log d+\sigma^2(w^2+24\log k) \rbrace 2^k+\frac{s}{\gamma R_1}G\sqrt{2}^{k}+\frac{s}{\gamma}\rho_x \right].
\end{align*}
$T(k) \leq S(k)$, therefore $k_T \geq S^{-1}(T)$.
\begin{align*}
S(k)&=C  \left[\frac{s^2}{\gamma ^2 R_1^2}\lbrace \log d+\sigma^2(w^2+24\log k) \rbrace 2^k+\frac{s}{\gamma R_1}G\sqrt{2}^{k}+\frac{s}{\gamma}\rho_x\right]. 
\end{align*}
Ignoring the dominated terms and using a first order approximation for $\log (a+b)$,
\begin{align*}
&\log (T) \simeq \log C+k_T+\log \left[\frac{s^2}{\gamma ^2 R_1^2}\lbrace \log d+\sigma^2(w^2+24\log k)\rbrace\right],\\
&k_T\simeq \log T-\log C - \log \left[\frac{s^2}{\gamma ^2 R_1^2}\lbrace \log d+\sigma^2(w^2+24\log k)\rbrace\right].
\end{align*}
Therefore,
\begin{align*}
2^{-k_T}=\frac{Cs^2}{\gamma^2T R_1^2}\lbrace \log d+\sigma^2(w^2+24\log k) \rbrace.
\end{align*}
Putting this back into \eqref{eq:A43}, we get that
\begin{align*}
\Vert \Delta^*(T_i)\Vert_2^2 &\leq c\frac{R_1^2}{s}\frac{Cs^2}{\gamma^2T R_1^2}\lbrace \log d+\sigma^2(w^2+24\log k) \rbrace \\ &\leq c\frac{s}{\gamma^2 T}\lbrace \log d+\sigma^2(w^2+24\log k) \rbrace.
\end{align*}
Using the definition $\delta=6\exp(-w^2/12)$, above bound holds with probability $1-\delta$. Simplifying the error in terms of $\delta$ by replacing $w^2$ with $12\log(6/\delta)$, gives us \eqref{eq:sparse_optimal}.

\subsection{Proofs for Convergence within a Single Epoch for Algorithm \ref{our_method}} \label{sec:epochopt1}
\begin{lemma} \label{thm:epoch_avg_opt}
For $ \bar{\theta}(T_i) $ defined in Algorithm \ref{our_method} and $\hat{\theta}_i$ the optimal value for epoch $i$, let $\rho=c_1\sqrt{T_i}$, $\rho_x$  some positive constant, $\rho_0=\rho+\rho_x$ and $\tau=\rho$ where $c_1=\frac{\sqrt{\log d}}{R_i}$. We have that
\begin{align}
\label{eq:epoch_avg_opt}
&f(\bar{\theta}(T_i))-f(\hat{\theta}_i)+\lambda_i \Vert \bar{y}(T_i) \Vert_1-\lambda_i \Vert \hat{\theta}_i \Vert_1 \leq \\ \nonumber
&\frac{ R_i \sqrt{ \log d}}{\sqrt{T_i}}+\frac{ G R_i}{T_i}+\frac{\rho_x R_i^2}{{T_i}}+\frac{\sum_{k=1}^{T_i} \langle e_k,\hat{\theta}_i-\theta_k \rangle}{T_i}.
\end{align}
\end{lemma}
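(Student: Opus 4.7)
The plan is to follow the standard stochastic linearized-ADMM analysis, adapted to accommodate (i) the additional $\ell_1$ feasibility constraint $\|\theta-\tilde\theta_i\|_1 \le R_i$ in the $\theta$-update, (ii) the stochastic (rather than exact) gradient, and (iii) the extra proximal term $\tfrac{\rho_x}{2}\|\theta-\theta_k\|_2^2$ which we wish to handle with $\rho_x$ a constant, strengthening the Wang--Banerjee BADMM analysis that required $\rho_x\sim\sqrt{T_i}$. Throughout, I write $\hat g_k := \nabla f(\theta_k,x_k)$ for the stochastic gradient actually used in the update, and $e_k := \hat g_k - \nabla f(\theta_k)$ for its (sub-Gaussian) error, and I assume $\hat\theta_i$ is feasible for every epoch---this is exactly what the outer induction in the proof of Theorem~\ref{thm:sparse_optimal} establishes.

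First I would write the three optimality relations from one ADMM iteration. For $\theta_{k+1}$, the first-order condition of the constrained convex quadratic gives the variational inequality
\begin{equation*}
\langle \hat g_k - z_k + \rho(\theta_{k+1}-y_k) + \rho_x(\theta_{k+1}-\theta_k),\,\hat\theta_i-\theta_{k+1}\rangle \ \ge\ 0.
\end{equation*}
For $y_{k+1}$, the subgradient inclusion $-z_k+\rho(\theta_{k+1}-y_{k+1})\in\lambda_i\partial\|y_{k+1}\|_1$ combined with convexity of $\|\cdot\|_1$ at $\hat\theta_i$ yields $\lambda_i\|y_{k+1}\|_1-\lambda_i\|\hat\theta_i\|_1 \le \langle z_{k+1},\hat\theta_i-y_{k+1}\rangle$, where I have used the dual update $z_{k+1}=z_k-\rho(\theta_{k+1}-y_{k+1})$ (since $\tau=\rho$). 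For the loss I combine convexity, $f(\theta_k)-f(\hat\theta_i)\le\langle\nabla f(\theta_k),\theta_k-\hat\theta_i\rangle$, with the local Lipschitz bound $f(\theta_{k+1})-f(\theta_k)\le G\|\theta_{k+1}-\theta_k\|_1$, and split $\nabla f(\theta_k)=\hat g_k-e_k$. Substituting the variational inequality into the resulting expression causes the $\hat g_k$-terms to cancel, leaves the noise contribution $\langle e_k,\hat\theta_i-\theta_k\rangle$, and turns the remaining inner products involving $\rho$ and $\rho_x$ into quadratic expressions via the identity $\langle a-b,a-c\rangle=\tfrac12(\|a-b\|_2^2+\|a-c\|_2^2-\|b-c\|_2^2)$.

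Next, I would sum the resulting one-step inequality from $k=0$ to $T_i-1$; by choice of the identity above, the quadratic terms telescope into boundary pieces of the form $\rho_x\|\theta_0-\hat\theta_i\|_2^2$, $\rho\|y_0-\hat\theta_i\|_2^2$, and $\|z_0\|_2^2/\rho$. Using the initializations $\theta_0=y_0=\tilde\theta_i$ and $z_0=0$, together with the feasibility bound $\|\tilde\theta_i-\hat\theta_i\|_2\le\|\tilde\theta_i-\hat\theta_i\|_1\le 2R_i$, these boundary quantities are all $\mathcal{O}(\rho_x R_i^2+\rho R_i^2)$. Dividing by $T_i$ and invoking Jensen's inequality (using convexity of $f$ and $\|\cdot\|_1$) to replace $\tfrac{1}{T_i}\sum_k f(\theta_k)$ by $f(\bar\theta(T_i))$ and $\tfrac{1}{T_i}\sum_k\|y_{k+1}\|_1$ by $\|\bar y(T_i)\|_1$ produces the left-hand side of~\eqref{eq:epoch_avg_opt}. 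Plugging in $\rho=c_1\sqrt{T_i}$ with $c_1=\sqrt{\log d}/R_i$ gives $\rho R_i^2/T_i = R_i\sqrt{\log d}/\sqrt{T_i}$ (the first term in the bound); the Lipschitz contribution $\tfrac{G}{T_i}\sum_k\|\theta_{k+1}-\theta_k\|_1$ collapses to $GR_i/T_i$ after using $\|\theta_{k+1}-\theta_k\|_1\le 2R_i$; the $\rho_x$ initialization gives the $\rho_x R_i^2/T_i$ term; and the residual stochastic sum $\tfrac{1}{T_i}\sum_k\langle e_k,\hat\theta_i-\theta_k\rangle$ is carried through untouched, to be controlled later by sub-Gaussian concentration.

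\textbf{Main obstacle.} The bookkeeping of cross-terms is the delicate point. The extra proximal term $\tfrac{\rho_x}{2}\|\theta-\theta_k\|_2^2$ spawns the inner product $\langle\hat g_k,\theta_k-\theta_{k+1}\rangle$ when one decomposes $\theta_k-\hat\theta_i=(\theta_k-\theta_{k+1})+(\theta_{k+1}-\hat\theta_i)$, and a naive Young-inequality split would require $\rho_x\gtrsim\sqrt{T_i}$ to dominate $\|\hat g_k\|^2/\rho_x$, exactly the scaling that Wang--Banerjee impose. The improvement to constant $\rho_x$ is obtained by instead pairing this cross term with the telescoping identity on $\|\theta_{k+1}-\theta_k\|_2^2$ and using the $\ell_1$ feasibility bound $\|\theta_{k+1}-\theta_k\|_1\le 2R_i$ together with Lipschitz duality $\|\hat g_k\|_\infty \le G+\|e_k\|_\infty$ to keep the resulting contribution of order $R_i^2/T_i$ in $\rho_x$. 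Getting this accounting right -- so that the Lipschitz constant $G$ appears only in the advertised $GR_i/T_i$ term and the remaining $\rho_x$-dependent mass collapses into $\rho_x R_i^2/T_i$ rather than $\rho_x R_i^2$ -- is the step that actually requires the $\ell_1$ projection, and is where the proof will require the most care.
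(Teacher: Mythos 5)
There is a genuine gap. The paper's proof rests on two ingredients that your proposal replaces with mechanisms that do not give the claimed rate. First, the paper never bounds $f(\theta_{k+1})-f(\theta_k)$ by $G\Vert\theta_{k+1}-\theta_k\Vert_1$: instead it rewrites the linearized $\theta$-update as an \emph{exact} Bregman-ADMM update by absorbing the linearization into a modified Bregman divergence $B_{\phi_x}(\theta,\theta_k)=B_{\phi'_x}(\theta,\theta_k)-\rho_x^{-1}B_f(\theta,\theta_k)$, so that $f$ appears unlinearized in the one-step inequality and the only $\rho_x$-dependent contribution is the boundary term $\rho_x B_{\phi'_x}(\hat\theta_i,\tilde\theta_i)/T_i\leq \rho_x R_i^2/(2T_i)$; this reformulation is what actually permits constant $\rho_x$. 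Your substitute -- summing $f(\theta_{k+1})-f(\theta_k)\leq G\Vert\theta_{k+1}-\theta_k\Vert_1$ with $\Vert\theta_{k+1}-\theta_k\Vert_1\leq 2R_i$ -- does not ``collapse to $GR_i/T_i$'': the sum has $T_i$ terms each of size up to $2R_i$, so after dividing by $T_i$ you are left with $O(GR_i)$, a non-decaying term that destroys the bound. The same quantitative problem afflicts your handling of the cross term $\langle\hat g_k,\theta_k-\theta_{k+1}\rangle$ via $\Vert\hat g_k\Vert_\infty\Vert\theta_k-\theta_{k+1}\Vert_1\leq (G+\Vert e_k\Vert_\infty)\cdot 2R_i$ summed over $k$.

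Second, the $GR_i/T_i$ term in the lemma actually originates from a bound on the dual variable, Lemma~\ref{lemma:dualsparse}: the optimality condition of the $\theta$-update combined with the dual update gives $-z_{k-1}=\nabla f(\theta_{k+1})+(\rho+\rho_x)(\theta_{k+1}-\theta_k)$, hence $\Vert z_k\Vert\leq G+2(\rho+\rho_x)R_i$. The telescoping of $-\tau^{-1}\langle z_k,z_k-z_{k+1}\rangle$ leaves the boundary piece at the \emph{far} end, $\Vert z_{T_i}\Vert^2/(2\tau T_i)$, not $\Vert z_0\Vert^2/\rho$; setting $z_0=0$ does not eliminate it. Expanding $(G+2\rho_0R_i)^2/(2\tau T_i)$ with $\tau=\rho\propto\sqrt{T_i\log d}/R_i$ is precisely what produces $GR_i/T_i$ together with a contribution to $R_i\sqrt{\log d}/\sqrt{T_i}$. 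Your proposal omits any bound on $\Vert z_k\Vert$, so the sum $-\sum_k\langle z_k,\theta_{k+1}-y_{k+1}\rangle/T_i$ is left uncontrolled. To repair the argument you would need to (i) adopt the Bregman reformulation (or some equivalent device) so that no termwise Lipschitz bound on $f$ is needed, and (ii) prove the dual-variable bound and use it on the terminal term $\Vert z_{T_i}\Vert^2/(2\tau T_i)$.
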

\paragraph{Remark :}Please note that as opposed to ~\citep{BADMM} we do not require $\rho_x\propto \sqrt{T_i}$. We show that our parameter setting also works.
\begin{proof}
First we show that our update rule for $\theta$ is equivalent to not linearizing $f$ and using another Bregman divergence. This helps us in finding a better upper bound on error that does not require bounding the subgradient. Note that linearization does not change the nature of analysis. The reason is that we can define $B_f(\theta,\theta_k)=f(\theta)-f(\theta_k)+\langle \nabla f(\theta_k),\theta-\theta_k\rangle$, which means $f(\theta)-B_f(\theta,\theta_k)=f(\theta_k)+\langle \nabla f(\theta_k),\theta-\theta_k\rangle$.

Therefore,
\begin{align*}
{\underset{\Vert \theta-\tilde{\theta}_i\Vert_1^2 \leq R_i^2}{\arg\min}}\lbrace
\langle \nabla f(\theta_k),\theta -\theta_k \rangle \rbrace=
{\underset{\Vert \theta-\tilde{\theta}_i\Vert_1^2 \leq R_i^2}{\arg\min}}\lbrace f(\theta)-B_f(\theta,\theta_k)\rbrace.
\end{align*}
As a result, we can write down the update rule of $\theta$ in REASON 1
as
\begin{align*}
\theta_{k+1} &={\underset{\Vert \theta-\tilde{\theta}_i\Vert_1^2 \leq R_i^2}{\arg\min}}\lbrace
f(\theta)-B_f(\theta,\theta_k)+z_k^T(\theta-y_k)+\rho B_\phi(\theta,y_k) \\
&\quad \quad \quad  \quad \quad ~~+\rho_x B_{\phi'_x}(\theta, \theta_k) \rbrace.
\end{align*}
We also have that $B_{\phi_x}(\theta, \theta_k)=B_{\phi'_x}(\theta, \theta_k)-\frac{1}{\rho_x}B_f(\theta,\theta_k)$, which simplifies the update rule to
\begin{align}
\label{eq:BADMM7eq}
\theta_{k+1} ={\underset{\Vert \theta-\tilde{\theta}_i\Vert_1^2 \leq R_i^2}{\arg\min}}&\lbrace
f(\theta)+\langle z_k,\theta-y_k \rangle+\rho B_\phi(\theta,y_k)+\rho_xB_{\phi_x}(\theta, \theta_k) \rbrace.
\end{align}
We notice that equation \eqref{eq:BADMM7eq} is equivalent to Equation (7)~\citep{BADMM}. Note that as opposed to~\citep{BADMM}, in our setting $\rho_x$ can be set as a constant. Therefore, for completeness we provide proof of convergence and the convergence rate for our setting.
\begin{lemma} 
Convergence of REASON 1: The optimization problem defined in REASON 1 converges.
\end{lemma}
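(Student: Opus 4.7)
My plan is to exploit the reformulation already derived just above the lemma, namely that the $\theta$-update in REASON~1 is equivalent to the Bregman-regularized form
\[
\theta_{k+1} = \underset{\Vert \theta-\tilde{\theta}_i\Vert_1^2 \leq R_i^2}{\arg\min}\{ f(\theta)+\langle z_k,\theta-y_k\rangle+\rho B_\phi(\theta,y_k)+\rho_x B_{\phi_x}(\theta,\theta_k)\},
\]
so that the iteration fits into the generalized (Bregman) ADMM template of \citet{BADMM}. I would first write down the first-order optimality (KKT) conditions for $\theta_{k+1}$, for $y_{k+1}$, and the explicit identity for $z_{k+1}$. This gives three variational inequalities that hold for every feasible $(\theta,y,z)$, and in particular for any saddle point $(\theta^\star_i,y^\star_i,z^\star_i)$ of the single-epoch Lagrangian associated with the constrained problem $\min f(\theta)+\lambda_i\|y\|_1$ subject to $\theta=y$ and $\|\theta-\tilde\theta_i\|_1\leq R_i$.

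Next I would construct the candidate Lyapunov (merit) function
\[
V_k \;=\; \rho\, B_\phi(y^\star_i,y_k) \;+\; \tfrac{1}{2\tau}\|z_k-z^\star_i\|_2^2 \;+\; \rho_x B_{\phi_x}(\theta^\star_i,\theta_k),
\]
and derive a one-step descent inequality $V_{k+1}\leq V_k - c_k$ where $c_k$ lumps together the primal and dual residual norms $\|\theta_{k+1}-y_{k+1}\|_2^2$ and $\|y_{k+1}-y_k\|_2^2$, plus a possible noise term involving the stochastic gradient error $e_k=\nabla\hat f(\theta_k,x_k)-\nabla\mathbb{E}[f(\theta_k,x)]$. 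The main algebraic tool is the three-point identity for Bregman divergences combined with convexity of $f$ and $\|\cdot\|_1$; substituting the KKT conditions into the identity produces cross terms that telescope precisely into the difference $V_{k+1}-V_k$, with the remaining slack being $-c_k$. Under the parameter choice $\tau=\rho$, $\rho=c_1\sqrt{T_i}$, $\rho_x>0$ stated in the lemma above, all constants in $c_k$ are nonnegative, so $V_k$ is a nonincreasing sequence bounded below by zero.

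From monotonicity and nonnegativity of $V_k$, I immediately obtain that $V_k$ converges, and telescoping the one-step inequality over $k=0,\dots,T_i-1$ gives $\sum_k c_k<\infty$, which forces the primal and dual residuals to vanish. A standard argument then yields that every limit point of $\{(\theta_k,y_k,z_k)\}$ is a saddle point, hence the iterates converge (up to a subsequence; uniqueness of the limit follows since $V_k$ contracts toward any saddle point). For the stochastic case, the noise term $\langle e_k,\theta_k-\theta^\star_i\rangle$ is controlled in expectation by Assumption~A2 together with the imposed $\ell_1$ constraint $\|\theta_k-\tilde\theta_i\|_1\leq R_i$, and the Azuma--Hoeffding bound used later in Lemma~\ref{lemma:A2} shows $\frac{1}{T_i}\sum_k \langle e_k,\theta_k-\theta^\star_i\rangle\to 0$ with high probability, so the ergodic averages $\bar\theta(T_i),\bar y(T_i)$ converge.

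The main obstacle, as usual for inexact stochastic ADMM, will be producing the clean descent inequality for $V_k$ when $f$ is linearized rather than optimized exactly; this is where the auxiliary Bregman term $B_{\phi_x}$ (with $\phi_x$ chosen so that $B_{\phi_x}-\rho_x^{-1}B_f\succeq 0$, i.e.\ the identity used to pass from~\eqref{eq:theta_update} to~\eqref{eq:BADMM7eq}) is essential, and where our analysis departs from \citet{BADMM} by showing that a constant $\rho_x$ (rather than $\rho_x\propto\sqrt{T_i}$) still makes the corresponding Bregman correction nonnegative. Once that positivity is verified, the rest of the argument is a fairly mechanical application of the BADMM-style telescoping and is a direct preparation for the quantitative rate in Lemma~\ref{thm:epoch_avg_opt}.
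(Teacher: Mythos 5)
Your proposal is correct and follows essentially the same route as the paper: the paper's proof defines the merit function $D(w^*,w_k)=\frac{1}{\tau\rho}\Vert z^*-z_k\Vert_2^2+B_\phi(y^*,y_k)+\frac{\rho_x}{\rho}B_\phi(\theta^*,\theta_k)$ (your $V_k$ up to normalization), invokes Lemma 2 of \citet{BADMM} for the one-step bound $\mathbf{R}(k+1)\leq D(w^*,w_k)-D(w^*,w_{k+1})$ that you propose to re-derive via the three-point identity, and telescopes to conclude that the residuals vanish. Your additional care about the stochastic error term and the constant $\rho_x$ is consistent with (and slightly more explicit than) what the paper does.
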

\begin{proof}
 On lines of~\citep{BADMM}, let $\mathbf{R}(k+1)$ stand for residuals of optimality condition. For convergence we need to show that $\underset{k\rightarrow \infty}{\text{lim}}\mathbf{R}(k+1)=0$. Let $w_k=(\theta_k,y_k,z_k)$. Define
\begin{align*}
D(w^*,w_k)=\frac{1}{\tau \rho}\Vert z^*-z_k\Vert_2^2+B_\phi(y^*,y_k)+\frac{\rho_x}{\rho}B_\phi(\theta^*,\theta_k).
\end{align*}
By Lemma 2~\citet{BADMM}
\begin{align*}
\mathbf{R}(t+1) \leq D(w^*,w_k)-D(w^*,w_{k+1}).
\end{align*}
Therefore,
\begin{align*}
\sum_{k=1}^\infty \mathbf{R}(t+1) &\leq D(w^*,w_0) \\
&=\frac{1}{\tau \rho}\Vert z^*\Vert_2^2+B_\phi(y^*,y_0)+\frac{\rho_x}{\rho}B_\phi(\theta^*,\theta_0)\\
& \leq \underset{T\rightarrow \infty}{\text{lim}}\frac{R_i^2}{\log d~T}\Vert \nabla f(\theta^*)\Vert_2^2+2R_i^2+\frac{\rho_x}{\sqrt{T \log d}}R_i^3.
\end{align*}
Therefore, $\underset{k\rightarrow \infty}{\text{lim}}\mathbf{R}(k+1)=0$ and the algorithm converges.
\end{proof}

If in addition we incorporate sampling error, then Lemma 1~\citep{BADMM} changes to
\begin{align*}
&f(\theta_{k+1})-f(\hat{\theta}_i)+\lambda_i \Vert y_{k+1}\Vert_1-\lambda_i \Vert \hat{\theta}_i\Vert_1 \leq
\\& -\langle z_k,\theta_{k+1}-y_{k+1} \rangle-\frac{\rho}{2}\lbrace\Vert \theta_{k+1}-y_k\Vert_2^2+\Vert \theta_{k+1}-y_{k+1}\Vert_2^2 \rbrace+
\langle e_k,\hat{\theta}_i-\theta_k \rangle \\&+\frac{\rho}{2}\lbrace\Vert \hat{\theta}_i-y_k\Vert_2^2-\Vert \hat{\theta}_i-y_{k+1}\Vert_2^2 \rbrace+\rho_x \lbrace B_{\phi_x}(\hat{\theta}_i,\theta_k)-B_{\phi_x}(\hat{\theta}_i,\theta_{k+1})\\
&-B_{\phi_x}(\theta_{k+1},\theta_k)\rbrace.
\end{align*}
The above result follows from convexity of $f$, the update rule for $\theta$ (Equation~\eqref{eq:BADMM7eq}) and the three point property of Bregman divergence.

Next, we show the bound on the dual variable.
\begin{lemma} \label{lemma:dualsparse}
The dual variable in REASON 1 is bounded. i.e., 
\begin{align*}
\Vert z_k \Vert_1 \leq G+2\rho_0 R_i, ~~\text{where}~~ \rho_0:= \rho_x+\rho.
\end{align*}
\end{lemma}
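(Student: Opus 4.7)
My plan is to bound the dual iterate by writing down the first-order optimality condition of the $\theta$-update and then estimating each term on the right-hand side using the local Lipschitz property (Assumption A3) and the $\ell_1$-ball feasibility enforced by REASON 1.

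First, I would form the KKT conditions for the constrained quadratic in the $\theta$-update \eqref{eq:theta_update}. Differentiating the objective in $\theta$ and (provisionally) assuming the ball constraint is inactive gives the stationarity relation
\[
0 \;=\; \nabla f(\theta_k) - z_k + \rho(\theta_{k+1}-y_k) + \rho_x(\theta_{k+1}-\theta_k),
\]
so that $z_k = \nabla f(\theta_k) + \rho(\theta_{k+1}-y_k) + \rho_x(\theta_{k+1}-\theta_k)$. This is the key identity that isolates $z_k$ in terms of quantities we can control. When the $\ell_1$-ball constraint is active, one picks up an additional term $\mu\, v$ with $\mu\ge 0$ and $v\in\partial\Vert\theta_{k+1}-\tilde\theta_i\Vert_1$; a complementary-slackness argument, combined with testing the optimality against $\theta=\tilde\theta_i$ (which is always feasible), absorbs this multiplier into the same form of bound as the interior case.

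Second, I would estimate each piece. Assumption A3 on local Lipschitzness of $f$ with respect to $\Vert\cdot\Vert_1$ yields $\Vert \nabla f(\theta_k)\Vert \le G$ in the dual norm. Both $\theta_{k+1}$ and $\theta_k$ lie in the $\ell_1$-ball $\{\Vert\theta-\tilde\theta_i\Vert_1\le R_i\}$ (note $\theta_0=\tilde\theta_i$ at the start of the epoch, and the update enforces the constraint at every subsequent step), so the triangle inequality gives $\Vert\theta_{k+1}-\theta_k\Vert_1 \le 2R_i$. The analogous bound $\Vert\theta_{k+1}-y_k\Vert_1 \le 2R_i$ requires showing $y_k$ also stays within the $\ell_1$-ball; this I would prove by induction on $k$, using the non-expansive nature of the shrinkage map $y_{k+1}=\mathrm{Shrink}_{\lambda_i/\rho}(\theta_{k+1}-z_k/\rho)$ together with the first-order optimality of the $y$-update, $-z_{k+1}\in\lambda_i\,\partial\Vert y_{k+1}\Vert_1$, which in particular supplies $\Vert z_{k+1}\Vert_\infty \le \lambda_i$ and hence control over $\Vert\theta_{k+1}-y_{k+1}\Vert$ via the $z$-update with $\tau=\rho$. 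Plugging these three estimates into the identity from Step~1 gives
\[
\Vert z_k\Vert_1 \;\le\; G + \rho\cdot 2R_i + \rho_x\cdot 2R_i \;=\; G + 2\rho_0 R_i,
\]
which is the claimed bound.

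The main obstacle will be the careful handling of two subtleties. The first is the active-constraint case of the $\theta$-update: the Lagrange multiplier $\mu$ for the $\ell_1$-ball must be shown to be bounded (or absorbed into the $2\rho_0 R_i$ slack), which I expect to handle by comparing optimal value at $\theta_{k+1}$ with the feasible point $\tilde\theta_i$ and using strong convexity of the quadratic penalty in $\rho_x$. The second is propagating feasibility of $y_k$ in $\ell_1$: although the shrinkage operator is contractive in $\ell_\infty$, the $\ell_1$ control requires combining the bound $\Vert z_k\Vert_\infty\le\lambda_i$ from the $y$-update with the choice of $\lambda_i$ in \eqref{eq:T0}, so that the shrinkage output inherits the appropriate $\ell_1$ radius bound from $\theta_{k+1}$. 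Once both points are resolved, the remainder is a clean triangle-inequality argument, and the same estimates simultaneously yield the companion bound $\Vert z_{k+1}-z_k\Vert_1 = \rho\Vert \theta_{k+1}-y_{k+1}\Vert_1 \le G + 2\rho_0 R_i$ used later in the epoch analysis.
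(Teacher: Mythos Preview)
Your core strategy—write the stationarity condition for the $\theta$-update and bound each term—matches the paper's. However, you miss the paper's key simplification, and this is exactly where your second ``obstacle'' lives.

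From stationarity you obtain (ignoring the constraint multiplier) $z_k = \nabla f(\theta_k) + \rho(\theta_{k+1}-y_k) + \rho_x(\theta_{k+1}-\theta_k)$. Rather than bounding $\Vert\theta_{k+1}-y_k\Vert_1$ directly, the paper substitutes the dual recursion $z_k = z_{k-1} - \rho(\theta_k - y_k)$ (recall $\tau=\rho$) into this identity, which \emph{eliminates $y_k$ altogether}: one line of algebra yields
\[
-z_{k-1} \;=\; \nabla f(\theta_k) + (\rho+\rho_x)(\theta_{k+1}-\theta_k).
\]
Now only $\theta$-iterates appear, both of which lie in the $\ell_1$-ball by construction, so $\Vert\theta_{k+1}-\theta_k\Vert_1\le 2R_i$ and the local Lipschitz bound $\Vert\nabla f\Vert\le G$ give the claim immediately. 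No feasibility argument for $y_k$ is needed at all.

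Your proposed inductive route for $y_k$ is where the genuine gap sits. The fact $\Vert z_{k+1}\Vert_\infty\le\lambda_i$ from the $y$-optimality is correct, but converting this into $\ell_1$ control of $y_k-\tilde\theta_i$ or of $\theta_{k+1}-y_k$ via the $z$-update gives $\Vert\theta_{k+1}-y_{k+1}\Vert_1=\Vert z_k-z_{k+1}\Vert_1/\rho$, and an $\ell_\infty$ bound on the $z$'s only yields $\ell_1$ control up to a factor of $d$; trying instead to use $\Vert z_k\Vert_1$ is circular, since that is precisely the quantity you are bounding. The dual-update substitution sidesteps this entirely.

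On the active-constraint multiplier: the paper handles it by a terse complementary-slackness case analysis rather than your comparison-with-$\tilde\theta_i$ idea. Either way, once $y_k$ has been eliminated the identity involves only $\theta$-iterates, and this part becomes routine.
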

\begin{proof}
 Considering the update rule for $\theta$, we have the Lagrangian
\begin{align*}
\mathcal{L}=f(\theta)+\langle z_k,\theta-y_k \rangle+\rho B_\phi(\theta,y_k)+\rho_xB_{\phi_x}(\theta, \theta_k)+\zeta \left(\Vert \theta_{k+1}-\tilde{\theta}_i\Vert_1-R_i \right),
\end{align*}
where $\zeta$ is the Lagrange multiplier corresponding to the $\ell_1$ bound. We hereby emphasize that $\zeta$ does not play a role in size of the dual variable. i.e., considering the $\ell_1$ constraint, three cases are possible:
\begin{enumerate}
\item $\Vert \theta_{k+1}-\tilde{\theta}_i\Vert_1 > R_i$. By complementary slackness, $\zeta=0$.
\item $\Vert \theta_{k+1}-\tilde{\theta}_i\Vert_1 < R_i$. By complementary slackness, $\zeta=0$.
\item $\Vert \theta_{k+1}-\tilde{\theta}_i\Vert_1 = R_i$. This case is equivalent to the non-constrained update and no projection will take place. Therefore, $z$ will be the same as in the non-constrained update.
\end{enumerate}
Having above analysis in mind, the upper bound on the dual variable can be found as follows
By optimality condition on $\theta_{k+1}$, we have
\begin{align}
\label{eq:dual}
-z_k=\nabla f(\theta_{k+1})+\rho_x(\theta_{k+1}-\theta_k)+\rho(\theta_{k+1}-y_k).
\end{align}
By definition of the dual variable and the fact that $\tau=\rho$, we have that
\begin{align*}
z_{k}=z_{k-1}-\rho(\theta_{k}-y_k)
\end{align*}
Hence, we have that $-z_{k-1}=\nabla f(\theta_{k+1})+(\rho_x+\rho)(\theta_{k+1}-\theta_k)$.
Therefore,
\begin{align*}
\Vert z_{k-1} \Vert_1 \leq G+2\rho_0 R_i, ~~\text{where}~~ \rho_0:= \rho_x+\rho.
\end{align*}
It is easy to see that this is true for all $z_k$ at each epoch.
\end{proof}

Consequently,
\begin{align*}
\frac{-1}{\tau} \langle z_k,z_{k}-z_{k+1} \rangle &= \frac{1}{\tau} \langle 0- z_k, z_{k}-z_{k+1} \rangle \\
&=\frac{1}{2\tau} \left(
\Vert z_{k+1} \Vert^2-\Vert z_{k} \Vert^2-\Vert z_{k+1}-z_k \Vert^2 \right).
\end{align*}
Ignoring the negative term in the upper bound and noting $z_0=0$, we get
\begin{align*}
\frac{1}{T_i}\sum_{k=1}^{T_i}-\langle z_k,\theta_{k+1}-y_{k+1} \rangle &\leq \frac{1}{2\tau T_i} \Vert z_{T_i} \Vert^2 \leq \frac{1}{2\tau T_i}(G+2\rho_0 R_i)^2 \\&\simeq \frac{R_i \sqrt{\log d}}{\sqrt{T_i}}+\frac{ G R_i}{T_i}.
\end{align*}
Note that since we consider the dominating terms in the final bound, terms with higher powers of $T_i$ can be ignored throughout the proof.
Next, following the same approach as in Theorem 4~\citep{BADMM} and considering the sampling error, we get,
\begin{align*}
&f(\bar{\theta}(T_i))-f(\hat{\theta}_i)+\lambda_i \Vert \bar{y}(T_i) \Vert_1-\lambda_i \Vert \hat{\theta}_i \Vert_1
 \\ & \leq \frac{R_i \sqrt{\log d}}{\sqrt{T_i}}+\frac{ G R_i}{T_i}+\frac{c_1}{\sqrt{T_i}}\Vert \hat{\theta}_i-y_0\Vert_2^2+\frac{\rho_x}{T_i} B_{\phi_x}(\hat{\theta}_i,\theta_0) 
+\frac{1}{T_i}\sum_{k=1}^{T_i} \langle e_k,\hat{\theta}_i-\theta_k \rangle.
\end{align*}
We have $\theta_0=y_0=\tilde{\theta}_i$ and $z_0=0$. Moreover, $B_{\phi_x}(\theta, \theta_k)=B_{\phi'_x}(\theta, \theta_k)-\frac{1}{\rho_x}B_f(\theta,\theta_k)$. Therefore,
\begin{align*}
&f(\bar{\theta}(T_i))-f(\hat{\theta}_i)+\lambda_i \Vert \bar{y}(T_i) \Vert_1-\lambda_i \Vert \hat{\theta}_i \Vert_1  \\ \nonumber
&\leq  \frac{R_i \sqrt{\log d}}{\sqrt{T_i}}+\frac{ G R_i}{T_i} +\! \frac{c_1}{\sqrt{T_i}} \Vert \hat{\theta}_i-\tilde{\theta}_i \Vert_2^2\!+\!\frac{\rho_x}{T_i}  \lbrace B_{\phi'_x}(\hat{\theta}_i,\tilde{\theta}_i)\!-\!B_f(\hat{\theta}_i,\tilde{\theta}_i) \rbrace\!+\!\sum_{k=1}^{T_i}\! \langle e_k,\hat{\theta}_i-\theta_k \rangle   \\
& \leq  \frac{R_i \sqrt{\log d}}{\sqrt{T_i}}+\frac{ G R_i}{T_i}+ \frac{\sqrt{\log d}}{R_i\sqrt{T_i}} \Vert \hat{\theta}_i-\tilde{\theta}_i \Vert_2^2+\frac{\rho_x}{T_i}  B_{\phi'_x}(\hat{\theta}_i,\tilde{\theta}_i) +\sum_{k=1}^{T_i} \langle e_k,\hat{\theta}_i-\theta_k \rangle .
\end{align*}
We note that $ \rho_xB_{\phi'_x}( \hat{\theta}_i,\tilde{\theta}_i)=\frac{\rho_x}{2}\Vert  \hat{\theta}_i-\tilde{\theta}_i \Vert_2^2  $.\\
Considering the $\ell_2$ terms,
remember that for any vector $x$, if $s>r>0$ then $\Vert x \Vert_s \leq \Vert x \Vert_r$. Therefore,
\begin{align*}
\frac{\sqrt{\log d}}{R_i}\Vert \hat{\theta}_i -\tilde{\theta}_i\Vert_2^2 &\leq \frac{\sqrt{\log d}}{R_i}\Vert \hat{\theta}_i -\tilde{\theta}_i \Vert_1^2 \leq \frac{\sqrt{\log d}}{R_i}R_i^2=R_i \sqrt{\log d}.
\end{align*}
\end{proof}
 \subsection{Proof of Proposition~\ref{thm:prop1}: Inequality \eqref{eq:prop1aval}} \label{sec:ek}
Note the shorthand $e_k=\hat{g}_k-\nabla f(\theta_k)$, where $\hat{g}_k$ stands for empirically calculated subgradient of $f(\theta_k)$.

From Lemma \ref{thm:epoch_avg_opt}, we have that
\begin{align*}
&f(\bar{\theta}(T_i))-f(\hat{\theta}_i)+\lambda_i \Vert \bar{y}(T_i) \Vert_1-\lambda_i \Vert \hat{\theta}_i \Vert_1 \\&\leq
\frac{ R_i \sqrt{ \log d}}{\sqrt{T_i}}+\frac{ G R_i}{T_i}+\frac{\rho_x R_i^2}{T_i}+\frac{\sum_{k=1}^{T_i} \langle e_k,\hat{\theta}_i-\theta_k \rangle}{T_i}.
\end{align*}
Using Lemma 7 from \citep{AgarwalNW12}, we have that
\begin{align*}
 &f(\bar{\theta}(T_i))-f(\hat{\theta}_i)+\lambda_i \Vert \bar{y}(T_i) \Vert_1-\lambda_i \Vert \hat{\theta}_i \Vert_1 \\& \leq
 \frac{ R_i \sqrt{ \log d}}{\sqrt{T_i}}+\frac{ G R_i}{T_i}+\frac{\rho_x R_i^2}{{T_i}}+\frac{R_i \sigma_iw_i}{\sqrt{T_i}}\\ &=
\frac{ R_i \sqrt{ \log d}}{\sqrt{T_i}}+\frac{ G R_i}{T_i}+\frac{ \rho_xR_i^2}{{T_i}}+\frac{R_i \sigma_i}{\sqrt{T_i}} \sqrt{12 \log (3/\delta_i)}.
\end{align*}
with probability at least $1-\delta_i$. In the last equality we use $\delta_i=3\exp(-w_i^2/12)$.

\subsection{Proof of Lemma \ref{lemma:A1}}
Proof follows the same approach as Lemma 1 \citep{AgarwalNW12}. Note that since we assume exact sparsity the term $\Vert \theta^*_{S^c} \Vert_1$ is zero for our case and is thus eliminated. Needless to say, it is an straightforward generalization to consider approximate sparsity from this point.

\subsection{Proof of Lemma \ref{lemma:A2}}
Using LSC assumption and the fact that $\hat{\theta}_i$ minimizes $f(\cdot)+\Vert \cdot\Vert_1$, we have that
\begin{align*}
\frac{\gamma}{2}\Vert \hat{\Delta}(T_i)\Vert_2^2 &\leq f(\bar{\theta}(T_i))-f(\hat{\theta}(T_i))+\lambda_i(\Vert \bar{y}(T_i)\Vert_1-\Vert \hat{\theta}_i\Vert_1) \\ &\leq
\frac{ R_i \sqrt{ \log d}}{\sqrt{T_i}}+\frac{ G R_i}{T_i}+\frac{\rho_x R_i^2}{{T_i}}+\frac{R_i \sigma_i}{\sqrt{T_i}} \sqrt{12 \log \frac{3}{\delta_i}},
\end{align*}
with probability at least $1-\delta_i$.

\subsection{Proof of Proposition~\ref{thm:prop1}: Inequality \eqref{eq:prop1dovom}}
Throughout the proof, let $\Delta^*(T_i)=\bar{\theta}_i-\theta^*$ and
 $\hat{\Delta}(T_i)=\bar{\theta}_i-\hat{\theta}_i$, we have that $\Delta^*(T_i)-\hat{\Delta}(T_i)=\hat{\theta}_i-\theta^*$. Now we want to convert the error bound in \eqref{eq:prop1aval} from function values into $\ell_1$ and $\ell_2$-norm bounds by exploiting the sparsity of $\theta^*$. Since the error bound in \eqref{eq:prop1aval} holds for the minimizer $\hat{\theta}_i$, it also holds for any other feasible vector. In particluar, applying it to $\theta^*$ leads to,
\begin{align*}
& f(\bar{\theta}(T_i))-f({\theta}^*)+\lambda_i \Vert \bar{y}(T_i) \Vert_1-\lambda_i \Vert {\theta}^* \Vert_1 \\&\leq
\frac{ R_i \sqrt{ \log d}}{\sqrt{T_i}}+ \frac{G R_i}{T_i}+\frac{\rho_x R_i^2}{{T_i}}+\frac{R_i \sigma_i}{\sqrt{T_i}} \sqrt{12 \log \frac{3}{\delta_i}},
\end{align*}
with probability at least $1-\delta_i$.

For the next step, we find a lower bound on the left hand side of this inequality.
\begin{align*}
 &f(\bar{\theta}(T_i))-f(\theta^*)+\lambda_i \Vert \bar{y}(T_i) \Vert_1-\lambda_i \Vert {\theta}^* \Vert_1 \geq \\
 &f(\theta^*)-f(\theta^*)+\lambda_i \Vert \bar{y}(T_i) \Vert_1-\lambda_i \Vert {\theta}^* \Vert_1 = \\
 &\lambda_i \Vert \bar{y}(T_i) \Vert_1-\lambda_i \Vert {\theta}^* \Vert_1,
\end{align*}
where the first inequality results from the fact that $\theta^*$ optimizes $f(\theta)$. Thus,
\begin{align*}
\Vert \bar{y}(T_i) \Vert_1&\leq \Vert {\theta}^* \Vert_1+\frac{ R_i \sqrt{ \log d}}{\lambda_i\sqrt{T_i}}+\frac{ G R_i}{\lambda_iT_i} +\frac{\rho_x R_i^2}{\lambda_i{T_i}}+\frac{R_i \sigma_i}{\lambda_i\sqrt{T_i}} \sqrt{12 \log \frac{3}{\delta_i}}.
\end{align*}
Now we need a bound on $\Vert \bar{\theta}(T_i)-\bar{y}(T_i) \Vert_1$, we have
\begin{align*}
\Vert \bar{\theta}(T_i)-\bar{y}(T_i) \Vert_1&= \Vert \frac{1}{T_i}\sum_{k=0}^{T_i -1} (\theta_k-y_k) \Vert_1
\\&=\Vert \frac{1}{\tau T_i}\sum_{k=0}^{T_i -1} (z_{k+1}-z_k) \Vert_1 \\&= \frac{1}{\tau T_i}\Vert z_{T_i}\Vert_1
\\&\leq \frac{G+2\rho_0 R_i}{T_i\tau}=\frac{G R_i}{T_i \sqrt{T_i}\sqrt{\log d}}+\frac{R_i}{T_i}.
\end{align*}
By triangle inequality
\begin{align*}\Vert \bar{\theta}(T_i)\Vert_1-\Vert\bar{y}(T_i)\Vert_1 \leq \Vert \bar{\theta}(T_i)-\bar{y}(T_i)\Vert_1,\end{align*} Hence, after ignoring the dominated terms,
\begin{align*}
\Vert \bar{\theta}(T_i) \Vert_1 \leq &\Vert {\theta}^* \Vert_1+\frac{ R_i \sqrt{ \log d}}{\lambda_i\sqrt{T_i}}+\frac{ G R_i}{\lambda_i T_i}+\frac{\rho_x R_i^2}{\lambda_i{T_i}}+\frac{R_i \sigma_i}{\lambda_i\sqrt{T_i}} \sqrt{12 \log (3/\delta_i)}+\frac{R_i}{T_i}.
\end{align*}
 By Lemma 6 in~\cite{AgarwalNW12},
 \begin{align*}
 \Vert \Delta^*(T_i)_{S^c}\Vert_1 \leq &\Vert \Delta^*(T_i)_{S}\Vert_1+\frac{ R_i \sqrt{ \log d}}{\lambda_i\sqrt{T_i}}+\frac{ G R_i}{\lambda_iT_i}+\frac{\rho_x R_i^2}{\lambda_i{T_i}}+\frac{R_i \sigma_i}{\lambda_i\sqrt{T_i}} \sqrt{12 \log (3/\delta_i)}+\frac{R_i}{T_i}.
\end{align*}
with probability at least $1-3\exp(-w_i^2/12)$. 

We have $\Delta^*(T_i)-\hat{\Delta}(T_i)=\hat{\theta}_i-\theta^*$.
Therefore,
 \begin{align*}
 &\Vert \hat{\theta}_i-\theta^* \Vert_1=\\&\Vert \Delta^*_S(T_i)-\hat{\Delta}_S(T_i) \Vert_1+\Vert \Delta^*_{S^c}(T_i)-\hat{\Delta}_{S^c}(T_i) \Vert_1 \geq\\
 & \lbrace \Vert \Delta^*_S(T_i)\Vert_1-\Vert \hat{\Delta}_S(T_i) \Vert_1\rbrace-\lbrace \Vert \Delta^*_{S^c}(T_i)\Vert_1-\Vert \hat{\Delta}_{S^c}(T_i) \Vert_1\rbrace.
 \end{align*}
Consequently,
 \begin{align*}
 &\Vert \hat{\Delta}_{S^c}(T_i) \Vert_1-\Vert \hat{\Delta}_{S}(T_i) \Vert_1 \leq
 \Vert \Delta^*_{S^c}(T_i)\Vert_1-\Vert \Delta^*_{S}(T_i)\Vert_1+\Vert \hat{\theta}_i-\theta^* \Vert_1.
 \end{align*}
 Using Equation~\eqref{eq:lemma1a_dovom}, we get
 \begin{align*}
 \Vert \hat{\Delta}_{S^c}(T_i) \Vert_1  \leq & \Vert \hat{\Delta}_{S}(T_i) \Vert_1+\frac{8s\lambda_i}{\gamma}+
  \frac{ R_i \sqrt{ \log d}}{\lambda_i\sqrt{T_i}}+\frac{ G R_i}{\lambda_iT_i}+\frac{\rho_x R_i^2}{\lambda_i{T_i}}+\frac{R_i \sigma_i}{\lambda_i\sqrt{T_i}} \sqrt{12 \log (3/\delta_i)}+\frac{R_i}{T_i}.
 \end{align*}
Hence, further use of the inequality $\Vert \hat{\Delta}_{S}(T_i) \Vert_1 \leq \sqrt{s}\Vert \hat{\Delta}(T_i) \Vert_2$ allows us to conclude that there exists a universal constant $c$ such that
\begin{align}
\label{eq:A63}
\Vert \hat{\Delta}(T_i) \Vert_1^2 \leq  4s\Vert \hat{\Delta}(T_i) \Vert_2^2\!+\!c\!\left[ \frac{s^2\lambda_i^2}{\gamma^2}\!+\!
  \frac{R_i^2{\log d}}{\lambda_i^2 T_i}\!+\!\frac{G^2R_i^2}{\lambda_i^2T_i^2}\!+\!\frac{\rho_x^2R_i^4}{\lambda_i^2 T_i^2}\!+\!\frac{12R_i^2 \sigma_i^2 \log(\frac{3}{\delta_i})}{T_i \lambda_i^2}+\!\frac{ R_i^2}{T_i^2} \right],
\end{align}
with probability at least $1-\delta_i$.

Optimizing the above bound with choice of $\lambda_i$ gives us \eqref{eq:Ti}.
From here on all equations hold with probability at least $1-\delta_i$, we have
\begin{align*}
\Vert \hat{\Delta}(T_i) \Vert_1^2 &\leq \frac{8s}{\gamma} \left[ f(\bar{\theta}(T_i))-f(\hat{\theta}(T_i))+ \lambda_i(\Vert \bar{Y}(T_i)\Vert_1-\Vert \hat{\theta}_i\Vert_1) \right]\\&\!+\!\frac{2cs}{{\gamma\sqrt{T_i}}} \!\left[\! R_i\sqrt{\log d}+\frac{ G R_i}{\sqrt{T_i}}+\frac{\rho_x R_i^2}{\sqrt{T_i}}+ R_i \sigma_i \sqrt{12 \log (\frac{3}{\delta_i})}\right]+\frac{ R_i^2}{T_i^2}.
\end{align*}
Thus, for some other $c$, we have that
\begin{align}
\label{eq:A68}
\Vert \hat{\Delta}(T_i) \Vert_1^2 \leq 
c\frac{s}{\gamma}
 \left[
\frac{R_i \sqrt{ \log d}}{\sqrt{T_i}}+\frac{G R_i}{T_i}+\frac{\rho_x R_i^2}{T_i}+\frac{R_i \sigma_i}{\sqrt{T_i}} \sqrt{12 \log (\frac{3}{\delta_i})}\right]+\frac{R_i^2}{T_i^2}.
\end{align}
Combining the above inequality with error bound \eqref{eq:lemma1a_dovom} for $\hat{\theta}_i$ and using triangle inequality leads to
\begin{align*}
\Vert {\Delta^*}(T_i) \Vert_1^2 &\leq 2\Vert \hat{\Delta}(T_i) \Vert_1^2 +2\Vert \theta^*- \hat{\theta_i}\Vert_1^2 \\ & \leq 2\Vert \hat{\Delta}(T_i) \Vert_1^2+\frac{64}{{\gamma}^2}s^2\lambda_i^2 \\
& \leq c'\frac{s}{{\gamma}} \!\left[\!
\frac{R_i \sqrt{ \log d}}{\sqrt{T_i}}+\frac{ G R_i}{T_i}+\frac{\rho_x R_i^2}{{T_i}}+\frac{R_i \sigma_i}{\sqrt{T_i}} \sqrt{12 \log \frac{3}{\delta_i}}\!\right]+\frac{ R_i^2}{T_i^2}.
\end{align*}
Finally, in order to use $\bar{\theta}(T_i)$ as the next prox center $\tilde{\theta}_{i+1}$, we would also like to control the error $\Vert \bar{\theta}(T_i)-\hat{\theta}_{i+1}\Vert_1^2$. Since $\lambda_{i+1}\leq \lambda_i$ by assumption, we obtain the same form of error bound as in \eqref{eq:A68}. We want to run the epoch till all these error terms drop to $R_{i+1}^2 := R_i^2/2$. Therefore, we set the epoch length $T_i$ to ensure that.
All above conditions are met if we choose the epoch length
\begin{align}
\label{eq:Ticomplete}
T_i=C \left[\frac{s^2}{{\gamma^2}}\left[ \frac{\log d+12\sigma_i^2\log(3/\delta_i)}{R_i^2} \right]+\frac{s G}{\gamma R_i}+\frac{s}{\gamma}\rho_x\right],
\end{align}
for a suitably large universal constant $C$. Note that since we consider the dominating terms in the final bound, the last two terms can be ignored. By design of $T_i$, we have that
\begin{align*}
\Vert {\Delta^*}(T_i) \Vert_1^2 \leq \frac{c'}{\sqrt{C}}R_i^2,
\end{align*}
which completes this proof.
\subsection{Proof of Guarantees with Fixed Epoch Length, Sparse Case}\label{sec:const_sparse_proof}
This is a special case of Theorem~\ref{thm:sparse_optimal} (Appendix).
The key difference between this case and optimal epoch length setting of Theorem~\ref{thm:sparse_optimal} is that in the latter we guaranteed error halving by the end of each epoch whereas with fixed epoch length that statement may not be possible after the number of epochs becomes large enough. Therefore, we need to show that in such case the error does not increase much to invalidate our analysis. Let $k^*$ be the epoch number such that error halving holds true until then. Next we demonstrate that error does not increase much for $k>k^*$.

Given a fixed epoch length $T_0=\mathcal{O}(\log d)$, we define
\begin{align}
\label{eq:k_star}
k^*:=\sup \left\lbrace i:2^{j/2+1} \leq \frac{cR_1\gamma}{s}\sqrt{\frac{T_0}{\log d+\sigma_i^2w^2}} ~~\text{for all epochs}~~ j \leq i \right\rbrace,
\end{align}
where $w=\log(6/\delta)$.

First we show that if we run REASON 1 with fixed epoch length $T_0$ it has error halving behavior for the first $k^*$ epochs.
\begin{lemma}
\label{lemma:A5}
For $T_0=\mathcal{O}(\log d)$ and $k^*$ as in~\eqref{eq:k_star}, we have
\begin{align*}
\Vert \tilde{\theta}_k-\theta^*\Vert_1 \leq R_k ~~\text{and} ~~\Vert \tilde{\theta}_k-\bar{\theta}_k\Vert_1 \leq R_k ~~\text{for all}~~ 1\leq k \leq k^*+1.
\end{align*}
with probability at least $1-3k\exp(-w^2/12)$. Under the same conditions, there exists a universal constant $c$ such that
\begin{align*}
\Vert \tilde{\theta}_k-\theta^*\Vert_2 \leq c\frac{R_k}{\sqrt{s}} ~~\text{and} ~~\Vert \tilde{\theta}_k-\bar{\theta}_k\Vert_2 \leq c\frac{R_k}{\sqrt{s}} ~~\text{for all}~~ 2\leq k \leq k^*+1.
\end{align*}
\end{lemma}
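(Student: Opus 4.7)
\textbf{Proof proposal for Lemma~\ref{lemma:A5}.}
The plan is to prove both statements by induction on the epoch index $k$, re-using Proposition~\ref{thm:prop1}, Lemma~\ref{lemma:A1}, and Lemma~\ref{lemma:A2} with the fixed epoch length $T_0$ in place of the epoch-adapted length $T_i$. The definition of $k^*$ in \eqref{eq:k_star} is precisely the range of $k$ in which the fixed $T_0$ is at least as large as the epoch-adapted value prescribed in \eqref{eq:Ticomplete}. Concretely, unrolling \eqref{eq:k_star} with $R_k^2 = R_1^2 \, 2^{-(k-1)}$ yields that for every $k \le k^*$,
\begin{equation*}
T_0 \;\ge\; C\,\frac{s^2}{\gamma^2 R_k^2}\bigl(\log d + 12\,\sigma_k^2 \log(3/\delta_k)\bigr),
\end{equation*}
i.e.\ $T_0 \ge T_k$ for every $k\le k^*$, where $T_k$ is the optimal choice from \eqref{eq:Ti}. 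This is the one inequality that drives the entire argument.

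\textbf{Base case.} For $k=1$ the bound $\|\tilde\theta_1-\theta^*\|_1\le R_1$ is the initial-feasibility assumption, and $\|\tilde\theta_1-\bar\theta_1\|_1\le R_1$ holds vacuously since $\bar\theta_1$ lies in the $\ell_1$-ball of radius $R_1$ centered at $\tilde\theta_1$ by construction in REASON~1.

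\textbf{Inductive step (the $\ell_1$ bounds).} Assume $\|\tilde\theta_k-\theta^*\|_1 \le R_k$ for some $k\le k^*$. Then $\theta^*$ is feasible for epoch $k$, so Proposition~\ref{thm:prop1} applies and gives
\begin{equation*}
\|\bar\theta(T_k)-\theta^*\|_1^2 \;\le\; \tfrac{c'}{\sqrt{C}}\,R_k^2,
\end{equation*}
with probability at least $1-\delta_k$, provided $T_0 \ge T_k$, which holds by the key inequality above. Choosing $C$ large enough so that $c'/\sqrt{C}\le 1/2$ yields $\|\tilde\theta_{k+1}-\theta^*\|_1^2 \le R_k^2/2 = R_{k+1}^2$, closing the induction. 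The companion bound $\|\tilde\theta_{k+1}-\bar\theta_{k+1}\|_1 \le R_{k+1}$ holds by construction (projection onto the $\ell_1$ ball of radius $R_{k+1}$ centered at $\tilde\theta_{k+1}$). A union bound across the $k$ epochs accounts for the $3k\exp(-w^2/12)$ failure probability, noting $\delta_k = 3\exp(-w_k^2/12)\le 3\exp(-w^2/12)$.

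\textbf{$\ell_2$ bounds.} Given that $\theta^*$ is feasible at epoch $k$, Lemma~\ref{lemma:A1} gives $\|\hat\theta_k-\theta^*\|_2 \le \tfrac{4}{\gamma}\sqrt{s}\,\lambda_k$, and with the choice of $\lambda_k$ from \eqref{eq:Ti} combined with $T_0\ge T_k$, a direct substitution shows $\tfrac{\sqrt{s}\,\lambda_k}{\gamma} = \mathcal{O}(R_k/\sqrt{s})$. Meanwhile Lemma~\ref{lemma:A2} (again invoked with $T_0$ playing the role of $T_k$, which is valid for $k\le k^*$) gives $\|\bar\theta(T_k)-\hat\theta_k\|_2^2 \le \tfrac{c'}{\sqrt{C}\,s}R_k^2$. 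Applying the triangle inequality produces $\|\tilde\theta_{k+1}-\theta^*\|_2 \le c\,R_{k+1}/\sqrt{s}$, and the companion bound $\|\tilde\theta_k-\bar\theta_k\|_2 \le cR_k/\sqrt s$ follows similarly, completing the lemma.

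\textbf{Main obstacle.} The only non-routine step is verifying that the definition of $k^*$ in \eqref{eq:k_star} really matches the lower bound on $T_0$ needed for the conclusions of Proposition~\ref{thm:prop1} and Lemma~\ref{lemma:A2}, since the subordinate terms in \eqref{eq:Ticomplete} (those involving $G$ and $\rho_x$) must be shown to be dominated by the leading $s^2(\log d + \sigma^2 w^2)/(\gamma^2 R_k^2)$ term under the regime $T_0 = \mathcal{O}(\log d)$. Once this domination is checked, the induction is mechanical.
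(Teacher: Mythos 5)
Your proposal is correct and takes essentially the same route as the paper: the paper's own proof of Lemma~\ref{lemma:A5} is a one-line deferral to Lemma 5 of \citet{AgarwalNW12}, and the argument there is exactly the induction you spell out, namely that the definition of $k^*$ in \eqref{eq:k_star} guarantees $T_0$ exceeds the epoch-adapted length $T_k$ for all $k\le k^*$, so Proposition~\ref{thm:prop1} and Lemmas~\ref{lemma:A1}--\ref{lemma:A2} apply epoch by epoch to give error halving, feasibility of $\theta^*$, and the $\ell_1$/$\ell_2$ bounds via a union bound over epochs. Your closing caveat about the subordinate $G$ and $\rho_x$ terms in \eqref{eq:Ticomplete} is the same dominated-terms simplification the paper itself invokes, so nothing is missing.
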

Next, we analyze the behavior of REASON 1 after the first $k^*$ epochs. Since we cannot guarantee error halving, we can also not guarantee that $\theta^*$ remains feasible at later epochs. We use Lemma~\ref{lemma:A3} to control the error after the first $k^*$ epochs.
\begin{lemma}
\label{lemma:A3}
Suppose that Assumptions $A1-A3$ in the main text are satisfied at epochs $i=1,2,\dots$. Assume that at some epoch $k$, the epoch center $\tilde{\theta}_k$ satisfies the bound $\Vert \tilde{\theta}_k-\theta^*\Vert_2 \leq c_1 R_k/\sqrt{s}$ and that for all epochs $j \geq k$, the epoch lengths satisfy the bounds
\begin{align*}
\frac{s}{\gamma}\sqrt{\frac{\log d+\sigma_i^2w_i^2}{T_j}} \leq \frac{R_k}{2}~~ \text{and}~~\frac{\log d}{T_i} \leq c_2.
\end{align*}
Then for all epochs $j \geq k$, we have the error bound $\Vert q_j-\theta^* \Vert_2^2 \leq c_2 \frac{R_k^2}{s} $ with probability at least $1-3\sum_{i=k+1}^j \exp(-w_i^2/12)$.
\end{lemma}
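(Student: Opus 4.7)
\textbf{Proof proposal for Lemma~\ref{lemma:A3}.} The plan is to proceed by induction on the epoch index $j\geq k$. The base case $j=k$ is immediate from the hypothesis $\|\tilde{\theta}_k-\theta^*\|_2\leq c_1 R_k/\sqrt{s}$, since the iterate at the start of epoch $k$ equals $\tilde{\theta}_k$. For the inductive step, assume that $\|\tilde{\theta}_j-\theta^*\|_2\leq c' R_k/\sqrt{s}$ holds for some $j\geq k$, and derive the analogous bound for the epoch average $\bar{\theta}(T_j)=\tilde{\theta}_{j+1}$.

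The key difficulty is that Proposition~\ref{thm:prop1} requires the feasibility hypothesis $\|\theta^*-\tilde{\theta}_j\|_1\leq R_j$ in order to treat $\theta^*$ as a competitor inside the constrained $\theta$-update. For $j>k^*$ the radii $R_j$ have shrunk below the noise floor, so this feasibility can no longer be guaranteed. The plan is to bypass this by replacing $\theta^*$ with its $\ell_1$-projection $\tilde{\theta}^*_j$ onto the ball $\{\theta:\|\theta-\tilde{\theta}_j\|_1\leq R_j\}$, which by construction is feasible, and then applying the triangle inequality to pass between $\tilde{\theta}^*_j$ and $\theta^*$.

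The concrete steps are: (i) use sparsity of $\theta^*$ and the inductive $\ell_2$-control of $\tilde{\theta}_j-\theta^*$ to bound the projection gap $\|\tilde{\theta}^*_j-\theta^*\|_2=\mathcal{O}(R_k/\sqrt{s})$; this mirrors the cone argument used in Lemma~\ref{lemma:A1}, since the error vector $\theta^*-\tilde{\theta}_j$ concentrates its $\ell_1$-mass on the $s$-dimensional support of $\theta^*$. (ii) Re-run the single-epoch analysis of Proposition~\ref{thm:prop1} with $\theta^*$ replaced by the surrogate $\tilde{\theta}^*_j$, yielding
\begin{align*}
\|\bar{\theta}(T_j)-\tilde{\theta}^*_j\|_2^2\;\leq\;\frac{c\,s}{\gamma^2 T_j}\bigl(\log d+\sigma_j^2 w_j^2\bigr)\;+\;\mathcal{O}\!\left(\frac{R_j^2}{s}\right),
\end{align*}
where the first term captures sampling and linearization error and the second is the standard curvature correction from local strong convexity. (iii) Invoke the hypothesis $\tfrac{s}{\gamma}\sqrt{(\log d+\sigma_j^2 w_j^2)/T_j}\leq R_k/2$ to conclude that the first term is at most $R_k^2/(4s)$, and combine with step~(i) by the triangle inequality, using $R_j\leq R_k$ for $j\geq k$, to obtain $\|\bar{\theta}(T_j)-\theta^*\|_2^2\leq c_2 R_k^2/s$, closing the induction.

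The main obstacle is probabilistic: the sub-Gaussian gradient-error term $\frac{1}{T_j}\sum_{t=1}^{T_j}\langle e_t,\tilde{\theta}^*_j-\theta_t\rangle$ must be controlled simultaneously across all epochs $i\in\{k+1,\ldots,j\}$, not just one. We handle this by applying the concentration bound used in the proof of Proposition~\ref{thm:prop1} (Lemma 7 of \citet{AgarwalNW12}) once per epoch with parameter $w_i$, then taking a union bound over $i\in\{k+1,\ldots,j\}$, which accounts for the failure probability $3\sum_{i=k+1}^j\exp(-w_i^2/12)$ stated in the lemma. A secondary, more technical obstacle is that step~(i) supplies only an $\ell_2$-bound on $\tilde{\theta}^*_j-\theta^*$, whereas the ADMM update step uses $\ell_1$-geometry; sparsity of $\theta^*$ together with the usual $\ell_1$-to-$\ell_2$ comparison through $\sqrt{s}$ closes this gap and is responsible for the $1/s$ scaling in the final conclusion.
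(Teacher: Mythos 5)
Your proposal is correct and follows essentially the same route as the paper: the paper's own "proof" of Lemma~\ref{lemma:A3} is a one-line deferral to Lemma 3 of \citet{AgarwalNW12}, and that cited argument is precisely the induction you describe — replacing the possibly-infeasible $\theta^*$ by its projection onto the epoch's $\ell_1$ ball, controlling the projection gap via sparsity, re-running the single-epoch bound against the feasible surrogate, and union-bounding the per-epoch failure probabilities to get $1-3\sum_{i=k+1}^{j}\exp(-w_i^2/12)$.
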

In order to check the condition on epoch length in Lemma~\ref{lemma:A3}, we notice that with $k^*$ as in~\eqref{eq:k_star}, we have
\begin{align*}
c\frac{s}{\gamma}\sqrt{\frac{\log d+\sigma_i^2w^2}{T_0}} \leq R_1 2^{-k^*/2-1}=\frac{R_{k^*+1}}{2}.
\end{align*} 
Since we assume that constants $\sigma_k$ are decreasing in $k$, the inequality also holds for $k \geq k^*+1$, therefore Lemma~\ref{lemma:A3} applies in this setting.

The setting of epoch length in Theorem~\ref{thm:sparse_const_epoch} ensures that the total number of epochs we perform is 
\begin{align*}
k_0=\log \left(\frac{R_1 \gamma}{s}\sqrt{\frac{T}{\log d+\sigma^2w^2}}\right).
\end{align*}
Now we have two possibilities. Either $k_0 \leq k^*$ or $k_0 \geq k^*$. In the former, Lemma~\ref{lemma:A5} ensures that the error bound $\Vert \tilde{\theta}_{k_0}-\theta^*\Vert_2^2 \leq c R^2_{k_0}/s$. In the latter case, we use Lemma~\ref{lemma:A3} and get the error bound $c R^2_{k^*}/s$. Substituting values of $k_0$, $k^*$ in these bounds completes the proof.

Proof of Lemma~\ref{lemma:A5} and Lemma~\ref{lemma:A3} follows directly from that of Lemma 5 and Lemma 3 in~\citep{AgarwalNW12}.
\subsection{Proof of Guarantees for Sparse Graphical Model selection Problem} \label{sec:sparse_graph_proof}
Here we prove Corollary 1. According to C.1, in order to prove guarantees, we first need to bound $\Vert z_{k+1}-z_k \Vert_1$ and $\Vert z_k \Vert_\infty$. According to Equation~\eqref{eq:dual} and considering the imposed $\ell_1$ bound, this is equivalent to bound $\Vert g_{k+1}-g_k \Vert_1$ and $\Vert g_k \Vert_\infty$. The rest of the proof follows on lines of Theorem 1 proof. On the other hand, Lipschitz property requires a bound on $\| g_k\|_1$, which is much more stringent. 

Assuming we are in a close proximity of $\Theta^*$, we can use Taylor approximation to locally approximate $\Theta^{-1}$ by ${\Theta^*}^{-1} $ as in~\citep{Ravikumar&etal:08Arxiv}
\begin{align*}
\Theta^{-1}={\Theta^*}^{-1}-{\Theta^*}^{-1}\Delta{\Theta^*}^{-1}+\mathcal{R}(\Delta),
\end{align*}
where $\Delta=\Theta-\Theta^*$ and $\mathcal{R}(\Delta)$ is the remainder term.
We have
\begin{align*}
\Vert g_{k+1}-g_k \Vert_1 \leq \gennorm {\Gamma^*}_\infty \Vert \Theta_{k+1}-\Theta_k\Vert_1,
\end{align*}
and
\begin{align*}
\Vert g_k \Vert_\infty &\leq \Vert g_k-\mathbb{E}(g_k)\Vert_\infty+\Vert \mathbb{E}(g_k) \Vert_\infty  \\&\leq\Vert e_k\Vert_\infty+\Vert \Sigma^*-\Theta_k^{-1}\Vert_\infty \leq \sigma+\Vert \Gamma^* \Vert_\infty \Vert \Theta_{k+1}-\Theta_k\Vert_1.
\end{align*}
The term $\Vert \Theta_{k+1}-\Theta_k\Vert_1$ is bounded by $2R_i$ by construction. We assume $\gennorm{\Gamma^*}_\infty$ and $\Vert \Gamma^*\Vert_\infty$ are bounded.

 The error $\Delta$ needs to be ``small enough'' for the   $\mathcal{R}(\Delta)$ to be negligible, and we now provide the conditions for this.
By definition, $\mathcal{R}(\Delta)=\sum_{k=2}^{\infty} (-1)^k ({\Theta^*}^{-1}\Delta)^k {\Theta^*}^{-1}$.
Using triangle inequality and sub-multiplicative property for Frobenious norm,
\begin{align*}
\Vert \mathcal{R}(\Delta)\Vert_\mathbb{F} \leq \frac{\Vert {\Theta^*}^{-1}\Vert_\mathbb{F}\Vert\Delta {\Theta^*}^{-1}\Vert_\mathbb{F}^2}{1-\Vert\Delta {\Theta^*}^{-1}\Vert_\mathbb{F}}.
\end{align*}
For $\|\Delta\|_{\mathbb{F}}\leq 2R_i \leq  \frac{0.5}{\Vert {\Theta^*}^{-1}\Vert_\mathbb{F}} $, we get
\begin{align*}
\Vert \mathcal{R}(\Delta)\Vert_\mathbb{F} \leq \Vert {\Theta^*}^{-1}\Vert_\mathbb{F}.
\end{align*}
We assume  $\Vert \Sigma^* \Vert_\mathbb{F}$ is bounded.

Note that $\lbrace R_i \rbrace_{i=1}^{k_T}$ is a decreasing sequence and we only need to bound $R_1$. Therefore, if the variables are closely-related we need to start with a small $R_1$. For weaker correlations, we can start in a bigger ball.
The rest of the proof follows the lines of proof for Theorem~\ref{thm:sparse_optimal}, by replacing $G^2$ by  $\gennorm{\Gamma^*}_\infty  R_i (\sigma+\Vert \Gamma^*\Vert_\infty  R_i)$. Ignoring the higher order terms gives us Corollary 1.

\section{Guarantees for REASON 2} \label{sec:guaranteesR2}
First, we provide guarantees for the theoretical case such that epoch length depends on epoch radius. This provides intuition on how the algorithm is designed. The fixed-epoch algorithm is a special case of this general framework. We first state and prove guarantees for general framework. Next, we leverage these results to prove Theorem~\ref{thm:sparse_const_epoch}.
Let the design parameters be set as
\begin{align}
\label{parameter}
T_i& \simeq C \left[ (s+r+\frac{s+r}{\gamma})^2\left( \frac{\log p+\beta^2(p)\sigma_i^2\log(6/\delta_i)+}{R_i^2}\right) +(s+r+\frac{s+r}{\gamma})\left(\frac{G}{R_i}+\rho_x\right)\right],\\ \nonumber
\lambda_i^2&=\frac{\gamma}{(s+r)\sqrt{T_i}}  \sqrt{(R_i^2 +\tilde{R}_i^2){\log p}+\frac{ G^2(R_i^2 +\tilde{R}_i^2)}{T_i}+{\beta^2(p)}(R_i^2+\tilde{R}_i^2)\sigma_i^2\log\frac{3}{\delta_i}} \\ \nonumber
&\quad
+\frac{\rho_x (R_i^2+\tilde{R}_i^2)}{T_i}+\frac{\alpha^2}{p^2}+\frac{\beta^2(p)\sigma^2}{T_i}\left(\log p+\log\frac{1}{\delta_i}\right),\\ \nonumber
\\ \nonumber
\mu_i^2&=c_\mu\lambda_i^2, \quad
\rho  \propto \sqrt{\frac{T_i \log p }{R_i^2+\tilde{R}_i^2}},\quad \rho_x >0, \quad \tau=\rho.
\end{align}
\begin{theorem} 
\label{thm:latent_optimal}
Under assumptions $A2-A6$ and parameter settings as in~\eqref{parameter}, there exists a constant $c_0 >0$ such that REASON 2 satisfies the following for all $T>k_T$,
\begin{align*}
&\Vert \bar{S}(T)-S^* \Vert_\mathbb{F}^2 + \Vert \bar{L}(T)-L^* \Vert_\mathbb{F}^2 \leq \\&
\frac{c_0(s+r)}{T}\left[\log p+\beta^2(p)\sigma^2\left(w^2+\log k_T\right)\right]+\left( 1+\frac{s+r}{\gamma^2 p}\right)\frac{\alpha^2}{p}.
\end{align*}
with probability at least $1-6\exp(-w^2/12)$ and
\begin{align*}
k_T\simeq-\log \left( \frac{(s+r)^2}{\gamma^2 R_1^2 T} \left[\log p+\beta^2(p)\sigma^2w^2\right] \right).
\end{align*}
\end{theorem}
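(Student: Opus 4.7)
The plan is to mirror the epoch-based induction used for Theorem~\ref{thm:sparse_optimal}, but to handle the multi-block ADMM with the merged variable $W=[S;L]$ and the coupled constraints. First I would establish an epoch-level analogue of Proposition~\ref{thm:prop1}: conditioned on the feasibility $\Vert S^*-\tilde S_i\Vert_1^2\le R_i^2$, $\Vert L^*-\tilde L_i\Vert_*^2\le \tilde R_i^2$ at the start of epoch $i$, I would bound the one-epoch average error of REASON 2 by a sum of four terms, coming from (i) the linearized ADMM optimization gap (using the $W$-form \eqref{eq:4.4} together with Bregman three-point identity, as in Lemma~\ref{thm:epoch_avg_opt}), (ii) the dual bound $\Vert Z_k\Vert_2\le G+2\rho_0\sqrt{R_i^2+\tilde R_i^2}$, obtained from the optimality condition of the $M$-update exactly as in Lemma~\ref{lemma:dualsparse}, (iii) the stochastic noise term $\sum_k\langle E_k,\hat M_i-M_k\rangle$ controlled via Assumption A4 and the $\ell_1/$nuclear norm constraints, and (iv) the batch error $\Vert\hat S_i-S^*\Vert_{\mathbb{F}}^2+\Vert\hat L_i-L^*\Vert_{\mathbb{F}}^2$ imported from the batch analysis of \citet{agarwal2012noisy}.

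Next I would run the induction to show epoch-wise halving $R_{i+1}^2=R_i^2/2$ and $\tilde R_{i+1}^2=\tilde R_i^2/2$. This requires turning the bound on $\Vert \hat S_i+\hat L_i-\bar S(T_i)-\bar L(T_i)\Vert_{\mathbb{F}}^2$ (which is what LSC on $M=S+L$ yields directly) into separate bounds on the $\ell_1$ and nuclear errors of $\bar S(T_i)-\hat S_i$ and $\bar L(T_i)-\hat L_i$. The device here is to project each error onto the support of $S^*$ (resp.\ the range of $L^*$) and its complement, use $\mu_i=c_\mu\lambda_i$ together with decomposability of the regularizers so that the off-support parts are controlled by the on-support parts (producing the factor $s+r$ instead of $p$), and then combine with triangle inequality to the true parameters using the batch rate. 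Choosing $\lambda_i$ as in \eqref{parameter} balances the $\log p$, $\beta^2(p)\sigma^2$, $\rho_x$-quadratic, and $\alpha^2/p^2$ contributions, forcing the per-epoch error to drop to $R_i^2/2+\tilde R_i^2/2$ plus the unavoidable approximation error $(1+\tfrac{s+r}{\gamma^2 p})\alpha^2/p$. The approximation floor is what dictates the \textbf{STOP} criterion in REASON 2, so once $R_i^2$ reaches that level the recursion terminates and later error does not grow (Lemma~\ref{lemma:A3}-type argument).

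Having established geometric decay over $k_T$ epochs, I would finally sum $T(k_T)=\sum_{i=1}^{k_T}T_i$ using the $T_i$ schedule from \eqref{parameter}. Because $T_i$ is dominated by the $2^{i-1}/R_1^2$ term, $T(k_T)$ scales like $2^{k_T}$ up to logarithmic factors, so inverting gives
\begin{equation*}
2^{-k_T}\;\lesssim\;\frac{(s+r)^2\bigl[\log p+\beta^2(p)\sigma^2 w^2\bigr]}{\gamma^2 R_1^2\,T},
\end{equation*}
which, substituted into the epoch-$k_T$ error $(R_{k_T}^2+\tilde R_{k_T}^2)/(s+r)$, yields the rate stated in the theorem. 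Replacing $w^2$ by $12\log(6/\delta)$ produces the probability $1-6e^{-w^2/12}$ (union bound over epochs absorbs the $\log k_T$ correction inside $\sigma^2$).

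The main obstacle will be Step 2: separating the sparse and low-rank errors cleanly. A naive application of LSC on $S+L$ and of the single-block stochastic ADMM analysis produces cross terms $\langle \bar S-\hat S,\bar L-\hat L\rangle$ and a dimensional factor $p$ (rather than $s+r$) in front of the noise term, because $\|E_k\|_2$ rather than $\|E_k\|_\infty$ has to be used to dualize against a nuclear-norm-bounded error. Careful projection onto the support/range and using $\|\cdot\|_2\le\beta(p)\sigma$ exactly once per epoch, together with $\|\cdot\|_\infty$-duality for the $\ell_1$-bounded sparse error, is what gives the final $(s+r)\beta^2(p)$ scaling; getting this projection accounting right, while keeping the dual bound $\Vert Z_k\Vert_2$ from leaking a $\sqrt p$, is the delicate part of the argument.
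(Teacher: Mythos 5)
Your proposal follows essentially the same route as the paper's proof: a one-epoch proposition combining the Bregman/linearized ADMM analysis on the merged variable $W=[S;L]$, the dual-variable bound from the $M$-update optimality condition, the batch error imported from \citet{agarwal2012noisy}, and LSC on $S+L$ decoupled via support/range projections and the $\ell_\infty$ control on $\bar L(T_i)$; followed by the radius-halving induction down to the $\alpha^2/p$ approximation floor and the inversion of $T(k_T)=\sum_i T_i$ to obtain $2^{-k_T}$. The delicate point you flag (handling the cross term $\langle\hat\Delta_S,\hat\Delta_L\rangle$ and using $\|E_k\|_2\le\beta(p)\sigma$ only against the nuclear-norm-bounded error) is exactly the content of Lemmas~\ref{lemma:A2_latent} and the proof of Proposition~\ref{thm:prop1_latent}, so the proposal matches the paper's argument.
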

\noindent For Proof outline and detailed proof of Theorem~\ref{thm:latent_optimal} see Appendix~\ref{sec:outline2} and~\ref{sec:theorem2proof} respectively.

\subsection{Proof outline for Theorem \ref{thm:latent_optimal}} \label{sec:outline2}
The foundation block for this proof is Proposition \ref{thm:prop1_latent}.
\begin{proposition}
\label{thm:prop1_latent}
Suppose $f$ satisfies Assumptions $A1- A6$ with parameters $\gamma$ and $\sigma_i$ respectively and assume that $\Vert S^*-\tilde{S}_i \Vert_1^2 \leq R_i^2$, $\Vert L^*-\tilde{L}_i \Vert_1^2 \leq \tilde{R}_i^2$. 
We apply the updates in REASON 2 with parameters as in \eqref{parameter}. Then, there exists a universal constant $c$ such that for any radius $R_i,\tilde{R}_i$, $\tilde{R}_i=c_r R_i, 0 \leq c_r \leq 1$,
\begin{align}
\sublabon{equation}
\label{eq:prop1aval_latent}
&f(\bar{M}(T_i))+\lambda_i \phi(\bar{W}(T_i))-f(\hat{M}_i)-\lambda_i \phi(\hat{W}(T_i)) \\ \nonumber
& \leq
\sqrt{\frac{R_i^2 +\tilde{R}_i^2}{T_i}}\sqrt{\log p}+\frac{R_i^2 +\tilde{R}_i^2}{T_i}\rho_x+\frac{G\sqrt{R_i^2 +\tilde{R}_i^2}}{T_i}\\ \nonumber&~~+\frac{{\beta(p)}(R_i+\tilde{R}_i)\sigma_i\sqrt{12\log \frac{3}{\delta_i}}}{\sqrt{T_i}}+\frac{{\beta(p)}G (R_i+\tilde{R}_i)\sigma_i\sqrt{12\log \frac{3}{\delta_i}}}{T_i\sqrt{\log p}},\\ 
\label{eq:prop1dovom_latent}
&\Vert {\bar{S}(T_i)-S^*} \Vert_1^2 \leq \frac{c'}{\sqrt{C}}R_i^2+c(s+r+\frac{(s+r)^2}{p\gamma^2})\frac{\alpha^2}{p},\\ \nonumber
&\Vert {\bar{L}(T_i)-L^*} \Vert_*^2 \leq \frac{c'}{\sqrt{C}}\frac{1}{1+\gamma}R_i^2+c\frac{(s+r)^2}{p\gamma^2}\frac{\alpha^2}{p}.
\end{align}
\sublaboff{equation}
where both bounds are valid with probability at least $1-\delta_i$.
\end{proposition}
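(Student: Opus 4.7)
The plan is to follow the same two-layer structure used for Proposition~\ref{thm:prop1}, but adapted to the multi-block setting. The first layer establishes a BADMM-style convergence bound within a single epoch (equation~\eqref{eq:prop1aval_latent}); the second layer converts this functional gap into error bounds on $S$ and $L$ separately (equation~\eqref{eq:prop1dovom_latent}). The entire scheme hinges on the reduction outlined in Section~\ref{latent_outline}: merge $S,L$ into $W=[S;L]$ with $A=[I,I]$ and $\phi(W)=\|S\|_1+(\mu_i/\lambda_i)\|L\|_*$, and view the inexact proximal step~\eqref{eq:4.4} as a standard two-block ADMM on $(M,W)$. This lets us reuse the per-iteration analysis of Lemma~\ref{thm:epoch_avg_opt}, carrying the Bregman decomposition through unchanged.

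For the functional gap, I would first bound the dual variable $Z_k$ by writing the optimality condition for the $M$-update (analogous to Lemma~\ref{lemma:dualsparse}): since $\tau=\rho$ and $M_{k+1}$ has a closed-form update, we get $\|Z_k\|_2\le G+2\rho_0\sqrt{R_i^2+\tilde R_i^2}$, where the Frobenius ball replaces the $\ell_1$ ball. Telescoping $-\langle Z_k,M_{k+1}-S_{k+1}-L_{k+1}\rangle$ and the Bregman terms, together with $M_0=S_0+L_0$ and averaging, produces a deterministic part of order $\sqrt{(R_i^2+\tilde R_i^2)\log p/T_i}$ plus the residual sampling term $\frac{1}{T_i}\sum_k \langle E_k,\hat M_i - M_k\rangle$. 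The stochastic term is where $\beta(p)$ enters: writing $\langle E_k,\hat M_i-M_k\rangle \le \|E_k\|_2 \|\hat M_i - M_k\|_*$ and using the feasibility $\|\hat M_i-M_k\|_* \le (R_i+\tilde R_i)$ (by nuclear-norm duality on the $\ell_1$ and nuclear projections), then invoking Lemma~7 of~\cite{AgarwalNW12} on $\|E_k\|_2\le \beta(p)\sigma$, yields the $\beta(p)(R_i+\tilde R_i)\sigma_i\sqrt{\log(3/\delta_i)}/\sqrt{T_i}$ term. Note that I only need LSC on $M=S+L$, not jointly, because the functional gap naturally lives on $M$.

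For~\eqref{eq:prop1dovom_latent}, I would proceed in two stages. First, bound $\|\bar S(T_i)-\hat S_i\|_1$ and $\|\bar L(T_i)-\hat L_i\|_*$ by combining~\eqref{eq:prop1aval_latent} with feasibility of $(S^*,L^*)$ in the projection sets (guaranteed inductively through the epoch update rule $R_{i+1}^2=R_i^2/2$). As in the sparse proof, I replace $\hat W_i$ by $W^*=[S^*;L^*]$ in the functional gap and use the minimality of $\hat M_i$ to get a lower bound of $\lambda_i(\phi(\bar W(T_i))-\phi(W^*))$; then decompose onto on/off support for $S$ and onto the range/co-range of $L^*$ via its SVD, exactly as is done for sparse plus low-rank decomposable regularizers~\citep{negahban2012unified,agarwal2012noisy}. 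The decomposability gives the $\Vert\Delta_{S^c}\Vert_1 \lesssim \Vert\Delta_S\Vert_1 + \text{(noise)}$ cone condition for $S$ and analogously for $L$, letting me convert the $\ell_1$ (resp. nuclear) error into a Frobenius error times $\sqrt{s}$ (resp. $\sqrt{r}$). Second, invoke the batch guarantees of~\citet{agarwal2012noisy} to bound $\|\hat S_i-S^*\|_F$ and $\|\hat L_i-L^*\|_F$, which is where the non-vanishing approximation error $\alpha^2/p$ enters through the $\ell_\infty/p$ constraint on $L$. Triangle inequality combines the two stages.

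The main obstacle will be decoupling the two errors cleanly. Because LSC is only assumed on $S+L$, the functional gap bounds only $\|\bar S(T_i)+\bar L(T_i)-\hat S_i-\hat L_i\|_F^2$, not the individual errors. Separation is only possible by exploiting the norm-ball projections $\Vert S-\tilde S_i\Vert_1\le R_i$ and $\Vert L-\tilde L_i\Vert_* \le \tilde R_i$, together with the support/range decomposition above; a naive Cauchy-Schwarz would introduce an extra $\sqrt{p}$ factor and destroy the scaling. The second delicate point is the tension between bounding $\langle E_k,\cdot\rangle$ by the nuclear norm (which gives the clean $\beta(p)(R_i+\tilde R_i)$ dependence) versus by $\|\cdot\|_1$ (which would produce $\sigma\sqrt{\log p}$ but forbid matching the batch rate). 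Choosing the correct duality for each term, and keeping the cross-term $\langle E_k,L\text{-part}\rangle$ aligned with the nuclear norm projection, is what ultimately lets $\beta(p)$ appear with only the $\sqrt{(R_i^2+\tilde R_i^2)/T_i}$ factor rather than an ambient-dimension factor.
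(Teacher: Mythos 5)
Your overall architecture matches the paper's: the merged-variable BADMM analysis on $(M,W)$ for the functional gap \eqref{eq:prop1aval_latent} (including the dual-variable bound and the treatment of $\sum_k\langle E_k,\hat{M}_i-M_k\rangle$ via $\Vert E_k\Vert_2\le\beta(p)\sigma$ --- your single spectral/nuclear H\"older step is interchangeable with the paper's split into $\Vert E_k\Vert_\infty\Vert S_k-\hat{S}_i\Vert_1+\Vert E_k\Vert_2\Vert L_k-\hat{L}_i\Vert_*$, since $\Vert\cdot\Vert_*\le\Vert\cdot\Vert_1$), and the cone/decomposability argument combined with the batch bounds of \citet{agarwal2012noisy} for \eqref{eq:prop1dovom_latent}.

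The genuine gap is in the step you yourself flag as the main obstacle: decoupling $\Vert\hat{\Delta}_S\Vert_{\mathbb{F}}^2+\Vert\hat{\Delta}_L\Vert_{\mathbb{F}}^2$ from the LSC bound on $\Vert\hat{\Delta}_S+\hat{\Delta}_L\Vert_{\mathbb{F}}^2$. You attribute the separation to the $\ell_1$ and nuclear norm ball projections together with the support/range splitting, but those alone cannot control the cross term $\Tr(\hat{\Delta}_S\hat{\Delta}_L)$: knowing only $\Vert\hat{\Delta}_S\Vert_1\le 2R_i$ and $\Vert\hat{\Delta}_L\Vert_*\le 2\tilde{R}_i$ still allows the two errors to cancel almost completely in the sum, and the support/range splitting only converts norms after the individual Frobenius errors are in hand. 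The paper's decoupling (Lemma~\ref{lemma:A2_latent}) bounds $\vert\Tr(\hat{\Delta}_S\hat{\Delta}_L)\vert\le\Vert\hat{\Delta}_S\Vert_1\Vert\hat{\Delta}_L\Vert_\infty$ and then controls $\Vert\hat{\Delta}_L\Vert_\infty\le 2\alpha/p+\sqrt{p}/(\tau T_i)$ using the elementwise constraint $\Vert Y\Vert_\infty\le\alpha/p$ enforced in the $Y$-update, the identity $\bar{L}(T_i)-\bar{Y}(T_i)=-U_{T_i}/(\tau T_i)$, and Watson's bound $\Vert\nabla\Vert L\Vert_*\Vert_\infty\le\sqrt{\mathrm{rank}(L)}\le\sqrt{p}$. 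This spikiness/$\ell_\infty$ mechanism --- not the $\ell_1$/nuclear balls --- is what makes the cross term small; it is also the source of the additional $(2\alpha/\sqrt{p}+p/(\tau T_i))^2$ contribution to the online epoch error, which feeds the $c(s+r+(s+r)^2/(p\gamma^2))\alpha^2/p$ term in \eqref{eq:prop1dovom_latent}. Your plan, which lets $\alpha^2/p$ enter only through the batch estimate of \citet{agarwal2012noisy}, would not produce this term and, more importantly, leaves the individual Frobenius (and hence $\ell_1$ and nuclear) error bounds unproved.
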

\noindent In order to prove Proposition \ref{thm:prop1_latent}, we need two more lemmas.

To move forward, we use the following notations: $\Delta(T_i)=\hat{S}_i-S^*+\hat{L}_i-L^*$, $\Delta^*(T_i)=\bar{S}(T_i)-S^*+\bar{L}(T_i)-L^*$ and
 $\hat{\Delta}(T_i)=\bar{S}_i-\hat{S}_i+\bar{L}_i-\hat{L}_i$. In addition $\Delta_S(T_i)=\hat{S}_i-S^*$, with alike notations for $\Delta_L(T_i)$. For on and off support part of $\Delta(T_i)$, we use $(\Delta(T_i))_\supp$ and $(\Delta(T_i))_{\supp^c}$.
 \begin{lemma}
 \label{lemma:A1_latent}
At epoch $i$ assume that $\Vert S^*-\tilde{S} \Vert_1^2 \leq R_i^2$, $\Vert L^*-\tilde{L} \Vert_1^2 \leq \tilde{R}_i^2$. Then the errors $ \Delta_S(T_i), \Delta_L(T_i)$ satisfy the bound
\begin{align*}
\Vert \hat{S}_i-S^*\Vert_\mathbb{F}^2+\Vert \hat{L}_i-L^*\Vert_\mathbb{F}^2 \leq c\lbrace s\frac{\lambda_i^2}{\gamma^2}+r\frac{\mu_i^2}{\gamma^2} \rbrace.
\end{align*}
\end{lemma}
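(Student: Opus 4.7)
\medskip

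\noindent\textbf{Proof plan for Lemma~\ref{lemma:A1_latent}.}
The plan is to follow the unified high-dimensional $M$-estimator framework of \citet{negahban2012unified} as specialized to sparse $+$ low rank decomposition in \citet{agarwal2012noisy}, with the additional bookkeeping required by the epoch-based analysis. Write $\Delta_S = \hat{S}_i-S^*$, $\Delta_L=\hat{L}_i-L^*$, and $\Delta=\Delta_S+\Delta_L$. Assuming, by the inductive hypothesis carried over from the previous epoch, that $(S^*,L^*)$ is feasible in the batch program at epoch $i$ (i.e.\ inside the $R_i$, $\tilde R_i$ balls around $\tilde S_i,\tilde L_i$ and satisfying $\|L^*\|_\infty\le\alpha/p$ by Assumption~A5), the optimality of $(\hat S_i,\hat L_i)$ versus $(S^*,L^*)$ yields the basic inequality
\begin{equation*}
f(\hat{M}_i)-f(M^*) \;\le\; \lambda_i\bigl(\|S^*\|_1-\|\hat S_i\|_1\bigr)+\mu_i\bigl(\|L^*\|_*-\|\hat L_i\|_*\bigr).
\end{equation*}
Applying the local strong convexity (Assumption~A6) on the left and using a Hölder/duality bound $\langle \nabla f(M^*),\Delta\rangle \le \|\nabla f(M^*)\|_\infty\|\Delta_S\|_1 + \|\nabla f(M^*)\|_2\|\Delta_L\|_*$ on the gradient term produces
\begin{equation*}
\tfrac{\gamma}{2}\|\Delta\|_\mathbb{F}^2 \;\le\; \lambda_i\bigl(\|S^*\|_1-\|\hat S_i\|_1\bigr)+\mu_i\bigl(\|L^*\|_*-\|\hat L_i\|_*\bigr) + \tfrac{\lambda_i}{2}\|\Delta_S\|_1 + \tfrac{\mu_i}{2}\|\Delta_L\|_*,
\end{equation*}
provided $\lambda_i \gtrsim \|\nabla f(M^*)\|_\infty$ and $\mu_i \gtrsim \|\nabla f(M^*)\|_2$, which is exactly how the regularizers are calibrated in~\eqref{parameter}.

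Next I would exploit decomposability: decompose $\Delta_S = \Delta_{S,\mathrm{supp}}+\Delta_{S,\mathrm{supp}^c}$ on and off the support of $S^*$, and similarly split $\Delta_L$ into a component whose row/column spaces lie inside those of $L^*$ and an orthogonal remainder. The triangle inequalities $\|\hat S_i\|_1\ge \|S^*\|_1+\|\Delta_{S,\mathrm{supp}^c}\|_1-\|\Delta_{S,\mathrm{supp}}\|_1$ and the analogous nuclear-norm inequality for $\hat L_i$ (using that the nuclear norm is decomposable on complementary subspaces, e.g.\ \citet{Watson1992}) then collapse the basic inequality into the cone-type bound
\begin{equation*}
\tfrac{\gamma}{2}\|\Delta\|_\mathbb{F}^2 \;\le\; \tfrac{3\lambda_i}{2}\|\Delta_{S,\mathrm{supp}}\|_1 + \tfrac{3\mu_i}{2}\|\Delta_{L,\mathrm{range}}\|_* \;\le\; \tfrac{3\lambda_i}{2}\sqrt{s}\,\|\Delta_S\|_\mathbb{F} + \tfrac{3\mu_i}{2}\sqrt{2r}\,\|\Delta_L\|_\mathbb{F},
\end{equation*}
using $|\mathrm{supp}(S^*)|\le s$ and $\mathrm{rank}(L^*)\le r$.

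The remaining obstacle, and the only genuinely delicate point, is that the left-hand side is $\|\Delta_S+\Delta_L\|_\mathbb{F}^2$ rather than $\|\Delta_S\|_\mathbb{F}^2+\|\Delta_L\|_\mathbb{F}^2$, since LSC only holds in the $M=S+L$ direction. To decouple I would invoke the spikiness control from Assumption~A5 exactly as in \citet{agarwal2012noisy}: both $L^*$ and the feasible iterate $\hat L_i$ satisfy $\|\cdot\|_\infty \le \alpha/p$, so $\|\Delta_L\|_\infty\le 2\alpha/p$, and combining this with the established cone inequality (which forces $\|\Delta_S\|_1$ and $\|\Delta_L\|_*$ to be dominated by their low-complexity projections) yields a bound of the form $\|\Delta_S\|_\mathbb{F}^2+\|\Delta_L\|_\mathbb{F}^2 \le 2\|\Delta\|_\mathbb{F}^2 + c\,\tfrac{(s+r)\alpha^2}{p^2}$. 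Substituting this back, solving the resulting quadratic in $\|\Delta\|_\mathbb{F}$, and absorbing the $\alpha^2/p^2$ term into the higher-order pieces (it is already present in the expression for $\lambda_i^2$ in~\eqref{parameter}) delivers
\begin{equation*}
\|\hat{S}_i-S^*\|_\mathbb{F}^2+\|\hat{L}_i-L^*\|_\mathbb{F}^2 \;\le\; c\Bigl\{s\,\tfrac{\lambda_i^2}{\gamma^2}+r\,\tfrac{\mu_i^2}{\gamma^2}\Bigr\},
\end{equation*}
as claimed. The main technical hurdle is this last decoupling step, since a naive application of LSC would only bound $\|\Delta_S+\Delta_L\|_\mathbb{F}$ and would leave the individual Frobenius errors uncontrolled; the spikiness assumption is what makes the decomposition statistically identifiable and closes the argument.
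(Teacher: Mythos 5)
Your proposal is correct and follows essentially the same route as the paper: the paper simply invokes Theorem 1 of \citet{agarwal2012noisy} (via the decomposable-regularizer framework of \citet{negahban2012unified}) to get the stated bound once $\lambda_i \gtrsim \Vert \tfrac{1}{T_i}\sum_k E_k\Vert_\infty + \gamma\alpha/p$ and $\mu_i \gtrsim \Vert \tfrac{1}{T_i}\sum_k E_k\Vert_2$, whereas you re-derive the internals of that theorem (basic inequality, cone condition, spikiness-based decoupling). The only piece you elide that the paper supplies explicitly is the matrix Hoeffding bound (its Lemma~\ref{lemma:lambda2}) verifying that, under Assumption A4, the deterministic choices of $\lambda_i,\mu_i$ in \eqref{parameter} actually dominate these random gradient-error norms with probability at least $1-\delta_i$.
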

\begin{lemma}
\label{lemma:A2_latent}
Under the conditions of Proposition~\ref{thm:prop1_latent} and with parameter settings \eqref{parameter}, \eqref{parameter}, we have
\begin{align*}
& \Vert \hat{S}_i-\bar{S}(T_i)  \Vert_\mathbb{F}^2+\Vert \hat{L}_i-\bar{L}(T_i)\Vert_\mathbb{F}^2 \\
 &\leq \frac{2}{\gamma}\left(\sqrt{\frac{R_i^2 +\tilde{R}_i^2}{T_i}}\sqrt{\log p}+\frac{R_i^2 +\tilde{R}_i^2}{T_i}\rho_x+\frac{ G\sqrt{R_i^2 +\tilde{R}_i^2}}{T_i}\right.\\&\quad\left.+\frac{{\beta(p)}(R_i+\tilde{R}_i)\sigma_i\sqrt{12\log(3/\delta_i)}}{\sqrt{T_i}}+\frac{{\beta(p)}G (R_i+\tilde{R}_i)\sigma_i\sqrt{12\log(3/\delta_i)}}{T-i \sqrt{\log p}}\right)+(\frac{2\alpha}{\sqrt{p}}+\frac{{p}}{\tau T_i})^2,
\end{align*}
with probability at least $1-\delta_i$.
 \end{lemma}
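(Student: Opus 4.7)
The plan is to combine Proposition~\ref{thm:prop1_latent} with the local strong convexity assumption A6 to control $\Vert \bar{M}(T_i)-\hat{M}_i\Vert_\mathbb{F}^2$, and then to use the $\ell_\infty$ constraint A5 together with the ADMM consensus dynamics to decouple this into individual bounds on $\bar{S}(T_i)-\hat{S}_i$ and $\bar{L}(T_i)-\hat{L}_i$. This parallels the role played by Lemma~\ref{lemma:A2} in the sparse-only proof, but with an added decoupling layer that is responsible for the extra $(2\alpha/\sqrt{p}+p/(\tau T_i))^2$ term.

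First, I would apply A6 at the batch optimum $\hat{M}_i$ and chain it with the KKT condition for the batch problem (noting that $-\nabla f(\hat{M}_i)$ lies in $\lambda_i\partial\Vert\hat{S}_i\Vert_1+\mu_i\partial\Vert\hat{L}_i\Vert_*$, modulo the dual of the $\ell_\infty$ feasibility constraint on $L$). This yields, exactly as in the single-variable case,
\[
\tfrac{\gamma}{2}\Vert \bar{M}(T_i)-\hat{M}_i\Vert_\mathbb{F}^2 \;\leq\; f(\bar{M}(T_i))-f(\hat{M}_i)+\lambda_i\phi(\bar{W}(T_i))-\lambda_i\phi(\hat{W}_i),
\]
whose right-hand side is exactly what Proposition~\ref{thm:prop1_latent}~\eqref{eq:prop1aval_latent} bounds from above, giving a high-probability estimate on $\Vert \bar{M}(T_i)-\hat{M}_i\Vert_\mathbb{F}^2$ matching the first line of the claimed bound.

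The second, harder, step is the decoupling. In REASON 2 the three primal blocks $M,S,L$ are updated independently, so $\bar{M}(T_i)\neq \bar{S}(T_i)+\bar{L}(T_i)$ in general; consensus is enforced only asymptotically through the dual variable $Z$. Telescoping the update $Z_{k+1}=Z_k-\tau(M_{k+1}-S_{k+1}-L_{k+1})$ across the epoch and invoking a dual-boundedness estimate analogous to Lemma~\ref{lemma:dualsparse} (now with $\Vert Z_k\Vert_\mathbb{F}=\mathcal{O}(p)$ from the matrix-sized subgradient of $f$) yields a consensus gap of order $p/(\tau T_i)$, accounting for the second summand in the extra term. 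For the individual $L$-error, A5 gives $\Vert\bar{L}(T_i)-\hat{L}_i\Vert_\infty\leq 2\alpha/p$ (since both iterates are enforced to satisfy $\Vert\cdot\Vert_\infty\leq\alpha/p$), and converting this entrywise bound into a Frobenius bound by projecting onto the range of $L^*$---where the effective dimension is of order $rp$---contributes the $2\alpha/\sqrt{p}$ summand, in the spirit of~\citet{agarwal2012noisy}.

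The main obstacle will be carrying out this decoupling without losing factors of $p$. Because LSC is imposed only on $M=S+L$ rather than jointly on $(S,L)$, a naive expansion of $\Vert A\Vert_\mathbb{F}^2+\Vert B\Vert_\mathbb{F}^2$ into $\Vert A+B\Vert_\mathbb{F}^2$ plus cross terms inflates the $p$-dependence if one bounds $\Vert B\Vert_\mathbb{F}=\Vert\bar{L}-\hat{L}_i\Vert_\mathbb{F}$ as $\mathcal{O}(\alpha)$ from A5 alone. The careful combined use of A5, the rank structure of $\bar{L}-\hat{L}_i$, and the consensus gap is what keeps the extra error at the sharp $(2\alpha/\sqrt{p}+p/(\tau T_i))^2$ level stated in the lemma.
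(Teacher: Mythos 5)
Your first step is the same as the paper's: apply local strong convexity (A6) at the batch optimum $\hat{M}_i$, use the fact that $\hat{M}_i$ minimizes $f+\lambda_i\phi$, and upper-bound the resulting function-value gap by the epoch bound \eqref{eq:prop1aval_latent}. That part is fine (the paper additionally notes that $\bar{M}(T_i)$ and $\bar{S}(T_i)+\bar{L}(T_i)$ differ by $Z_{T_i}/(\tau T_i)$, a higher-order correction). The gap is in your decoupling step, and it is a real one. First, you assert that $\bar{L}(T_i)$ satisfies $\Vert\cdot\Vert_\infty\leq\alpha/p$ "since both iterates are enforced to satisfy" it. It is not: in REASON 2 the $\ell_\infty$ constraint is imposed on the auxiliary variable $Y$, while the $L$-update carries only the nuclear-norm ball constraint. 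The paper must therefore bound $\Vert\bar{L}(T_i)\Vert_\infty\leq\Vert\bar{Y}(T_i)\Vert_\infty+\Vert\bar{L}(T_i)-\bar{Y}(T_i)\Vert_\infty$, telescope the $U$-update (not the $Z$-update) to get $\bar{L}(T_i)-\bar{Y}(T_i)=-U_{T_i}/(\tau T_i)$, and invoke Watson's characterization of the nuclear-norm subdifferential to get $\Vert U_{T_i}\Vert_\infty\leq\sqrt{p}$. This is the actual source of the $p/(\tau T_i)$ summand; your attribution of it to the $M$-versus-$(S+L)$ consensus gap through $Z$ is misplaced, since that gap is disposed of separately as a higher-order term that never reaches the final bound.

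Second, and more importantly, you never supply the mechanism that controls the cross term. The identity $\Vert\hat{\Delta}_S\Vert_\mathbb{F}^2+\Vert\hat{\Delta}_L\Vert_\mathbb{F}^2=\Vert\hat{\Delta}_S+\hat{\Delta}_L\Vert_\mathbb{F}^2-2\Tr(\hat{\Delta}_S\hat{\Delta}_L)$ is the crux, and the paper bounds the cross term by H\"older as
\begin{align*}
\vert\Tr(\hat{\Delta}_S\hat{\Delta}_L)\vert\leq\Vert\hat{\Delta}_S\Vert_1\,\Vert\hat{\Delta}_L\Vert_\infty\leq\Bigl(\tfrac{2\alpha}{p}+\tfrac{\sqrt{p}}{\tau T_i}\Bigr)\Vert\hat{\Delta}_S\Vert_1,
\end{align*}
then converts $\Vert\hat{\Delta}_S\Vert_1$ to a Frobenius norm and absorbs it into the left-hand side by completing the square, which is exactly what produces the additive $(2\alpha/\sqrt{p}+p/(\tau T_i))^2$. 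Your alternative --- converting the entrywise bound on $\hat{\Delta}_L$ into a Frobenius bound via the rank structure, with effective dimension $rp$ --- bounds the wrong object ($\Vert\hat{\Delta}_L\Vert_\mathbb{F}$ itself rather than the cross term), would pick up an extra $\sqrt{r}$ factor, and in any case does not yield a bound on $\Vert\hat{\Delta}_S\Vert_\mathbb{F}^2$ at all. You correctly diagnose that a naive treatment of the cross term loses factors of $p$, but the "careful combined use" you gesture at is left unspecified, and the specific route you sketch does not close it.
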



\section{Proof of Theorem \ref{thm:latent_optimal} } \label{sec:theorem2proof}
The first step is to ensure that  $\Vert S^*-\tilde{S}_i \Vert_1^2 \leq R_i^2$, $\Vert L^*-\tilde{L}_i \Vert_1^2 \leq \tilde{R}_i^2$ holds at each epoch so that Proposition~\ref{thm:prop1_latent} can be applied in a recursive manner. We prove this in the same manner we proved Theorem 1, by induction on the epoch index. By construction, this bound holds at the first epoch. Assume that it holds for epoch $i$. Recall that $T_i$ is defined by \eqref{parameter} where $C \geq 1$ is a constant we can choose. By substituting this $T_i$ in inequality \eqref{eq:prop1dovom_latent}, the simplified bound \eqref{eq:prop1dovom_latent} further yields
\begin{align*}
\Vert {\Delta^*_S}(T_i) \Vert_1^2 \leq \frac{c'}{\sqrt{C}}R_i^2+c(s+r+\frac{(s+r)^2}{p\gamma^2})\frac{\alpha^2}{p},
\end{align*}
Thus, by choosing $C$ sufficiently large, we can ensure that $ \Vert \bar{S}(T_i)-S^*\Vert_1^2 \leq  R_i^2/2:=R_{i+1}^2$.
Consequently, if $S^*$ is feasible at epoch $i$, it stays feasible at epoch $i+1$. Hence, we guaranteed the feasibility of $S^*$ throughout the run of algorithm by induction.
As a result, Lemma \ref{lemma:A1_latent} and \ref{lemma:A2_latent} apply and for $\tilde{R}_i=c_rR_i$, we find that
\begin{align*}
&\Vert {\Delta}^*_S(T_i) \Vert_\mathbb{F}^2 \leq \frac{1}{s+r}R_i^2+(1+\frac{s+r}{\gamma^2p})\frac{2\alpha^2}{p}.
\end{align*}
The bound holds with probability at least $1-3\exp(-w_i^2/12)$. The same is true for $ \Vert {\Delta}^*_L(T_i) \Vert_\mathbb{F}^2$. Recall that $R_i^2=R_1^22^{-(i-1)}$.
Since $w_i^2=w^2+24\log i$, we can apply union bound to simplify the error probability as $1-6\exp(-w^2/12)$. Let $\delta=6\exp(-w^2/12)$, we write the bound in terms of $\delta$, using $w^2=12\log(6/\delta)$.

Next we convert the error bound  from its dependence on the number of epochs $k_T$ to the number of iterations needed to complete $k_T$ epochs, i.e. $T(K)=\sum_{i=1}^k T_i$.
Using the same approach as in proof of Theorem \ref{thm:sparse_optimal}, we get
\begin{align*}
k_T\simeq -\log \frac{(s+r+(s+r)/\gamma)^2}{R_1^2 T}-\log \left[\log p+12\beta^2(p)\sigma^2w^2\right]. 
\end{align*}
As a result
\begin{align*}
\Vert {\Delta}^*_S(T_i) \Vert_\mathbb{F}^2 \leq
\frac{C(s+r)}{T}\left[\log p+\beta^2(p)\sigma^2\left(w^2+\log k_T)\right]\right]+\frac{\alpha^2}{p}.
\end{align*}
For the low-rank part, we proved feasibility in proof of Equation \eqref{eq:prop1dovom_latent}, consequently The same bound holds for $\Vert {\Delta}^*_L(T_i) \Vert_\mathbb{F}^2$.
\subsection{Proofs for Convergence within a Single Epoch for Algorithm \ref{our_method_latent}}
We showed that our method is equivalent to running Bregman ADMM on $M$ and $W=[S;L]$. Consequently, our previous analysis for sparse case holds true for the error bound on sum of loss function and regularizers within a single epoch. With $\rho=c_2\sqrt{T_i},\tau=\rho, c_2=\frac{\sqrt{\log p}}{\sqrt{R_i^2+\tilde{R}_i^2}}$. We use the same approach as in Section~\ref{sec:epochopt1} for bounds on dual variable $Z_k$. Hence,
\begin{align*}
&f(\bar{M}(T_i))+\lambda_i \phi(\bar{W}(T_i))-f(\hat{M}_i)-\lambda_i \phi(\hat{W}(T_i)) \\& \leq
\frac{c_2\Vert A\hat{W}(T_i)-AW_0\Vert_\mathbb{F}^2}{\sqrt{T_i}}+\frac{\rho_x\Vert\hat{M}(T_i)-M_0\Vert_\mathbb{F}^2}{T_i}+\frac{ G R_i}{T_i}+\frac{R_i \sqrt{\log p}}{\sqrt{T_i}}\\ \nonumber
&\quad+\frac{\sum_{k=1}^{T_i} \Tr( E_k,\hat{M}_i-M_k )}{T_i} \\ \nonumber
&\leq \left[\frac{c_2}{\sqrt{T_i}}+\frac{\rho_x}{T_i}\right]\Vert \hat{S}_i-\tilde{S}_i+\hat{L}_i-\tilde{L}_i\Vert_\mathbb{F}^2+\frac{ G R_i}{T_i}+\frac{R_i \sqrt{\log p}}{\sqrt{T_i}}\\&\quad+\frac{\sum_{k=1}^{T_i} \Tr( E_k,\hat{M}_i-M_k )}{T_i}.
\end{align*}
By the constraints enforced in the algorithm, we have
\begin{align*}
&f(\bar{M}(T_i))+\lambda_i \phi(\bar{W}(T_i))-f(\hat{M}_i)-\lambda_i \phi(\hat{W}(T_i)) \\
& \leq \sqrt{\frac{R_i^2 +\tilde{R}_i^2}{T_i}}\sqrt{\log p}+\frac{R_i^2 +\tilde{R}_i^2}{T_i}\rho_x+\frac{ G\sqrt{R_i^2 +\tilde{R}_i^2}}{T_i}+\frac{\sum_{k=1}^{T_i} \Tr( E_k,\hat{M}_i-M_k )}{T_i}.
\end{align*}

\begin{lemma}
The dual variable in REASON 2 is bounded. i.e., 
\begin{align*}
\Vert Z_k \Vert_1 \leq G+2\rho_0 R_i, ~~\text{where}~~ \rho_0:= \rho_x+\rho.
\end{align*}
\end{lemma}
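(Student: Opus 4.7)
My plan is to mirror the argument used for Lemma~\ref{lemma:dualsparse} in the sparse setting, since the $M$-update of REASON 2 plays exactly the role of the $\theta$-update of REASON 1 (both are an inexact proximal step on the smooth loss), and the dual update $Z_{k+1}=Z_k-\tau(M_{k+1}-S_{k+1}-L_{k+1})$ is the direct matrix analogue of $z_{k+1}=z_k-\tau(\theta_{k+1}-y_{k+1})$. First, I would write the KKT condition for the closed-form $M$-update,
\begin{equation*}
-Z_k \;=\; \nabla f(M_{k+1}) + \rho_x(M_{k+1}-M_k) + \rho\,(M_{k+1}-S_k-L_k),
\end{equation*}
exactly as in~\eqref{eq:dual}, where I use the inexact gradient at $M_{k+1}$ (the same convention the paper uses for the sparse case; the Lagrangian carries a $-\mathrm{Tr}(Z_k,M-S_k-L_k)$ term, so the sign pattern on $Z_k$ matches).

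Next, I would eliminate the primal residual $M_k-S_k-L_k$ via the dual update. Since $\tau=\rho$, the update $Z_k=Z_{k-1}-\rho(M_k-S_k-L_k)$ rearranges to $\rho(M_k-S_k-L_k)=Z_{k-1}-Z_k$, so splitting $M_{k+1}-S_k-L_k=(M_{k+1}-M_k)+(M_k-S_k-L_k)$ and substituting into the KKT identity collapses the telescoping $Z_k$ terms and yields
\begin{equation*}
-Z_{k-1} \;=\; \nabla f(M_{k+1}) + (\rho_x+\rho)\,(M_{k+1}-M_k) \;=\; \nabla f(M_{k+1}) + \rho_0\,(M_{k+1}-M_k),
\end{equation*}
which is the matrix analogue of the key identity in Lemma~\ref{lemma:dualsparse}. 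Taking vectorized $\ell_1$ norms and applying the triangle inequality gives
\begin{equation*}
\|Z_{k-1}\|_1 \;\le\; \|\nabla f(M_{k+1})\|_1 + \rho_0\,\|M_{k+1}-M_k\|_1.
\end{equation*}
The first term is bounded by $G$ via the (weaker-than-)local-Lipschitz condition already invoked in the sparse analysis. The second term is where the matrix setting requires extra care, and this is the step I expect to be the main obstacle.

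To bound $\|M_{k+1}-M_k\|_1$ by $2R_i$, I would rewrite $M_j=(S_j+L_j)+(Z_{j-1}-Z_j)/\rho$ using the dual update, and then invoke the primal constraints enforced by the algorithm: $\|S_{k+1}-S_k\|_1\le 2R_i$ from the $\ell_1$-ball projection in the $S$-update, $\|L_{k+1}-L_k\|_*\le 2\tilde R_i$ from the nuclear-norm projection in the $L$-update, and the trivial cases in the $\ell_1$-projection step (the complementary-slackness case-split used in Lemma~\ref{lemma:dualsparse}, which ensures the Lagrange multiplier for the radius constraint does not enter the dual identity). Combined with an inductive control on the dual residual $Z_{k-1}-Z_k$ along the epoch (which itself is a consequence of the telescoping identity just derived), this yields the claimed bound $\|Z_k\|_1 \le G+2\rho_0 R_i$, where $R_i$ here is understood to absorb the joint radius $\sqrt{R_i^2+\tilde R_i^2}$ used throughout the REASON 2 analysis. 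The hard part is genuinely this: $M$ has no explicit projection, so the only handle on $\|M_{k+1}-M_k\|_1$ is through the $S,L$ projections together with the fact that the primal residual is a telescoping dual difference, and one must close the argument without circularity. Once the bound is in place, the conclusion $\tfrac{1}{2\tau T_i}\|Z_{T_i}\|_F^2 \lesssim \tfrac{G R_i}{T_i}+\tfrac{R_i\sqrt{\log p}}{\sqrt{T_i}}$ plugs directly into the single-epoch analysis exactly as in the sparse case, validating the bounds used in Proposition~\ref{thm:prop1_latent}.
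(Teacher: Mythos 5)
Your plan is, at its core, the same as the paper's: the paper's entire proof of this lemma is the single sentence that one should repeat the argument of Lemma~\ref{lemma:dualsparse} with $\theta$ replaced by $M$ and $y$ replaced by $AW=S+L$, and the identity you derive, $-Z_{k-1}=\nabla f(M_{k+1})+(\rho_x+\rho)(M_{k+1}-M_k)$, is exactly the matrix analogue of the key identity there, obtained by the same cancellation of the primal residual through the dual update with $\tau=\rho$. Up to that point your proposal matches the paper step for step.

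Where you go beyond the paper is in flagging that the final step of the sparse argument --- bounding $\Vert\theta_{k+1}-\theta_k\Vert_1\le 2R_i$ by feasibility of both iterates in the $\ell_1$ ball of radius $R_i$ around $\tilde{\theta}_i$ --- has no automatic analogue here, because the $M$-update in REASON 2 is unconstrained. This is a real observation: the paper's one-line proof silently assumes $\Vert M_{k+1}-M_k\Vert_1\le 2R_i$ carries over and supplies no justification. Your proposed patch, writing $M_j=S_j+L_j+(Z_{j-1}-Z_j)/\rho$ and leaning on the $S$- and $L$-projections, reintroduces dual differences on the right-hand side and therefore, as you yourself note, needs an inductive argument to avoid circularity; you do not carry that induction out, so this step of your proof is a sketch rather than a proof. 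Two further small points: the $L$-projection controls $\Vert L_{k+1}-L_k\Vert_*$ rather than its vectorized $\ell_1$ norm, so even granting the induction you would need either a norm-conversion constant or to state the bound with the mixed radius $R_i+\tilde{R}_i$ (the paper absorbs this by setting $\tilde{R}_i=c_rR_i$ with $c_r\le 1$); and the bound $\Vert\nabla f(M_{k+1})\Vert\le G$ inherits the same norm looseness already present in the sparse case. In short: you reproduce the paper's argument faithfully, you correctly identify the one step the paper does not actually justify, and your repair of that step is plausible but not closed.
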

\begin{proof}
The proof follows the same line as in proof of Lemma~\ref{lemma:dualsparse} and replacing $\theta,y$ by $M,W$ where $W=[S;L]$. Hence,

\begin{align*}
\Vert Z_k \Vert_1 \leq G+2\rho_0 R_i, ~~\text{where}~~ \rho_0:= \rho_x+\rho.
\end{align*}
\end{proof}
\subsection{Proof of Proposition \ref{thm:prop1_latent}: Equation \eqref{eq:prop1aval_latent}}
In this section we bound the term $\frac{\sum_{k=1}^{T_i} \Tr( E_k,\hat{M}_i-M_k )}{T_i}$.
We have
\begin{align*}
M_k-\hat{M}_i=S_k-\hat{S}_i+L_k-\hat{L}_i+(Z_{k+1}-Z_k)/\tau.
\end{align*}
Hence,
\begin{align*}
&[\Tr( E_k,\hat{M}_i-M_k )]^2 \\&\leq [\Vert E_k \Vert_\infty \Vert S_k-\hat{S}_i \Vert_1+\Vert E_k \Vert_2^2 \Vert L_k-\hat{L}_i \Vert_* +
\Vert E_k \Vert_\infty \Vert(Z_{k+1}-Z_k)/\tau \Vert_1]^2 \\
& \leq [2R_i \Vert E_k \Vert_\infty+2\tilde{R}_i \Vert E_k \Vert_2+ (G+2\rho_0 R_i)/\tau \Vert E_k \Vert_\infty]^2
\\ & \leq \Vert E_k \Vert_2 ^2[2R_i+2\tilde{R}_i+(G+2\rho_0 R_i)/\tau]^2.
\end{align*}
Consider the term $\Vert E_k \Vert_2$. Using Assumption A$4$, our previous approach in proof of Equation \eqref{eq:prop1aval}, holds true with addition of a $\beta(p)$ term.
Consequently,
\begin{align*}
&f(\bar{M}(T_i))+\lambda_i \phi(\bar{W}(T_i))-f(\hat{M}_i)-\lambda_i \phi(\hat{W}(T_i)) \\ &\leq
\sqrt{\frac{R_i^2 +\tilde{R}_i^2}{T_i}}\sqrt{\log p}+\frac{R_i^2 +\tilde{R}_i^2}{T_i}\rho_x+\frac{ G\sqrt{R_i^2 +\tilde{R}_i^2}}{T_i}\\
&\quad+\frac{{\beta(p)}(R_i+\tilde{R}_i)\sigma_i\sqrt{12\log(3/\delta_i)}}{\sqrt{T_i}}+\frac{{\beta(p)}G (R_i+\tilde{R}_i)\sigma_i\sqrt{12\log(3/\delta_i)}}{T_i\sqrt{\log p}}.
\end{align*}
with probability at least $1-\delta_i$.

\subsection{Proof of Lemma \ref{lemma:A1_latent}}
We use Lemma 1~\citep{negahban2012unified} for designing $\lambda_i$ and $\mu_i$. This Lemma requires that for optimization problem $\underset{\Theta}{\min}\lbrace {L}(\Theta)+\lambda_i Q(\Theta)\rbrace$, we design the regularizer coefficient $\lambda_i \geq 2Q^*(\nabla {L}(\Theta^*))$, where $L$ is the loss function, $Q$ is the regularizer and $Q^*$ is the dual regularizer. For our case $\Theta$ stands for $[S;L]$.
\begin{align*}
{L}(\Theta)=\frac{1}{n}\sum_{k=1}^n {f}_k(\Theta,x),
\end{align*}
and
\begin{align*}
Q^*(\nabla {L}(\Theta^*))&=Q^*\left[\mathbb{E}(\nabla f(\Theta^*)+\frac{1}{n}\sum_{k=1}^n \lbrace \nabla f_k(\Theta^*))-\mathbb{E}(\nabla f(\Theta^*)) \rbrace\right]\\&=Q^*(\frac{1}{n}\sum_{k=1}^n E_k),
\end{align*}
where $E_k=g_k-\mathbb{E}(g_k)$ is the error in gradient estimation as defined earlier.
\\
Using Theorem 1~\citep{agarwal2012noisy} in this case, if we design
\begin{align}
\label{eq:reg_cond}
\lambda_i \geq 4\left\Vert \frac{1}{n}\sum_{k=1}^n E_k \right\Vert_\infty+\frac{4\gamma \alpha}{p}\quad \text{and} \quad
\mu_i \geq 4\left\Vert \frac{1}{n}\sum_{k=1}^n E_k \right\Vert_2,
\end{align}
then we have
\begin{align}
\label{eq:noisybatch}
\Vert \hat{S}_i-S^*\Vert_{\mathbb{F}}^2+\Vert \hat{L}_i-L^*\Vert_\mathbb{F}^2 \leq c\lbrace s\frac{\lambda_i^2}{\gamma^2}+r\frac{\mu_i^2}{\gamma^2} \rbrace.
\end{align}
\begin{lemma}
\label{lemma:lambda2}
Assume $X \in \mathbb{R}^{p\times p}$. If $\Vert X \Vert_2 \leq B$ almost surely then with probability at least $1-\delta$ we have
\begin{align*}
\left\Vert \frac{1}{n} \sum_{k=1}^n X_k - \mathbb{E}(X_k)\right\Vert_2 \leq \frac{6B}{\sqrt{n}}\left( \sqrt{\log p}+\sqrt{\log \frac{1}{\delta}}\right).
\end{align*}
\end{lemma}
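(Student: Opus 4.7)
The statement is a standard matrix concentration inequality for the spectral-norm deviation of an empirical mean from its expectation, so the plan is to reduce it to an off-the-shelf non-commutative tail bound and then reconcile the constants. First I would set $Y_k := X_k - \mathbb{E}[X_k]$, so that $Y_1,\dots,Y_n$ are i.i.d., mean zero, and satisfy $\|Y_k\|_2 \leq 2B$ almost surely by the triangle inequality. Since the $Y_k$ need not be square symmetric, I would use the standard Hermitian dilation trick (embed $Y_k$ into a $(2p)\times(2p)$ self-adjoint matrix with the same spectral norm) so that the tools phrased for Hermitian summands apply and the ambient dimension in the prefactor is $2p$.

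Next I would invoke the matrix Bernstein / Hoeffding inequality of Tropp (already cited in the bibliography as \citet{tropp2012user}). Under the almost-sure bound $\|Y_k\|_2 \leq 2B$, we have the variance proxy $\sigma^2 := \|\sum_k \mathbb{E}[Y_k Y_k^\top]\|_2 \vee \|\sum_k \mathbb{E}[Y_k^\top Y_k]\|_2 \leq 4nB^2$. Matrix Bernstein then gives, for every $t\geq 0$,
\begin{equation*}
\mathbb{P}\!\left[\,\Big\|\tfrac{1}{n}\sum_{k=1}^n Y_k\Big\|_2 \geq t\,\right] \leq 2p\,\exp\!\left(\frac{-\,n t^2/2}{4B^2 + 2Bt/3}\right).
\end{equation*}
Setting the right-hand side equal to $\delta$ and solving for $t$ gives, after keeping only the dominant Gaussian term, a bound of the form $t \leq c\,B\sqrt{\log(2p/\delta)/n}$ plus a lower-order Bernstein correction that is already absorbed for the regime of interest.

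Finally I would convert $\log(2p/\delta) = \log 2 + \log p + \log(1/\delta)$ into the stated form using the subadditivity $\sqrt{a+b}\leq \sqrt{a}+\sqrt{b}$, which yields
\begin{equation*}
\sqrt{\log(2p/\delta)} \leq \sqrt{\log 2} + \sqrt{\log p} + \sqrt{\log(1/\delta)}.
\end{equation*}
Absorbing the $\sqrt{\log 2}$ term and the doubled norm bound into the leading constant (and loosening the sharpest Bernstein constant to match) produces exactly the claimed prefactor $6B/\sqrt{n}$. The only real bookkeeping step is this constant chase; the main obstacle is simply picking a formulation of Tropp's bound whose numerical constants absorb cleanly into a single ``$6$,'' so I would use the Hoeffding version (which gives the $-t^2/(8nB^2)$ exponent directly and avoids the Bernstein correction term), and then conclude as above.
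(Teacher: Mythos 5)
Your proposal is correct and follows essentially the same route the paper intends: the paper gives no proof of this lemma beyond the remark that it ``is the matrix Hoeffding bound,'' i.e.\ an appeal to \citet{tropp2012user}, which is precisely the centering-plus-Hermitian-dilation-plus-Tropp argument you spell out in detail. The only caveat is that extracting the literal prefactor $6$ requires the same loose constant bookkeeping the paper itself tolerates (absorbing the $\sqrt{\log 2}$ term from $\log(2p/\delta)$ and the factor of $2$ from $\Vert X_k-\mathbb{E}X_k\Vert_2\leq 2B$), which you explicitly acknowledge.
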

Note that this lemma is matrix Hoeffding bound and provides a loose bound on matrix. Whereas using matrix Bernstein provided tighter results using $\mathbb{E}(E_k E_k^\top)$.
Moreover, since the elementwise max norm $\Vert X \Vert_\infty \leq  \Vert X \Vert_2$, we use the same upper bound for both norms.

By definition $ \mathbb{E}(E_k)=0 $. According to Assumption A4, $\Vert E_k \Vert_2 \leq \beta(p)\sigma$. Thus it suffices to design
\begin{align*}
&\lambda_i \geq  \frac{24\beta(p)\sigma_i}{\sqrt{T_i}}\left( \sqrt{\log p}+\sqrt{\log \frac{1}{\delta_i}}\right)+\frac{4\gamma \alpha}{p} \\
&\quad \text{and} \quad \\
&\mu_i \geq \frac{24\beta(p)\sigma_i}{\sqrt{T_i}}\left( \sqrt{\log p}+\sqrt{\log \frac{1}{\delta_i}}\right).
\end{align*}
Then, we can use Equation~\eqref{eq:noisybatch}.

\subsection{Proof of Lemma \ref{lemma:A2_latent}}
By LSC condition on $X=S+L$
\begin{align*}
&\frac{\gamma}{2} \Vert \hat{S}_i-\bar{S}(T_i)+\hat{L}_i-\bar{L}(T_i)\Vert_\mathbb{F}^2\\ \nonumber & \leq  f(\bar{X}(T_i))+{\lambda_i}\Vert \bar{S}(T_i) \Vert_1+{\mu_i}\Vert \bar{L}(T_i) \Vert_*-f(\hat{X}_i)-{\lambda_i}\Vert \hat{S}(T_i) \Vert_1-{\mu_i}\Vert \hat{L}(T_i) \Vert_*
\end{align*}
We want to use the following upper bound for the above term.
\begin{align*}
&f(\bar{M}(T_i))+\lambda_i \phi(\bar{X}(T_i))-f(\hat{M}_i)-\lambda_i \phi(\hat{X}(T_i))  \leq \\
&\sqrt{\frac{R_i^2 +\tilde{R}_i^2}{T_i}}\sqrt{\log p}+\frac{R_i^2 +\tilde{R}_i^2}{T_i}\rho_x+\frac{ G\sqrt{R_i^2 +\tilde{R}_i^2}}{T_i}\\
&+\frac{{\beta(p)}(R_i+\tilde{R}_i)\sigma_i\sqrt{12\log\frac{3}{\delta_i}}}{\sqrt{T_i}}+\frac{{\beta(p)}G (R_i+\tilde{R}_i)\sigma_i\sqrt{12\log\frac{3}{\delta_i}}}{{T_i}},
\end{align*}
$\hat{M}_i=\hat{X}_i$, i.e.,  all the terms are the same except for $f(\bar{M}(T_i)), f(\bar{X}(T_i))$. We have $\bar{M}(T_i)=\bar{X}(T_i)+\frac{Z_T}{\tau T_i}$. This is a bounded and small term $\mathcal{O}(R_i/(T_i\sqrt{T_i}))$. We accept this approximation giving the fact that this is a higher order term compared to $\mathcal{O}(1/\sqrt{T_i})$ . Hence, it will not play a role in the final bound on the convergence rate.
Therefore, 
\begin{align}
\label{eq:LSC}
&\frac{\gamma}{2} \Vert \hat{S}_i-\bar{S}(T_i)+\hat{L}_i-\bar{L}(T_i)\Vert_\mathbb{F}^2\\ \nonumber & \leq  
\sqrt{\frac{R_i^2 +\tilde{R}_i^2}{T_i}}\sqrt{\log p}+\frac{R_i^2 +\tilde{R}_i^2}{T_i}\rho_x+\frac{ G\sqrt{R_i^2 +\tilde{R}_i^2}}{T_i}\\
&\quad+\frac{{\beta(p)}(R_i+\tilde{R}_i)\sigma_i\sqrt{12\log\frac{3}{\delta_i}}}{\sqrt{T_i}}+\frac{{\beta(p)}G (R_i+\tilde{R}_i)\sigma_i\sqrt{12\log\frac{3}{\delta_i}}}{{T_i}\sqrt{\log p}}, \nonumber
\end{align}
with probability at least $1-\delta_i$.

For simplicity, we use
 \begin{align*}
 H_1&=\sqrt{\frac{R_i^2 +\tilde{R}_i^2}{T_i}}\sqrt{\log p}+\frac{R_i^2 +\tilde{R}_i^2}{T_i}\rho_x+\frac{ G\sqrt{R_i^2 +\tilde{R}_i^2}}{T_i}\\
 &\quad+\frac{{\beta(p)}(R_i+\tilde{R}_i)\sigma_i\sqrt{12\log\frac{3}{\delta_i}}}{\sqrt{T_i}}+\frac{{\beta(p)}G(R_i+\tilde{R}_i)\sigma_i\sqrt{12\log\frac{3}{\delta_i}}}{{T_i\sqrt{\log p}}}.
 \end{align*}
We have,
\begin{align*}
-\frac{\gamma}{2}\Tr(\hat{\Delta}_S \hat{\Delta}_L)=\frac{\gamma}{2}\lbrace \Vert \hat{\Delta}_S  \Vert_\mathbb{F}^2+\Vert \hat{\Delta}_L  \Vert_\mathbb{F}^2 \rbrace-\frac{\gamma}{2}\lbrace \Vert \hat{\Delta}_S +\hat{\Delta}_L  \Vert_\mathbb{F}^2 \rbrace,
\end{align*} In addition,
\begin{align*}
\gamma \Vert\Tr(\hat{\Delta}_S(T_i) \hat{\Delta}_L(T_i))| \leq \gamma \Vert \hat{\Delta}_S(T_i)  \Vert_1\Vert \hat{\Delta}_L(T_i)  \Vert_\infty .
\end{align*}
We have,
\begin{align*}
\Vert \hat{\Delta}_L(T_i)  \Vert_\infty &\leq \Vert \hat{L}_i \Vert_\infty+\Vert \bar{L}(T_i) \Vert_\infty \\ 
\end{align*}
\begin{align*}
\Vert \bar{L}(T_i) \Vert_\infty &\leq \Vert \bar{Y}(T_i)\Vert_\infty +\Vert \bar{L}(T_i)-\bar{Y}(T_i)\Vert_\infty \\ &\leq \Vert \bar{Y}(T_i)\Vert_\infty +\Vert \frac{\sum_{k=0}^{T_i-1} (L_k-Y_k)}{T_i}\Vert_\infty \\
&=\Vert \bar{Y}(T_i)\Vert_\infty +\Vert \frac{\sum_{k=0}^{T_i-1} (U_k-U_{k+1})}{\tau T_i}\Vert_\infty \\
&=\Vert \bar{Y}(T_i)\Vert_\infty +\Vert \frac{-U_{k+1}}{\tau T_i}\Vert_\infty\\
&\leq \frac{\alpha}{p}+\frac{\sqrt{p}}{\tau T_i}.
\end{align*}
In the last step we incorporated the constraint $\Vert Y\Vert_\infty \leq \frac{\alpha}{p}$, and the fact that $U_0=0$. Moreover, we used
\begin{align*}
\Vert U_{k+1} \Vert_\infty=\Vert \nabla \lbrace \Vert L\Vert_* \rbrace \Vert_\infty \leq \sqrt{\text{rank}(L)} \leq \sqrt{p}.
\end{align*}
Last step is from the analysis of~\citet{Watson1992}.
Therefore,
\begin{align*}
\gamma \Vert\Tr(\hat{\Delta}_S(T_i) \hat{\Delta}_L(T_i))| \leq \gamma(\frac{2\alpha}{p}+\frac{\sqrt{p}}{\tau T_i}) \Vert \hat{\Delta}_S(T_i)  \Vert_1 .
\end{align*}
Consequently,
\begin{align*}
\frac{\gamma}{2} \Vert \hat{\Delta}_S(T_i) +\hat{\Delta}_L(T_i)  \Vert_\mathbb{F}^2 \geq \frac{\gamma}{2}\lbrace \Vert \hat{\Delta}_S(T_i)  \Vert_\mathbb{F}^2+\Vert \hat{\Delta}_L(T_i)  \Vert_\mathbb{F}^2 \rbrace - \frac{\gamma}{2}(\frac{2\alpha}{p}+\frac{\sqrt{p}}{\tau T_i})\Vert \hat{\Delta}_S(T_i) \Vert_1.
\end{align*}
Combining the above equation with \eqref{eq:LSC}, we get
\begin{align*}
&\frac{\gamma}{2}\lbrace \Vert \hat{\Delta}_S(T_i)  \Vert_\mathbb{F}^2+\Vert \hat{\Delta}_L(T_i)  \Vert_\mathbb{F}^2 \rbrace -\frac{\gamma}{2}(\frac{2\alpha}{p}+\frac{\sqrt{p}}{\tau T_i})\Vert \hat{\Delta}_S(T_i) \Vert_1\leq H_1.
\end{align*}
Using  $\Vert S \Vert_1 \leq \sqrt{p} \Vert S \Vert_\mathbb{F}$, 
\begin{align*}
& \Vert \hat{\Delta}_S(T_i)  \Vert_\mathbb{F}^2+\Vert \hat{\Delta}_L(T_i) \Vert_\mathbb{F}^2 \\ &\leq \frac{2}{\gamma}\lbrace\sqrt{\frac{R_i^2 +\tilde{R}_i^2}{T_i}}\sqrt{\log p}+\frac{R_i^2 +\tilde{R}_i^2}{T_i}\rho_x+\frac{ G\sqrt{R_i^2 +\tilde{R}_i^2}}{T_i}\\
&\quad+\frac{{\beta(p)}(R_i+\tilde{R}_i)\sigma_i\sqrt{12\log(3/\delta_i)}}{\sqrt{T_i}}\\
&\quad+\frac{{\beta(p)}G (R_i+\tilde{R}_i)\sigma_i\sqrt{12\log(3/\delta_i)}}{T_i \sqrt{\log p}}\rbrace+(\frac{2\alpha}{\sqrt{p}}+\frac{{p}}{\tau T_i})^2,
 \end{align*}
with probability at least $1-\delta_i$.
\subsection{Proof of Proposition \ref{thm:prop1_latent}: Equation \eqref{eq:prop1dovom_latent}}
 Now we want to convert the error bound in \eqref{eq:prop1aval_latent} from function values into vectorized $\ell_1$ and Frobenius-norm bounds.
 Since the error bound in~\eqref{eq:prop1aval_latent} holds for the minimizer $\hat{M}_i$, it also holds for any other feasible matrix. In particular, applying it to $M^*$ leads to,
\begin{align*}
 & f(\bar{M}(T_i))-f({M}^*)+\lambda_i \phi(\bar{W}(T_i))-\lambda_i \phi(W^*) \\ &\leq
\sqrt{\frac{R_i^2 +\tilde{R}_i^2}{T_i}}\sqrt{\log p}+\frac{R_i^2 +\tilde{R}_i^2}{T_i}\rho_x+\frac{ G\sqrt{R_i^2 +\tilde{R}_i^2}}{T_i}\\
&\quad+\frac{{\beta(p)}(R_i+\tilde{R}_i)\sigma_i\sqrt{12\log(3/\delta_i)}}{\sqrt{T_i}}+\frac{{\beta(p)}G (R_i+\tilde{R}_i)\sigma_i\sqrt{12\log(3/\delta_i)}}{T_i \sqrt{\log p}},
\end{align*}
with probability at least $1-\delta_i$.

For the next step, we find a lower bound on the left hand side of this inequality.
\begin{align*}
 &f(\bar{M}(T_i))-f(M^*)+\lambda_i \phi(\bar{W}(T_i))-\lambda_i \phi(W^*)  \geq \\
 &f(M^*)-f(M^*)+\lambda_i \phi(\bar{W}(T_i))-\lambda_i \phi(W^*)  = \\
 & \lambda_i \phi(\bar{W}(T_i))-\lambda_i \phi(W^*) ,
\end{align*}
where the first inequality results from the fact that $M^*$ optimizes $M$.

From here onward all equations hold with probability at least $1-\delta_i$. We have
\begin{align} \label{eq:p1b1}
\phi(\bar{W}(T_i))- \phi(W^*)  \leq {H_1}/\lambda_i.
\end{align}
i.e.
\begin{align*}
\Vert \bar{S}(T_i)\Vert_1+\frac{\mu_i}{\lambda_i} \Vert \bar{L}(T_i) \Vert_* \leq \Vert S^*\Vert_1+\frac{\mu_i}{\lambda_i} \Vert L^* \Vert_*+H_1/\lambda_i
\end{align*}
Using $\bar{S}(T_i)=\Delta^*_S+S^*$, $\bar{L}(T_i)=\Delta^*_L+L^*$. We split $\Delta^*_S$ into its on-support and off-support part. We also divide $\Delta^*_L$ into its projection onto $V$ and $V^{\perp}$. $V$ is range of $L^*$. Meaning $\forall X \in V, \Vert X\Vert_* \leq r$. Therefore,
\begin{align*}
\Vert (\bar{S}(T_i))_\supp\Vert_1 &\geq \Vert (S^*)_\supp\Vert_1-\Vert (\Delta^*_S)_\supp\Vert_1 \\
\Vert (\bar{S}(T_i))_{\supp^c}\Vert_1 &\geq -\Vert (S^*)_{\supp^c}\Vert_1+\Vert (\Delta^*_S)_{\supp^c}\Vert_1,
\end{align*}
and
\begin{align*}
\Vert (\bar{L}(T_i))_V\Vert_* &\geq \Vert (L^*)_V\Vert_*-\Vert (\Delta^*_L)_V \Vert_* \\
\Vert (\bar{L}(T_i))_{V^{\perp}}\Vert_* &\geq -\Vert (L^*)_{V^{\perp}}\Vert_*+\Vert (\Delta^*_L)_{V^{\perp}}\Vert_*.
\end{align*}
Consequently,
\begin{align}
\label{eq:supp_split}
&\Vert (\Delta^*_S)_{\supp^c}\Vert_1+\frac{\mu_i}{\lambda_i}\Vert (\Delta^*_L)_{V^{\perp}}\Vert_* \leq \Vert (\Delta^*_S)_\supp\Vert_1+\frac{\mu_i}{\lambda_i}\Vert (\Delta^*_L)_V \Vert_*+H_1/\lambda_i.
\end{align}
$\Delta_S^*(T_i)-\hat{\Delta}_S(T_i)=\hat{S}_i-S^*$.
Therefore,
 \begin{align*}
 &\Vert \hat{S}_i-S^* \Vert_1=\\ &\Vert (\Delta^*_S(T_i))_\supp-(\hat{\Delta}_S(T_i))_\supp \Vert_1+ \Vert (\Delta^*_{S}(T_i))_{\supp^c}-(\hat{\Delta}_{S}(T_i))_{\supp^c} \Vert_1 \geq\\
 & \left\lbrace \Vert (\Delta^*_S(T_i))_\supp\Vert_1-\Vert (\hat{\Delta}_S(T_i))_\supp \Vert_1\right\rbrace-\left\lbrace \Vert (\Delta^*_{S}(T_i))_{\supp^c}\Vert_1-\Vert (\hat{\Delta}_{S}(T_i))_{\supp^c} \Vert_1\right\rbrace.
 \end{align*}
Hence,
 \begin{align*}
&\Vert (\hat{\Delta}_{S}(T_i))_{\supp^c}\Vert_1-\Vert (\hat{\Delta}_S(T_i))_\supp\Vert_1 \\ &\leq
\Vert (\Delta^*_{S}(T_i))_{\supp^c}\Vert_1-\Vert (\Delta^*_S(T_i))_\supp\Vert_1+\Vert \hat{S}_i-S^*\Vert_1.
 \end{align*}
As Equation \eqref{eq:reg_cond} is satisfied, we can use Lemma 1~\citep{negahban2012unified}. Combining the result with Lemma \ref{lemma:A1_latent}, we have $\Vert \hat{S}_i-S^*\Vert_1^2 \leq (4s+3r)(s\frac{\lambda_i^2}{\gamma^2}+r\frac{\mu_i^2}{\gamma^2} )$.
Consequently, further use of Lemma \ref{lemma:A1_latent} and the inequality $\Vert (\hat{\Delta}_{S}(T_i))_\supp \Vert_1 \leq \sqrt{s}\Vert \hat{\Delta}(T_i) \Vert_\mathbb{F}$ allows us to conclude that there exists a universal constant $c$ such that
\begin{align*}
\Vert \hat{\Delta}_S(T_i) \Vert_1^2
&\leq  4s\Vert \hat{\Delta}_S(T_i) \Vert_\mathbb{F}^2+(H_1/\lambda_i)^2+c(s+r)(s\frac{\lambda_i^2}{\gamma^2}+r\frac{\mu_i^2}{\gamma^2} ) \\&\quad+cr\frac{\mu_i^2}{\lambda_i^2}\left[\frac{2}{\gamma}H_1+(\frac{\alpha}{\sqrt{p}}+\frac{{p}}{\tau T_i})^2+s\frac{\lambda_i^2}{\gamma^2}+r\frac{\mu_i^2}{\gamma^2}\right]\\
&\leq 4s\left[\frac{2}{\gamma}H_1+(\frac{\alpha}{\sqrt{p}}+\frac{{p}}{\tau T_i})^2\right]+(H_1/\lambda_i)^2+c(s+r)(s\frac{\lambda_i^2}{\gamma^2}+r\frac{\mu_i^2}{\gamma^2} )\\ 
&\quad+cr\frac{\mu_i^2}{\lambda_i^2}\left[\frac{2}{\gamma}H_1+(\frac{\alpha}{\sqrt{p}}+\frac{{p}}{\tau T_i})^2 +s\frac{\lambda_i^2}{\gamma^2}+r\frac{\mu_i^2}{\gamma^2}\right],
\end{align*}
with probability at least $1-\delta_i$.
Optimizing the above bound with choice of $\lambda_i$ and complying with the conditions in Lemma \ref{lemma:lambda2}, leads to
\begin{align*}
\lambda_i^2=\frac{\gamma}{s+r}H_1+\frac{\alpha^2}{p^2}+\frac{\beta^2(p)\sigma^2}{T_i}\left(\log p+\log\frac{1}{\delta}\right).
\end{align*}
Repeating the same calculations for $\Vert \hat{\Delta}_L(T_i) \Vert_*$ results in
\begin{align*}
\mu_i^2=c_\mu \lambda_i^2,
\end{align*}
we have
\begin{align*}
\Vert \hat{\Delta}_S(T_i) \Vert_1^2 \leq c(s+r+\frac{s+r}{\gamma})H_1+c(s+r)(1+\frac{s+r}{p\gamma^2})\frac{\alpha^2}{p}+(s+r)(\frac{p^2}{\tau T_i^2}+\frac{\alpha}{\tau T_i}).
\end{align*}
Therefore,
\begin{align}
\label{eq:A68_l}
\Vert {\Delta_S^*}(T_i) \Vert_1^2  &\leq 2\Vert \hat{\Delta}_S(T_i) \Vert_1^2 +2\Vert S^*- \hat{S}_i\Vert_1^2 \\\nonumber  &\leq 2\Vert \hat{\Delta}(T_i) \Vert_1^2+8c(s+r)(s\frac{\lambda_i^2}{\gamma^2}+r\frac{\mu_i^2}{\gamma^2} ) \\ \nonumber
& \leq c(s+r+\frac{s+r}{\gamma})H_1+c(s+r)(1+\frac{s+r}{p\gamma^2})\frac{\alpha^2}{p}
+(s+r)(\frac{p^2}{\tau T_i^2}+\frac{\alpha}{\tau T_i}).
\end{align}
Finally, in order to use $\bar{S}(T_i)$ as the next prox center $\tilde{S}_{i+1}$, we would also like to control the error $\Vert \bar{S}(T_i)-\hat{S}_{i+1}\Vert_1^2$. Without loss of generality, we can design $\tilde{R}_i=c_r R_i$ for any $0\leq c_r  \leq 1$. The result only changes in a constant factor. Hence, we use $\tilde{R}_i=R_i$. Since $\lambda_{i+1}\leq \lambda_i$ by assumption, we obtain the same form of error bound as in~\eqref{eq:A68_l}. We want to run the epoch till all these error terms drop to $R_{i+1}^2 := R_i^2/2$. It suffices to set the epoch length $T_i$ to ensure that
 sum of all terms in~\eqref{eq:A68_l} is not greater that $R_i^2/2$.
All above conditions are met if we choose the epoch length
\begin{align*}
T_i& \simeq C  (s+r+\frac{s+r}{\gamma})^2\left[ \frac{\log p+12\beta^2(p)\sigma_i^2\log\frac{6}{\delta}}{R_i^2}\right]\\
&~~~+C(s+r+\frac{s+r}{\gamma})\!\left[\!\frac{{\beta(p)} G \sigma_i\sqrt{12\log\frac{6}{\delta}}}{R_i \sqrt{\log p}} \!+\!\frac{G}{R_i}+\rho_x\!\right],
\end{align*}
for a suitably large universal constant $C$. Then, we have that

\begin{align*}
\Vert {\Delta_S^*}(T_i) \Vert_1^2 \leq \frac{c'}{\sqrt{C}}R_i^2+c(s+r)(1+\frac{s+r}{p\gamma^2})\frac{\alpha^2}{p}.
\end{align*}
 Since the second part of the upper bound does not shrink in time, we stop where two parts are equal. Namely, $R_i^2=c(s+r)(1+\frac{s+r}{p\gamma^2})\frac{\alpha^2}{p}$.

With similar analysis for $L$, we get
\begin{align*}
\Vert {\Delta_L^*}(T_i) \Vert_*^2 \leq \frac{c'}{\sqrt{C}}\frac{1}{1+\gamma}R_i^2+c\frac{(s+r)^2}{p\gamma^2}\frac{\alpha^2}{p}.
\end{align*}
\subsection{Proof of Guarantees with Fixed Epoch Length, Sparse + Low Rank Case}\label{sec:const_matrix_proof}
This is a special case of Theorem~\ref{thm:latent_optimal} (Appendix).
Note that this fixed epoch length results in a convergence rate that is worse by a factor of $\log p$. The key difference between this case and optimal epoch length setting of Theorem~\ref{thm:latent_optimal} is that in the latter we guaranteed error halving by the end of each epoch whereas with fixed epoch length that statement may not be possible after the number of epochs becomes large enough. Therefore, we need to show that in such case the error does not increase much to invalidate our analysis. Let $k^*$ be the epoch number such that error halving holds true until then. Next we demonstrate that error does not increase much for $k>k^*$.
The proof follows the same nature as that of Theorem~\ref{thm:sparse_const_epoch} (in the main text), Section ~\ref{sec:const_sparse_proof}, with
\begin{align*}
k^*\!:=\!\sup\! \left\lbrace \! i:2^{\frac{j}{2}+1}\! \leq\! \frac{cR_1\gamma}{s+r}\sqrt{\frac{T_0}{\log p+\beta^2(p)\sigma_i^2w^2}} \right\rbrace,
\end{align*}
for all epochs  $j \leq i $ and
\begin{align*}
k_0=\log \left(\frac{R_1 \gamma}{s+r}\sqrt{\frac{T}{\log p+\beta^2(p)\sigma^2w^2}}\right).
\end{align*}

\subsection{Proof of Guarantees for Sparse + Low Rank Graphical Model selection Problem} \label{sec:latent_graph_proof}
Here we prove Corollary 2.  Proof follows by using the bounds derived in Appendix~\ref{sec:sparse_graph_proof} for Taylor series expansion and following the lines of Theorem~\ref{thm:latent_optimal} proof as in Appendix~\ref{sec:theorem2proof}.

According to D.1, in order to prove guarantees, we first need to bound  $\Vert z_{k+1}-z_k \Vert_1$ and $\Vert z_k \Vert_\infty$. According to Equation~\eqref{eq:dual} and considering the imposed $\ell_1$ bound, this is equivalent to bound $\Vert g_{k+1}-g_k \Vert_1$ and $\Vert g_k \Vert_\infty$.$\Vert g_{k+1}-g_k \Vert_1$ and $\Vert g_k \Vert_\infty$. The rest of the proof follows on lines of Theorem 2 proof. On the other hand, Lipschitz property requires a bound on $\| g_k\|_1$, which is much more stringent.

Assuming we are in a close proximity of $M^*$, we can use Taylor approximation to locally approximate $M^{-1}$ by ${M^*}^{-1} $ as in~\citep{Ravikumar&etal:08Arxiv}
\begin{align*}
M^{-1}={M^*}^{-1}-{M^*}^{-1}\Delta{M^*}^{-1}+\mathcal{R}(\Delta),
\end{align*}
where $\Delta=M-M^*$ and $\mathcal{R}(\Delta)$ is the remainder term.
We have
\begin{align*}
\Vert g_{k+1}-g_k \Vert_1 \leq \gennorm {\Gamma^*}_\infty \Vert M_{k+1}-M_k\Vert_1,
\end{align*}
and
\begin{align*}
\Vert g_k \Vert_\infty&\leq \Vert g_k-\mathbb{E}(g_k)\Vert_\infty+\Vert \mathbb{E}(g_k) \Vert_\infty  \\
&\leq \Vert e_k\Vert_\infty+\Vert \Sigma^*-M_k^{-1}\Vert_\infty \\ &\leq \sigma+\Vert \Gamma^* \Vert_\infty \Vert M_{k+1}-M_k\Vert_1.
\end{align*}
The term $\Vert M_{k+1}-M_k\Vert_1$ is bounded by $2\breve{R}$ by construction. We assume $\gennorm{\Gamma^*}_\infty$ and $\Vert \Gamma^*\Vert_\infty$ are bounded.

The error $\Delta$ needs to be ``small enough'' for the   $\mathcal{R}(\Delta)$ to be negligible, and we now provide the conditions for this.
By definition, $\mathcal{R}(\Delta)=\sum_{k=2}^{\infty} (-1)^k ({M^*}^{-1}\Delta)^k {M^*}^{-1}$.
Using triangle inequality and sub-multiplicative property for Frobenious norm,
\begin{align*}
\Vert \mathcal{R}(\Delta)\Vert_\mathbb{F} \leq \frac{\Vert {M^*}^{-1}\Vert_\mathbb{F}\Vert\Delta {M^*}^{-1}\Vert_\mathbb{F}^2}{1-\Vert\Delta {M^*}^{-1}\Vert_\mathbb{F}}.
\end{align*}
For $\|\Delta\|_{\mathbb{F}}\leq 2\breve{R} \leq  \frac{0.5}{\Vert {M^*}^{-1}\Vert_\mathbb{F}} $, we get
\begin{align*}
\Vert \mathcal{R}(\Delta)\Vert_\mathbb{F} \leq \Vert {M^*}^{-1}\Vert_\mathbb{F} .
\end{align*}
We assume  $\Vert \Sigma^* \Vert_\mathbb{F}$ is bounded.

Therefore, if the variables are closely-related we need to start with a small $\breve{R}$. For weaker correlations, we can start in a bigger ball.
The rest of the proof follows the lines of proof for Theorem~\ref{thm:latent_optimal}, by replacing $G^2$ by  $\gennorm{\Gamma^*}_\infty  \breve{R}(\sigma+\Vert \Gamma^*\Vert_\infty  \breve{R})$.

\section{Implementation} \label{sec:Implementation}
Here we discuss the updates for REASON 1 and REASON 2.
Note that for any vector $v$, $v_{(j)}$ denotes the $j$-th entry.
\subsection{Implementation details for REASON 1} \label{sec:implement1}
Let us start with REASON 1. We have already provided closed form solution for $y$ and $z$. The update rule for $\theta$ can be written as
\begin{align}
\label{eq:project}
&\underset{w}{\min}~~\Vert w-v \Vert_2^2~~ \st~~ \Vert w \Vert_1 \leq R,\\ \nonumber
&w=\theta-\tilde{\theta}_i,\\ \nonumber
&R=R_i,\\ \nonumber
&v=\frac{1}{\rho+\rho_x}[y_k-\tilde{\theta}_i-\frac{f(\theta_k)}{\rho}+\frac{z_k}{\rho}+\frac{\rho_x}{\rho}(\theta_k-\tilde{\theta}_i)].
\end{align}
We note that if $\Vert v\Vert_1 \leq R$, the answer is $w=v$. Else, the optimal solution is on the boundary of the constraint set and we can replace the inequality constraint with $\Vert w \Vert_1 = R$. Similar to~\citep{duchi2008efficient}, we perform Algorithm~\ref{l1_project} for solving~\eqref{eq:project}. The complexity of this Algorithm is $\mathcal{O}(d \log d)$, $d=p^2$.
\begin{algorithm}[t]
\caption{Implementation of $\theta$-update}
\label{l1_project}
\begin{algorithmic}
\STATE \textbf{Input:} A vector $v=\frac{1}{\rho+\rho_x}[y_k-\tilde{\theta}_i-\frac{\nabla f(\theta_k)}{\rho}+\frac{z_k}{\rho}+\frac{\rho_x}{\rho}(\theta_k-\tilde{\theta}_i)]$ and a scalar $R=R_i>0$
\IF {$\Vert v\Vert_1 \leq R$,}
{\STATE Output: $\theta=v+\tilde{\theta}_i$}
\ELSE
{
\STATE Sort $v$ into $\mu$: $\mu_1 \geq \mu_2 \geq \cdots \geq \mu_d$.
\STATE Find $\kappa=\max \lbrace j \in [d]: \mu_j-\frac{1}{j}\big(\sum_{i=1}^j \mu_i-R \big) > 0 \rbrace.$
\STATE Define $\zeta=\frac{1}{\kappa}\big(\sum_{i=1}^\kappa\mu_i-R \big)$
\STATE Output: $\theta$, where $\theta_{(j)}=\sign(v_{(j)})\max \lbrace v_{(j)}-\zeta,0\rbrace+(\tilde{\theta_i})_{(j)}$}
\ENDIF
\end{algorithmic}
\end{algorithm}


%
\subsection{Implementation details for REASON 2} \label{sec:implement2}
For REASON 2, the update rule for $M$, $Z$, $Y$ and $U$ are in closed form.
%
%
\begin{algorithm}[t]
\caption{Implementation of $S$-update}
\label{S_project}
\begin{algorithmic}
\STATE \textbf{Input:} $W^{(1)}=\text{vector}(S_k-\tilde{S}_i)$
and a scalar $R=R_i>0$\\
\FOR {$t=1$ to $t=t_s$}
{\STATE $v=W^{(t)}-\eta_t \left[\lambda_i \nabla^{(t)} \Vert W^{(t)}+\text{vector}({\tilde{S}_i})\Vert_1+\frac{\rho}{\tau_k}\left(W^{(t)}-\text{vector}(S_k+\tau_kG_{M_k}-\tilde{S}_i)\right)\right] $
\IF {$\Vert v\Vert_1 \leq R$,}
{\STATE $W^{(t+1)}=v$}
\ELSE
{
\STATE Sort $v$ into $\mu$: $\mu_1 \geq \mu_2 \geq \cdots \geq \mu_d$.
\STATE Find $\kappa=\max \lbrace j \in [d]: \mu_j-\frac{1}{j}\big(\sum_{i=1}^j \mu_i-R \big) > 0 \rbrace.$
\STATE Define $\zeta=\frac{1}{\kappa}\big(\sum_{i=1}^\kappa\mu_i-R \big)$
\STATE For $1 \leq j \leq d$, $W^{(t+1)}_{(j)}=\sign(v_{(j)})\max \lbrace v_{(j)}-\zeta,0\rbrace$
}
\ENDIF
}\ENDFOR
\STATE \textbf{Output}:$\text{matrix}(W^{(t_s)})+\tilde{S}_i$
\end{algorithmic}
\end{algorithm}
Consider the $S$-update. It can be written in form of~\eqref{eq:project} with
\begin{align*}
\underset{W}{\min}~~\lambda_i\Vert W+\tilde{S}_i\Vert_1+\frac{\rho}{2\tau_k}\Vert W-(S_k+\tau_kG_{M_k}-\tilde{S}_i)\Vert_\mathbb{F}^2.~~ \st~~ \Vert W \Vert_1 \leq R,
\end{align*}
\begin{align*}
W=S-\tilde{S}_i, \quad R=R_i.
\end{align*}
 Therefore, similar to~\citep{duchi2008efficient}, we generate a sequence of $\lbrace W^{(t)}\rbrace_{t=1}^{t_s} $ via
\begin{align*}
W^{(t+1)}=\Pi_1 \left[W^{(t)}-\eta_t \nabla^{(t)}\left(\lambda_i\Vert W+\tilde{S}_i\Vert_1+\frac{\rho}{2\tau_k}\Vert W-(S_k+\tau_kG_{M_k}-\tilde{S}_i)\Vert_\mathbb{F}^2\right)\right],
\end{align*}
where $\Pi_1$ is projection on to $\ell_1$ norm,
similar to Algorithm~\ref{l1_project}. In other words, at each iteration, $\text{vector}\left(W^{(t)}-\eta_t \left[\lambda_i \nabla^{(t)} \Vert W^{(t)}+\tilde{S}_i \Vert_1+\frac{\rho}{\tau_k}(W^{(t)}-(S_k+\tau_kG_{M_k}-\tilde{S}_i))\right]\right)$ is the input to Algorithm~\ref{l1_project} (instead of vector $v$) and the output is $ \text{vector}(W^{(t+1)}) $. The term $\nabla^{(t)} \Vert W^{(t)}+\tilde{S}_i \Vert_1$ stands for subgradient of the $\ell_1$ norm $\Vert W^{(t)}+\tilde{S}_i \Vert_1$. The $S$-update is summarized is Algorithm~\ref{S_project}. A step size of $\eta_t \propto 1/\sqrt{t}$ guarantees a convergence rate of $\mathcal{O}(\sqrt{\log p/T})$~\citep{duchi2008efficient}.

The $L$-update is very similar in nature to the $S$-update. The only difference is that the projection is on to nuclear norm instead of $\ell_1$ norm. It can be done by performing an SVD before the $\ell_1$ norm projection.  

The code for REASON 1 follows directly from the discussion in Section~\ref{sec:implement1}. For REASON 2 on the other hand, we have added additional heuristic modifications to improve the performance.  REASON 2 code is available at {\url{https://github.com/haniesedghi/REASON2}}.
The first modification is that we do not update the dual variable $Z$ per every iteration on $S$ and $L$. Instead, we update the dual variable once $S$ and $L$ seem to have converged to some value or after every $m$ iterations on $S$ and $L$. The reason is that once we start the iteration, $S$ and $L$ can be far from each other which results in a big dual variable and hence, a slower convergence. The value of $m$ can be set based on the problem. For the experiments discussed in the paper we have used $m=4$.

Further investigation on REASON 2 shows that performing one of the projections (either $\ell_1$ or nuclear norm) suffices to reach this performance. The same precision can be reached using only one of the projections. Addition of the second projection improves the performance only marginally.  Performing nuclear norm projections are much more expensive since they require SVD. Therefore, it is more efficient to perform the $\ell_1$ projection. In the code, we leave it as an option to run both projections.

\end{document}